\newtheorem{theorem} {Theorem}
\newtheorem{lemma} {Lemma}
\def\x{{\mathbf{x}}}
\def\v{{\mathbf{v}}}
\def\z{{\mathbf{z}}}
\def\w{{\mathbf{w}}}
\def\y{{\mathbf{y}}}
\def\X{{\mathbf{X}}}
\def\Y{{\mathbf{Y}}}
\def\A{{\mathbf{A}}}
\def\M{{\mathbf{M}}}
\def\I{{\mathbf{I}}}
\def\B{{\mathbf{B}}}
\def\C{{\mathbf{C}}}
\def\V{{\mathbf{V}}}
\def\W{{\mathbf{W}}}
\def\U{{\mathbf{U}}}
\def\P{{\mathbf{P}}}
\def\bE{{\mathbf{E}}}
\newcommand{\gap}{\delta}
\newcommand{\fprob}{p}
\newcommand{\mD}{\mathcal{D}}
\newcommand{\E}{\mathbb{E}}
\newcommand{\trace}{\textrm{Tr}}
\newcommand{\reals}{\mathbb{R}}
\newcommand{\sign}{\mathrm{sign}}
\newcommand{\be}{\mathbf{e}}
\newcommand{\bx}{\mathbf{x}}
\newcommand{\bw}{\mathbf{w}}
\newcommand{\bu}{\mathbf{u}}
\newcommand{\bv}{\mathbf{v}}
\newcommand{\Ocal}{\mathcal{O}}
\newcommand{\norm}[1]{\|#1\|}
\newcommand{\inner}[1]{\langle#1\rangle}
\newcommand{\lemref}[1]{Lemma~\ref{#1}}
\newcommand{\thmref}[1]{Thm.~\ref{#1}}
\title{Communication-efficient Algorithms for \\Distributed Stochastic Principal Component Analysis}
\date{}
\author{Dan Garber \\Technion - Israel Institute of Technology \\ \texttt{dangar@technion.ac.il}
\and
Ohad Shamir \\ Weizmann Institute of Science \\ \texttt{ohad.shamir@weizmann.ac.il}
\and
Nathan Srebro \\ Toyotoa Technological Institute at Chicago \\ \texttt{nati@ttic.edu}
}
\begin{document}

 \maketitle
 
\begin{abstract}
We study the fundamental problem of \textit{Principal Component Analysis} in a 
statistical distributed setting in which each machine out of $m$ stores a 
sample of $n$ points sampled i.i.d. from a single unknown distribution. We 
study algorithms for estimating the leading principal component of the 
population covariance matrix that are both communication-efficient and achieve 
estimation error of the order of the centralized ERM solution that uses all 
$mn$ samples. On the negative side, we show that in contrast to results 
obtained for distributed estimation under convexity assumptions, for the PCA 
objective, simply averaging the local ERM solutions cannot guarantee error that 
is consistent with the centralized ERM. We show that this unfortunate phenomena 
can be remedied by performing a simple correction step which correlates between 
the individual solutions, and provides an estimator that is consistent with the 
centralized ERM for sufficiently-large $n$. 
We also introduce an iterative distributed algorithm that is applicable in any regime of $n$, which is based on distributed matrix-vector products. The algorithm gives significant acceleration in terms of communication rounds over previous distributed algorithms, in a wide regime of parameters.
 \end{abstract} 

\section{Introduction}
Principal Component Analysis (PCA) \cite{Pearson, Hotelling, Jolliffe2002} is one of the most celebrated and popular techniques in data analysis and machine learning.
For data that consists of $N$ vectors in $\mathbb{R}^d$, $\x_1,...,\x_N$, with normalized covariance matrix $\hat{\X} = \frac{1}{N}\sum_{i=1}^N\x_i\x_i^{\top}$, The PCA method finds the $k$-dimensional subspace (which corresponds to the span of the top $k$ principal components) such that the projection of the data onto the subspace has largest variance, i.e., it is the solution to the optimization problem:
\begin{equation}\label{eq:intro:PCA}
\max_{\W \in \reals^{d \times k} \ , \W^T \W = \I} \| \hat{\X}\W \|_F^2 .
\end{equation}
PCA is often considered in a statistical setting in which the assumption is that the input vectors are not arbitrary but sampled i.i.d. from some fixed but unknown distribution with certain general characteristics $\mD$. Then, it is often of interest to use the observed sample to estimate the top $k$ principal components of the population covariance matrix, rather then that of the sample, which leads to the modified optimization problem:
\begin{equation}\label{eq:intro:statPCA}
\max_{\W \in \reals^{d \times k} \ , \W^T \W = \I}\|  \E_{\x\sim\mD}\left[{\x\x^{\top}}\right]\W \|_F^2.
\end{equation}
Of course the empirical estimation problem \eqref{eq:intro:PCA} and the 
population estimation problem \eqref{eq:intro:statPCA} are well connected, and 
it is well-known that under mild assumptions on the distribution $\mD$ and 
given a sufficiently large sample, we can guarantee small estimation error in 
\eqref{eq:intro:statPCA} by solving optimization problem \eqref{eq:intro:PCA}.

In this work we consider the problem of estimating the first principal component (i.e., $k=1$) in a statistical and distributed setting. We assume the availability of $m$ machines, each of which stores a sample of $n$ vectors sampled i.i.d from a fixed distribution $\mD$ over $\reals^d$, and we are interested in algorithms that can be applied efficiently to solve Problem \eqref{eq:intro:statPCA} for $k=1$, with estimation error that approaches that of a \textit{centralized} algorithm, which has access to all $mn$ samples and does not pay for communication between machines. Indeed, when considering the efficiency of algorithms, we will mainly focus on the amount of communication between machines they require, since this is often the most expensive resource in distributed computing. We note that the i.i.d. assumption is standard in many applications of PCA, and can be leveraged to get more efficient algorithms than when the data partition is arbitrary. Also, we will make a standard assumption that the population covariance matrix has a non-zero additive gap between the first and second eigenvalues, which makes the problem of estimating the leading principal component meaningful.

A main challenge that often arises in many computational settings of principal 
components is that it leads to inherently non-convex optimization problems. 
While many times these problems turn out to admit efficient algorithms, 
the rich toolbox of optimization and statistical estimation procedures 
developed for \textit{convex} problems often cannot be directly 
applied to problems such as \eqref{eq:intro:PCA} and \eqref{eq:intro:statPCA}. 
Instead, 
one often needs to consider a specialized and more involved analysis, to get 
analogous convergence results for the PCA problem. This for instance was the 
case in a recent wave of results that applied concepts such as stochastic 
gradient updates \cite{Balsubramani13, Shamir16a, jain2016matching, ZhuL16c} 
and variance reduction \cite{Shamir15, shamir2016fast, Garber15, Garber16, 
ZhuL16} to the PCA problem. This is also the case in our distributed setting. 
For instance, \cite{Zhang13} proposed communication-efficient algorithms for a 
distributed statistical estimation settings, similar to ours, but under 
convexity assumptions. The authors show that under their assumptions, in a wide 
regime of parameters (namely when the per-machine sample size $n $ is large 
enough), then a simple averaging of the \textit{empirical risk minimizers} 
(ERM), computed locally on each machine, leads to estimation error of the 
population parameters of the order the centralized  ERM solution. While 
averaging makes perfect sense in a convex setting, it is clear that it can 
completely fail in a non-convex setting. Indeed, we show that already for the 
PCA problem with $k=1$, simply averaging the local ERM solutions (and 
normalizing to obtain a unit vector as required), cannot improve significantly 
over the estimation error of any single machine. We then show that a simple fix 
to the above scheme, namely correlating the directions of individual ERM 
solutions, remedies this phenomena and results in estimation error 
similar to that of the centralized ERM solution. Much like the results of 
\cite{Zhang13}, this result only holds in the regime when the per-machine 
sample size $n$ is sufficiently large. As discussed, due to the inherent 
non-convexity of the PCA objective, this approach requires a novel analysis 
tailored to the PCA problem. In this context, we view this work as an 
initiation of a research effort to understand how to efficiently aggregate 
statistical estimators in a distributed non-convex setting. 

A second line of results for distributed estimation under convexity assumptions 
consider iterative algorithms that perform multiple communication rounds and 
are based on distributed gradient computations (some examples include 
\cite{Shamir14, Disco, LeeML15, 
Shamir16b,jaggi2014communication,reddi2016aide}). The benefit of these methods 
is that (a) they provide meaningful estimation error guarantees in a much wider 
regime of parameters than the ``one-shot" aggregation methods (namely in terms 
of the number of samples per machine), and (b), due to their iterative nature, 
they allow to approximate the centralized ERM solution arbitrary well. 
Unfortunately, these methods, all of which rely heavily on convexity 
assumption, cannot be directly applied to the PCA problem. Towards designing 
efficient distributed iterative methods for our PCA setting, we consider the 
application of the recently proposed method of \textit{Shift-and-Invert power 
iterations} (S\&I) for PCA \cite{Garber15, Garber16}. The S\&I method reduces 
the problem of computing the leading eigenvector of a real positive 
semidefinite matrix to that of approximately solving a small number (i.e. 
poly-logarithmic in the problem parameters) of systems of linear equations. 
These in turn, could be efficiently solved by arbitrary distributed convex 
solvers. We show that coupling the S\&I method with the stochastic 
pre-conditioning technique for linear systems proposed in \cite{Disco} and well 
known fast gradient methods such as the conjugate gradient method, 
gives state-of-the-art guarantees in terms of communication costs, and provides 
a significant improvement over distributed variants of classical fast 
eigenvector algorithms such as power iterations and the faster Lanczos 
algorithm. Much like its convex counterparts, which only rely on distributed 
gradient computations and simple vector aggregations, our iterative method only 
relies on distributed matrix-vector products, i.e., it requires each machine to 
only send products of its local empirical covariance matrix with some input 
vector.

Beyond the results described so far,  
\cite{liang2014improved,boutsidis2016optimal} studied distributed algorithms 
for PCA in a \textit{deterministic} setting in which the partition of the data 
across machines is arbitrary and communication is measured in terms of number 
of transmitted bits. The approximation guarantees provided in these works are 
in terms of the projection of the data onto the leading principal components 
(instead of alignment between the estimate and the optimal solution, studied in 
this paper). Applying these results to our setting will give a number of 
communication rounds that scales like 
$\textrm{poly}(\epsilon^{-1}\delta^{-1})$, where $\epsilon$ is the desired 
error and $\delta$ is the population eigengap. In our setting, $\epsilon$ will 
scale with the inverse of the size of the sample, i.e., $\epsilon \approx 
(mn)^{-1}$, which for these algorithms will result in amount of communication 
that is polynomial in the size of the data. In contrast, we will be interested 
in algorithms whose communication costs does not scale with $n$ at all.
In this context we note that, by focusing on algorithms that either perform simple aggregation of local ERM solutions, or perform only distributed matrix-vector products with the empirical covariance matrix, we can circumvent the need to measure communication explicitly in terms of the number of bits transmitted, which often burdens the analysis of natural algorithms, such as those proposed here.

\section{Preliminaries}

\subsection{Notation and problem setting}
We write vectors in $\reals^d$ in boldface lower-case letters (e.g., $\v$), matrices in boldface upper-case letters (e.g., $\X$), and scalars are written as lightface letters (e.g., $c$). We let $\Vert\cdot\Vert$ denote the standard Euclidean norm for vectors and the spectral norm for matrices.

We consider the following statistical distributed setting.
Let $\mD$ be a distribution over vectors in $\reals^d$ with squared $\ell_2$ norm at most $b$, for some $b>0$. We consider a setting in which $m$ machines, numbered $1...m$, are each given a dataset of $n$ samples drawn i.i.d. from $\mD$.
We let $\v_1$ denote a leading eigenvector of the population covariance matrix $\X = \E_{\x\sim\mD}[\x\x^{\top}]$. Our goal is to efficiently (mainly in terms of communication) find an estimate $\w$ for $\v_1$, i.e., a unit vector that maximizes the product $(\v_1^{\top}\w)^2$ with high probability. Towards this end, we assume that the population covariance matrix $\X$ has a non-zero eigengap $\gap$, i.e., $\gap := \lambda_1(\X) - \lambda_2(\X) > 0$, where $\lambda_i(\cdot)$ denotes the $i$th largest eigenvalue of a symmetric real matrix. Note that $\gap>0$ is necessary for $\v_1$ to be uniquely defined (up to sign). 

In addition, we let $\hat{\X}_i$ denote the empirical covariance matrix of the 
sample stored on machine $i$ for every $i\in[m]$, i.e., $\hat{\X}_i = 
\frac{1}{n}\sum_{j=1}^n\x_j^{(i)}\x_j^{(i)\top}$, where 
$\x_1^{(i)}...\x_n^{(i)}$ are the samples stored on machine $i$. We let 
$\hat{\X}$ denote the empirical covariance matrix of the union of points across 
all machines i.e., $\hat{\X} = \frac{1}{m}\sum_{i=1}^m\hat{\X}_i$.

Our model of communication assumes that the $m$ machines work in rounds during 
which a central machine (w.l.o.g. machine 1) can send a single vector in 
$\reals^d$ to all other machines, or every machine can send either the leading 
eigenvector of its local empirical covariance matrix, or the product of a 
single input vector with its local covariance, to machine 1. We will measure 
communication complexity in terms of number of such rounds required to achieve 
a certain estimation error.

\subsubsection{The centralized solution}

Our primary benchmark for measuring performance will be the \textit{centralized empirical risk minimizer} which is the leading eigenvector of the aggregated empirical covariance matrix $\hat{\X}$.

The following standard result bounds the error of the centralized ERM.

\begin{lemma}[Risk of centralized ERM]\label{lem:ERM4PCA}
Fix $p\in(0,1)$. Suppose that $\delta > 0$ and let $\hat{\v}_1$ denote the leading eigenvector of $\hat{\X}$, i.e., $\hat{\v}_1\in\arg\max_{\v:\Vert{\v}\Vert=1}\v^{\top}\hat{\X}\v$. Then it holds w.p. at least $1-p$ that
\begin{equation}\label{eq:infromalResultERM}
1 - (\v_1^{\top}\hat{\v}_1)^2 ~\leq~ \epsilon_{\textrm{ERM}}(p) := \frac{32b^2\ln(d/p)}{mn\delta^2} .
\end{equation}
\end{lemma}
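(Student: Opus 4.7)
The statement is a standard matrix-concentration plus eigenvector-perturbation argument, so I would break it into two clean steps and then combine them.

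First I would relate the alignment error $1 - (\v_1^{\top}\hat{\v}_1)^2$ to the spectral-norm perturbation $\|\hat{\X} - \X\|$ via a Davis--Kahan (``$\sin\theta$'') type inequality. Since $\v_1$ is the top eigenvector of $\X$, $\hat{\v}_1$ is the top eigenvector of $\hat{\X}$, and the spectral gap at the top of $\X$ is $\delta$, one has a bound of the form
\begin{equation*}
1 - (\v_1^{\top}\hat{\v}_1)^2 \;=\; \sin^2\angle(\v_1,\hat{\v}_1) \;\leq\; \frac{C\,\|\hat{\X}-\X\|^2}{\delta^2}
\end{equation*}
for a small absolute constant $C$ (typically $C=2$ with the standard Davis--Kahan statement applied to the rank-one top eigenspace). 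This step is purely deterministic and does not use the i.i.d. structure at all.

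Second I would control $\|\hat{\X}-\X\|$ using a matrix concentration inequality. The matrix $\hat{\X} = \tfrac{1}{mn}\sum_{i,j}\x_j^{(i)}\x_j^{(i)\top}$ is an average of $N:=mn$ i.i.d. rank-one PSD matrices $\x\x^{\top}$ whose spectral norm is at most $b$ and whose mean is $\X$. Matrix Hoeffding (or matrix Bernstein) for bounded self-adjoint summands then yields, for each $t>0$,
\begin{equation*}
\Pr\!\left[\|\hat{\X}-\X\| \geq t\right] \;\leq\; 2d\exp\!\left(-\frac{N t^2}{c\,b^2}\right)
\end{equation*}
for an explicit constant $c$. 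Solving for $t$ at confidence $p$ gives $\|\hat{\X}-\X\|^2 \leq c' b^2\ln(d/p)/(mn)$ with probability at least $1-p$.

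Finally I would plug the concentration bound into the Davis--Kahan bound and track constants so the product $C\cdot c'$ comes out to $32$, matching the statement. The main obstacle is just matching the absolute constants: different textbook versions of Davis--Kahan and of matrix Hoeffding give slightly different numerical prefactors, so one has to choose a form of each that, when composed, yields exactly the factor $32$ in the numerator (e.g., use the $\sin\theta$ bound with constant $2$ and the matrix Hoeffding variant with constant $8$ in the denominator of the exponent, or equivalently a Bernstein-style bound simplified under $\|\x\|^2\le b$). There are no conceptual difficulties beyond this constant-chasing, since both ingredients (Davis--Kahan and matrix concentration for bounded random PSD summands) are completely off-the-shelf.
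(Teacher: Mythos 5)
Your proposal follows exactly the paper's route: combine the Davis--Kahan $\sin\theta$ bound $1-(\v_1^{\top}\hat{\v}_1)^2 \le 2\|\hat{\X}-\X\|^2/\delta^2$ with matrix Hoeffding applied to the $mn$ i.i.d.\ rank-one summands, then solve the tail bound at confidence $p$. The only thing to fix is the constant bookkeeping you flagged: the paper's matrix Hoeffding (Theorem \ref{thm:matHoff}) has $d\exp(-\epsilon^2 n/16b^2)$ (no factor $2$ in front of $d$, and $16$ rather than $8$ in the denominator), which composed with the Davis--Kahan constant $2$ gives $2\cdot 16 = 32$ directly.
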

Lemma \ref{lem:ERM4PCA} is a direct consequence of the following standard 
concentration argument for random matrices, and the Davis-Kahan sin($\theta$) 
theorem (whose proof is given in the appendix for completeness):

\begin{theorem}[Matrix Hoeffding, see \cite{Tropp12}]\label{thm:matHoff}
Let $\mD$ be a distribution over vectors with squared $\ell_2$ norm at most $b$, and let $\X = \E_{\x\sim\mD}[\x\x^{\top}]$. Let $\hat{\X} = \frac{1}{n}\sum_{i=1}^n\x_i\x_i^{\top}$, where $\x_1,...,\x_n$ are sampled i.i.d. from $\mD$. Then, it holds that $$\forall \epsilon > 0: \quad \Pr\left({\Vert{\hat{\X}-\X}\Vert\geq \epsilon}\right) ~\leq~ d\cdot\exp\left({-\frac{\epsilon^2n}{16b^2}}\right).$$ 
\end{theorem}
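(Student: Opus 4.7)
The plan is to express $\hat{\X}-\X$ as a scaled sum of i.i.d., centered, bounded, self-adjoint random matrices and invoke Tropp's matrix Hoeffding inequality as a black box. Concretely, define $\Y_i := \x_i\x_i^{\top} - \X$ for $i=1,\dots,n$, so that $\hat{\X}-\X = \frac{1}{n}\sum_{i=1}^{n}\Y_i$. The $\Y_i$ are independent (by the i.i.d.\ sampling), self-adjoint, and centered (by the definition of $\X$).

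Next I would verify boundedness of each summand. Since $\|\x_i\|^{2}\leq b$ almost surely, the rank-one PSD matrix $\x_i\x_i^{\top}$ satisfies $0\preceq \x_i\x_i^{\top}\preceq b\,\I$. The same bound passes to the population covariance: for any unit vector $\v$, $\v^{\top}\X\v = \E[(\v^{\top}\x)^{2}] \leq \E[\|\x\|^{2}] \leq b$, hence $0\preceq \X\preceq b\,\I$. Subtracting gives $-b\,\I\preceq \Y_i\preceq b\,\I$ almost surely, which in particular yields $\Y_i^{2}\preceq b^{2}\,\I$ deterministically.

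I would then invoke the matrix Hoeffding bound of \cite{Tropp12}, which states that for independent, self-adjoint, mean-zero random matrices with $\Y_i^{2} \preceq \A_i^{2}$ almost surely,
\[
\Pr\!\left(\lambda_{\max}\!\left(\sum_{i=1}^{n} \Y_i\right)\geq t\right) \;\leq\; d\exp\!\left(-\frac{t^{2}}{8\,\|\sum_{i} \A_i^{2}\|}\right).
\]
Taking $\A_i = b\,\I$ gives variance proxy $nb^{2}$. Applying the same inequality to $-\sum_{i}\Y_i$ (whose summands share the identical almost-sure bound) and a union bound over the two tails controls the full spectral norm $\|\sum_{i}\Y_i\|=\max(\lambda_{\max},-\lambda_{\min})$, yielding prefactor $2d$ with exponent $-t^{2}/(8nb^{2})$. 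Setting $t=n\epsilon$ and rescaling by $1/n$ translates this into a tail bound on $\|\hat{\X}-\X\|$; the extra factor of $2$ in the prefactor is cleanly absorbed by weakening the constant in the exponent from $8$ to $16$, giving precisely the stated inequality.

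There is no real obstacle here: the argument is a direct application of an off-the-shelf matrix concentration inequality once each summand is identified and its operator norm is bounded. The only mildly subtle points are the elementary estimate $\|\X\|\leq b$ for the population covariance (which follows by Cauchy--Schwarz applied pointwise before taking expectations) and the book-keeping of constants when moving from a one-sided $\lambda_{\max}$ tail to a two-sided spectral-norm tail.
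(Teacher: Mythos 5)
Your derivation is correct, and in fact the paper does not prove this statement at all: Theorem~\ref{thm:matHoff} is stated with a pointer to \cite{Tropp12} and used as a black box, so your write-up supplies exactly the standard justification the citation is meant to stand for. The decomposition $\hat{\X}-\X=\frac{1}{n}\sum_i \Y_i$ with $\Y_i=\x_i\x_i^{\top}-\X$, the deterministic bound $\Y_i^2\preceq b^2\I$ (using $0\preceq\x_i\x_i^{\top}\preceq b\I$ and $0\preceq\X\preceq b\I$), and the application of Tropp's one-sided matrix Hoeffding bound with variance proxy $nb^2$ to both $\pm\sum_i\Y_i$ are all as they should be. The only point worth stating explicitly is the final absorption of the prefactor $2d$ into $d$ by relaxing the exponent from $t^2/(8nb^2)$ to $t^2/(16nb^2)$: this inequality $2d\exp(-t^2/(8nb^2))\leq d\exp(-t^2/(16nb^2))$ holds exactly when $t^2\geq 16nb^2\ln 2$, and in the complementary regime the claimed bound is at least $d/2$, hence trivially valid for $d\geq 2$; only the degenerate scalar case $d=1$ at very small deviations needs a separate (elementary, e.g.\ scalar Hoeffding or direct) remark. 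This is a matter of one sentence of bookkeeping rather than a gap in the argument.
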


\begin{theorem}[Davis-Kahan sin($\theta$) theorem]\label{thm:DK}
Let $\X,\Y$ be symmetric real $d\times d$ matrices with leading eigenvectors $\v_{\X}$ and $\v_{\Y}$ respetively. Also, suppose that $\delta(\X) := \lambda_1(\X) - \lambda_{2}(\X) > 0$. Then it holds that $$1 - \left({\v_{\X}^{\top}\v_{\Y}}\right)^2  ~\leq~ 2\frac{\Vert{\X-\Y}\Vert^2}{\delta(\X)^2}.$$

\end{theorem}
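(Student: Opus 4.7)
The strategy is to expand $\v_{\Y}$ in the orthonormal eigenbasis of $\X$ and exploit the fact that $\v_{\Y}$ is an eigenvector of $\Y$ in order to relate the angle between $\v_{\X}$ and $\v_{\Y}$ to $\|\X-\Y\|$. Let $\v_{\X} = \u_1, \u_2, \ldots, \u_d$ be an orthonormal eigenbasis of $\X$ with corresponding eigenvalues $\lambda_1(\X) \geq \lambda_2(\X) \geq \cdots$, and write $\v_{\Y} = \sum_{i=1}^{d} \alpha_i \u_i$ with $\sum_i \alpha_i^2 = 1$. The target quantity is simply $1 - (\v_{\X}^{\top}\v_{\Y})^2 = 1 - \alpha_1^2 = \sum_{i\geq 2}\alpha_i^2$, so it suffices to bound this sum.

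The first step is a Pythagorean-style identity. From $\Y\v_{\Y} = \lambda_1(\Y)\v_{\Y}$, rearrange to $(\X - \lambda_1(\Y)\I)\v_{\Y} = (\X - \Y)\v_{\Y}$, and take inner products with each $\u_i$ for $i \geq 2$. This gives $\alpha_i(\lambda_i(\X) - \lambda_1(\Y)) = \u_i^{\top}(\X-\Y)\v_{\Y}$. Squaring and summing,
\begin{equation*}
\sum_{i \geq 2} \alpha_i^2\bigl(\lambda_i(\X) - \lambda_1(\Y)\bigr)^2 \;=\; \sum_{i \geq 2}\bigl(\u_i^{\top}(\X-\Y)\v_{\Y}\bigr)^2 \;\leq\; \|(\X-\Y)\v_{\Y}\|^2 \;\leq\; \|\X-\Y\|^2.
\end{equation*}

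The second step is a lower bound on the eigenvalue gaps appearing on the left. By Weyl's inequality, $|\lambda_1(\Y) - \lambda_1(\X)| \leq \|\X-\Y\|$, hence for every $i \geq 2$,
\begin{equation*}
\lambda_1(\Y) - \lambda_i(\X) \;\geq\; \lambda_1(\X) - \|\X-\Y\| - \lambda_2(\X) \;=\; \gap - \|\X-\Y\|.
\end{equation*}
In the regime where $\|\X-\Y\| \leq \bigl(1 - 1/\sqrt{2}\bigr)\gap$, this lower bound is at least $\gap/\sqrt{2}$, and plugging into the displayed inequality yields $(1-\alpha_1^2)\gap^2/2 \leq \|\X-\Y\|^2$, which rearranges to the stated bound.

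The third step handles the complementary regime. If $\|\X-\Y\|$ is large enough that the previous argument does not apply, then $2\|\X-\Y\|^2/\gap^2$ is already at least a constant comparable to $1$, while the left-hand side $1 - (\v_{\X}^{\top}\v_{\Y})^2$ is trivially at most $1$; one simply verifies that $2\|\X-\Y\|^2/\gap^2 \geq 1$ whenever $\|\X-\Y\| \geq \gap/\sqrt{2}$, and merges the two cases. The main (minor) obstacle is the clean gluing of the two regimes to recover the constant $2$ in the statement; the eigenbasis expansion and Weyl step are routine, but the natural bound $\|\X-\Y\|^2/(\gap - \|\X-\Y\|)^2$ obtained in step two must be rebalanced so that the two cases together cover all values of $\|\X-\Y\|/\gap$.
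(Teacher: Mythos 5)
Your route — expand $\v_{\Y}$ in the eigenbasis of $\X$, use Parseval plus Weyl to get
$(1-\alpha_1^2)(\gap - \norm{\X-\Y})^2 \leq \norm{\X-\Y}^2$, and then glue with the trivial bound $1-\alpha_1^2\leq 1$ — is a genuinely different argument from the paper's, which works variationally with the projection matrices $\P_{\X},\P_{\Y}$ and a trace inequality. Your calculation is correct as far as it goes, and you are right to flag the gluing step: it is not a minor obstacle, it is a real gap. Setting $t=\norm{\X-\Y}/\gap$, your perturbation bound gives $\sin^2\theta\leq t^2/(1-t)^2$, which is $\leq 2t^2$ only for $t\leq 1-1/\sqrt{2}$, while the trivial bound $\sin^2\theta\leq 1$ is $\leq 2t^2$ only for $t\geq 1/\sqrt{2}$. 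The interval $(1-1/\sqrt{2},1/\sqrt{2})$ is not covered, and no rebalancing of the two cases will cover it: your method naturally proves the bound with constant $4$ (split at $t=1/2$, since $t^2/(1-t)^2\leq 4t^2$ for $t<1/2$ and $4t^2\geq 1$ for $t\geq 1/2$), but not with constant $2$.

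The deeper point is that the statement as written, with constant $2$, is actually false. Take $\X=\mathrm{diag}(1,0)$ and $\Y=\mathrm{diag}(1/2,\,1/2+\epsilon)$ with $\epsilon\to 0^{+}$. Then $\v_{\X}=\be_1$, $\v_{\Y}=\be_2$, so $1-(\v_{\X}^{\top}\v_{\Y})^2=1$, while $\gap(\X)=1$ and $\norm{\X-\Y}\to 1/2$, so the claimed right-hand side tends to $1/2<1$. The right-hand side with constant $4$ tends to $1$, matching with equality, which is exactly the constant in the Yu--Wang--Samworth variant of Davis--Kahan (and is the one Lemma~\ref{lem:eigboundtaylor} actually invokes). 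The paper's own proof introduces the extraneous factor at the H\"older step: the inequality $(\Y-\X)\bullet(\P_{\Y}-\P_{\X}) \leq \norm{\X-\Y}\cdot\norm{\P_{\X}-\P_{\Y}}_F$ pairs the operator norm with the Frobenius norm, but the dual of the operator norm is the trace norm, and for the rank-two difference of rank-one projections one has $\norm{\P_{\X}-\P_{\Y}}_{\mathrm{tr}}=\sqrt{2}\,\norm{\P_{\X}-\P_{\Y}}_F$; inserting the missing $\sqrt{2}$ yields constant $4$ after squaring. So your approach is sound, genuinely different from the paper's, and arrives at the correct result with the correct constant once you split at $t=1/2$ rather than trying to match the paper's overclaimed $2$; none of the paper's downstream uses depend on the numerical constant, so this discrepancy is harmless to the rest of the argument.
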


\subsection{Informal statement of main results and previous algorithms}

We now informally describe our main results, followed by a detailed description of previous approaches that are directly applicable to our setting. The algorithmic results (both new and old) are summarized in Table \ref{table:results}.

\begin{table*}\renewcommand{\arraystretch}{1.3}
{\footnotesize
\begin{center}
  \begin{tabular}{| l | c | c |}    \hline
    Method &  $1-(\w^{\top}\v_1)^2$ w.p. $3/4$ & \# communcation rounds \\ \hline 
    Centralized ERM & $ \epsilon_{\textrm{ERM}} = \Theta(\frac{b^2\ln{d}}{\delta^2mn})$ & - \\ \hline    
    Distributed Power Method & $\epsilon_{\textrm{ERM}}\cdot(1+o(1))$ & $\tilde{O}(\lambda_1/\delta)$ \\ \hline
    Distributed Lanczos & $\epsilon_{\textrm{ERM}}\cdot(1 + o(1))$ & $\tilde{O}(\sqrt{\lambda_1/\delta})$    \\ \hline
    ``Hot-potato" SGD  & $O(\epsilon_{\textrm{ERM}})$ & $m$  \\ \hline
    Average of ERMs with sign-fixing (Thm. \ref{thm:ermavg})& $ O(\epsilon_{\textrm{ERM}}) + O\left({\frac{b^4\ln^2{d}}{\delta^4n^2}}\right)$ & $1$ \\ \hline
    Dist. Shift\&Invert + precond. linear sys. (Thm. \ref{thm:shiftNinvert:main}) & $\epsilon_{\textrm{ERM}}\cdot(1 + o(1))$ & $\tilde{O}(\min\{(b/\delta)^{1/2}n^{-1/4},~ m^{1/4}\})$ \\ \hline
  \end{tabular}
  \caption{Comparison of estimation error and number of communication rounds. For simplicity we fix the failure probability to $p = 1/4$ and assume $mn$ is in the regime in which Lemma \ref{lem:ERM4PCA} is meaningful, i.e, $mn = \Omega(b^2\delta^{-2}\ln{d})$. The $\tilde{O}(\cdot)$ suppresses logarithmic factors in $b,d,1/p, 1/\epsilon_{\textrm{ERM}}$. For the result of Theorem  \ref{thm:ermavg} we assume the regime $m=O(d)$. The sub-constant $o(1)$ factors could be made, in principle, arbitrary small in all relevant results by trading approximation with communication.
  }\label{table:results}
\end{center}}
\end{table*}\renewcommand{\arraystretch}{1} 

\subsubsection{Main results}

\paragraph{Failure of simple averaging of local ERM solutions} We show that a natural approach of simply averaging the individual leading eigenvectors of the empirical covariance matrices $\hat{\X}_i$ (and normalizing the obtain a unit vector) cannot significantly improve (beyond logarithmic factors) over the performance of any of the individual eigenvectors. More concretely, if we let $\hat{\v}_1^{(i)}$ denote the leading eigenvector of $\hat{\X}_i$ for any $i\in[m]$, and we denote their average by $\bar{\v}_1 = \frac{1}{m}\sum_{i=1}^m\hat{\v}_1^{(i)}$, then there exists a distribution $\mD$ over vectors with magnitude $O(1)$ and covariance eigengap $\delta =1$, such that
\begin{eqnarray*}
\forall m,n:\quad \E_{\mD}\left[{1 - 
\left({\frac{\bar{\v}_1^{\top}\v_1}{\Vert{\bar{\v}_1}\Vert}}\right)^2}\right] = 
\Omega\left({\frac{1}{n}}\right),
\end{eqnarray*}
See Theorem \ref{thm:simpleAvgFail} in Section \ref{sec:singleRoundAlgs} for the complete and formal argument.

\paragraph{A successful single communication round algorithm via correlation of individual ERM solutions}
We show that if prior to averaging the local ERM solutions, as suggested above, we correlate their directions by aligning them according to any single machine (say machine number 1), i.e., we let $\bar{\v}_1 = \frac{1}{m}\sum_{i=1}^m\sign(\hat{\v}_1^{(i)\top}\hat{\v}_1^{(1)})\hat{\v}_1^{(i)}$, then this guarantees that for any $p\in(0,1)$, w.p. at least $1-p$,
\begin{eqnarray}\label{eq:infromalResultSingle}
1 - \left({\frac{\bar{\v}_1^{\top}\v_1}{\Vert{\bar{\v}_1}\Vert}}\right)^2= O\left(\frac{b^2\ln\left(\frac{dm}{\fprob}\right)}{\gap^2 mn}+\frac{ b^4 \ln^2\left(\frac{dm}{\fprob}\right)}{\gap^4 n^2}\right) .
\end{eqnarray}

See Theorem \ref{thm:ermavg} in Section \ref{sec:singleRoundAlgs} for the complete and formal result.

In particular, in the likely scenario when $m = O(d/p)$ we have that w.p. at least $1-p$, $1 - \left({\bar{\v}_1^{\top}\v_1/\Vert{\bar{\v}_1}\Vert}\right)^2 = \epsilon_{\textrm{ERM}}(p))\cdot O\left({1 + m^2\cdot\epsilon_{\textrm{ERM}}(p)}\right)$ ,
where $ \epsilon_{\textrm{ERM}}(p))$ is defined in Eq. 
\eqref{eq:infromalResultERM}. Another related interpretation of the results
is that the bound in Eq. \eqref{eq:infromalResultSingle} is comparable with 
$\epsilon_{\textrm{ERM}}$ (up to poly-log factors) when $n = 
\Omega\left({\delta^{-2}b^2m\ln(dm/p)}\right)$.

We also show a matching lower bound that the bound in Eq. \eqref{eq:infromalResultSingle} is tight (up to poly-log factors) for this aggregation method, see Theorem \ref{thm:signFixLB}.

\paragraph{A multi communication round algorithm}
We present a distributed algorithm based on the \textit{Shift-and-Invert} 
framework for leading eigenvector computation \cite{Garber15, Garber16} which 
is applied to explicitly solving the centralized ERM problem. We show that for 
any $p\in(0,1)$, when $mn = \Omega(b^2\ln(d/p)/\delta^2)$ (i.e., when Lemma 
\ref{eq:infromalResultERM} is meaningful), the algorithm produces a solution 
$\w$ such that w.p. at least $1-p$,
\begin{eqnarray}\label{eq:infromalResultMulti}
1 - (\v_1^{\top}\w)^2 ~\leq~\epsilon_{\textrm{ERM}}(p))\cdot \left({1 + o(1)}\right),
\end{eqnarray}
where $ \epsilon_{\textrm{ERM}}(p))$ is defined in Eq. \eqref{eq:infromalResultERM}.
The algorithm performs overall $\tilde{O}(\sqrt{b}\delta^{-1/2}n^{-1/4})$ distributed matrix-vector products with the centralized empirical covariance matrix $\hat{\X}$ \footnote{i.e., on each round, each machine $i$ sends the product of an input vector in $\reals^d$ with its local covariance matrix $\hat{\X}_i$.}.
The $\tilde{O}(\cdot)$ notation hides poly-logarithmic factors in $1/p, 1/\gap, d, 1/\epsilon_{\textrm{ERM}}(p)$. 
See Theorem \ref{thm:shiftNinvert:main} in Section \ref{sec:multiRoundAlgs} for the complete and formal result.

We note that in particular, under our assumption that $mn = \tilde{\Omega}(b^2/\gap^2)$, it holds that the number of distributed matrix-vector products is upper bounded by $\tilde{O}(m^{1/4})$.
Moreover, in the regime $n = \Omega(b^2\delta^{-2})$, we can see that the number of distributed matrix-vector products depends only poly-logarithmically on the problem parameters.

In general, the sub-constant $o(1)$ factor in \eqref{eq:infromalResultMulti} could be made arbitrarily small by trading the approximation error with the number of distributed matrix-vector products.

\subsubsection{Previous algorithms}\label{sec:relatedWork}

\paragraph{Distributed versions of classical iterative algorithms:}
Classical fast iterative algorithms for computing the leading eigenvector of a 
positive semidefinite matrix, such as the well-known Power Method and the 
Lanczos Algorithm, require iterative multiplications of the input matrix 
($\hat{\X}$ 
in our case) with the current estimate. It is thus straightforward to 
implement these algorithms in our distributed setting, by multiplying the same 
vector with the covariance matrices at each machine, and averaging the result. 
Thus, by well-known 
convergence guarantees of these two methods, we will have that for a fixed 
$\epsilon > 0$, these methods produce a  unit vector $\w$ such that, for any 
$p\in(0,1)$, $1 - (\w^{\top}\hat{\v}_1)^2 ~\leq~ \epsilon $ w.p. at least $1-p$,
after $O(\hat{\lambda}_1\hat{\delta}^{-1}\ln(d/p\epsilon))$ rounds for the 
Power Method and $O(\sqrt{\hat{\lambda}_1\hat{\delta}^{-1}}\ln(d/p\epsilon))$ 
for the Lanczos Algorithm, where $\hat{\lambda}_1,\hat{\delta}$ denote the 
leading eigenvalue and eigengap of $\hat{\X}$, respectively. Moreover, in 
the regime of $mn$ in which Lemma \ref{lem:ERM4PCA} is meaningful, we can 
replace $\hat{\lambda}_1,\hat{\delta}$ with $\lambda_1,\delta$ in the above 
bounds, and the result will still hold with high probability. 

Simple calculations show that in the regime of $mn$ in which Lemma 
\ref{lem:ERM4PCA} is meaningful, it holds that our Shift-and-Invert-based 
algorithm outperforms distributed Lanczos (in terms of worst-case guarantees) 
whenever $n= 
\tilde{\Omega}(b^2/\lambda_1^2)$.

\paragraph{``Hot potato" SGD:}
Another straightforward approach is to apply a sequential algorithm for direct risk minimization that can process the data-points one by one, such as stochastic gradient descent (SGD), by passing its state from one machine to the next, after completing a full pass over the machine's data. Clearly, this process of making a full pass over the data of a certain machine before sending the final estimate to the next one, requires overall $m$ communication rounds in order to make a full pass over all $mn$ points.
SGD for PCA was studied in several results in recent years \cite{Balsubramani13, Shamir16a, shamir2016fast, Jain16, ZhuL16c}. For instance applying the result of \cite{Jain16} in this way will result in a final estimate $\w$ satisfying
\begin{eqnarray}\label{eq:hotpotSGD}
1 - (\w^{\top}\v_1)^2 ~=~ O\left({\frac{b^2\ln{d}}{\delta^2mn}}\right) \quad \textrm{w.p. at least } ~ 3/4.
\end{eqnarray}
We note that in the regime in which the bound in \eqref{eq:hotpotSGD} is meaningful it holds that the number of communication rounds of our Shift-and-Invert-based algorithm is upper-bounded by $\tilde{O}(m^{1/4})$ which for sufficiently large $m$ dominates the communication complexity of SGD.

\section{Single Communication Round Algorithms via ERM on Each Machine}\label{sec:singleRoundAlgs}

In this section we consider distributed algorithms that require only a single 
round of communication. Naturally for this regime, all algorithms will be based 
on aggregating the ERM solutions of the individual machines, i.e., each machine 
$i$ only sends the leading eigenvector of its empirical covariance matrix 
$\hat{\X}_i$ to a centralized machine (without loss of generality, machine 1) 
which it turn combines them to a single unit vector in some manner.

\subsection{Simple averaging of eigenvectors fail}

Perhaps the simplest method to aggregate the individual eigenvectors of each  
machine is to average them, and then normalize to obtain a unit vector.
For instance, in the distributed statistical setting considered in 
\cite{Zhang13}, in which the objective is \textit{strongly convex}, it was 
shown that simply averaging the individual ERM solutions leads, in a meaningful 
regime of parameters, to estimation error of the order of the centralized ERM 
solution. However, here we show that for PCA, in which the objective is 
certainly not convex, this approach fails practically in any regime, in the 
sense that the error of the returned aggregated solution can be no better than 
that returned by any single machine. 

\begin{theorem}\label{thm:simpleAvgFail}
There exists a distribution over vectors in $\reals^2$ with $\ell_2$ norm bounded by a universal constant for which the eigengap in the covariance matrix is 1 (i.e., $\delta=1$), such that if each machine $i$ returns an estimate $\hat{\v}_1^{(i)}$ which is an unbiased leading eigenvector of $\hat{\X}_i$ (i.e., both outcomes $-\hat{\v}_1^{(i)}, + \hat{\v}_1^{(i)}$ are equally likely), then the aggregated vector $\bar{\v}_1 = \frac{1}{m}\sum_{i=1}^m\hat{\v}_1^{(i)}$ satisfies
\begin{eqnarray*}
\forall m,n: \quad \E\left[{1 - 
\left\langle{\frac{\bar{\v}_1}{\Vert{\bar{\v}_1}\Vert}, 
\v_1}\right\rangle^2}\right] ~=~ \Omega(1/n) .
\end{eqnarray*}
\end{theorem}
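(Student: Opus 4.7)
The idea is to exhibit an explicit $2$-dimensional distribution whose per-machine eigenvector admits a closed form, and then show that the random sign flips leave $\Omega(1/n)$ variance in the ``bad'' coordinate of the aggregate. I would take $\mD$ to be the law of $\x=\sqrt{2}\,\xi\,\be_1+\eta\,\be_2$ with $\xi,\eta$ independent Rademacher signs, so $\|\x\|^2=3$, $\X=\E[\x\x^\top]=\mathrm{diag}(2,1)$, $\v_1=\be_1$, and the eigengap is exactly $1$. The empirical covariance is $\hat{\X}_i=\mathrm{diag}(2,1)+\sqrt{2}\rho_i(\be_1\be_2^\top+\be_2\be_1^\top)$, where $\rho_i=\frac{1}{n}\sum_{j=1}^n \xi_j^{(i)}\eta_j^{(i)}$ is a symmetric mean-zero Rademacher sum with $\E[\rho_i^2]=1/n$ and, by a short pair-matching computation, $\E[\rho_i^4]\le 3/n^2$. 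A $2\times 2$ diagonalization then writes $\hat{\v}_1^{(i)}=(c_i,s_i)$ with $c_i$ a function of $\rho_i^2$ only and $s_i$ carrying the sign of $\rho_i$, so $s_i$ is symmetric around $0$; since the denominator in the closed-form eigenvector is bounded below by a positive absolute constant, one obtains $\E[s_i^2]=\Theta(1/n)$ and (crucially) $\E[s_i^4]\le C\,(\E[s_i^2])^2$ for an absolute constant $C$.

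Next, writing $\bar{\v}_1=(\bar{A},\bar{B})$ with $\bar{A}=\frac{1}{m}\sum_i\sigma_ic_i$ and $\bar{B}=\frac{1}{m}\sum_i\sigma_is_i$, the quantity to lower-bound is $\bar{B}^2/\|\bar{\v}_1\|^2$. Independence of the $\sigma_i$'s and $\|\hat{\v}_1^{(i)}\|=1$ give $\E[\|\bar{\v}_1\|^2]=\frac{1}{m^2}\sum_i\E[\|\hat{\v}_1^{(i)}\|^2]=1/m$. Factoring $|\bar{B}|=(|\bar{B}|/\|\bar{\v}_1\|)\cdot\|\bar{\v}_1\|$ and applying Cauchy--Schwarz yields
\[
\E\!\left[\frac{\bar{B}^2}{\|\bar{\v}_1\|^2}\right]\;\ge\;\frac{(\E[|\bar{B}|])^2}{\E[\|\bar{\v}_1\|^2]}\;=\;m\,(\E[|\bar{B}|])^2,
\]
reducing the problem to $\E[|\bar{B}|]^2=\Omega(1/(mn))$. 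For this I would use the H\"older lower bound $\E[|\bar{B}|]\ge(\E[\bar{B}^2])^{3/2}/(\E[\bar{B}^4])^{1/2}$, together with $\E[\bar{B}^2]=\E[s_1^2]/m=\Theta(1/(mn))$. The fourth moment is controlled conditionally on $s_1,\ldots,s_m$, where $\bar{B}$ becomes a Rademacher sum and Khintchine gives $\E_\sigma[\bar{B}^4]\le 3(\frac{1}{m^2}\sum_i s_i^2)^2$; taking the outer expectation, expanding the square into diagonal and off-diagonal pieces, and applying the bound $\E[s_i^4]\le C(\E[s_i^2])^2$ from the setup yields $\E[\bar{B}^4]=O((\E[\bar{B}^2])^2)$. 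Chaining everything gives the desired $\Omega(1/n)$, uniformly in $m$ and $n$.

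The main obstacle is the sharp fourth-moment estimate $\E[s_i^4]=O((\E[s_i^2])^2)$: a lazy use of $|s_i|\le 1$ only yields $\E[s_i^4]\le\E[s_i^2]=O(1/n)$, and propagating this through the H\"older/Cauchy--Schwarz chain degrades the final estimate to the useless $\Omega(1/n^2)$. Getting the correct scaling is what forces the choice of a simple $\{\pm 1\}^2$-supported distribution: it lets me evaluate $\E[\rho_i^4]$ in closed form as a pair-matching sum over $2n$ Rademacher factors (giving the correct $\Theta(1/n^2)$ instead of the naive $O(1/n)$), and then transfer the bound to $s_i$ through the explicit $2\times 2$ eigenvector formula, whose denominator $\sqrt{\alpha_i^2+2\rho_i^2}$ is bounded away from $0$ by an absolute constant independent of $n$.
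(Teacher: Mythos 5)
Your proof is correct, and it is structured differently from the paper's. Both arguments reduce the problem to understanding $\E\bigl[\bar{\v}_1(2)^2/\|\bar{\v}_1\|^2\bigr]$, but the mechanism for extracting the $\Omega(1/n)$ lower bound differs in two ways. First, your distribution $\x=\sqrt{2}\,\xi\be_1+\eta\be_2$ is cleaner than the paper's $\x=\be_1+(\epsilon_1,\epsilon_2)^\top$: it makes the diagonal of each $\hat{\X}_i$ deterministically equal to $\mathrm{diag}(2,1)$, so the only random parameter in the per-machine eigenvector is the single scalar $\rho_i$ (the paper's displayed $\hat{\X}_{(n)}$ quietly treats the diagonal as fixed, which for its construction is not quite accurate since $(1+\epsilon_1)^2$ is not constant). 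Second, and more substantively, the paper goes through a high-probability ``good event'' decomposition: it lower bounds $\E[|\bar{\v}_1(2)|]$ via Jensen, then invokes a Chebyshev-type argument to get $\Pr(|\bar{\v}_1(2)|\gtrsim 1/\sqrt{mn})\geq 3/4$, a Markov bound to get $\max\{\bar{\v}_1(1)^2,\bar{\v}_1(2)^2\}\lesssim 1/m$ with probability $\geq 3/4$, and then unions and plugs in. You instead avoid constant-probability events entirely: the factorization $|\bar{B}|=(|\bar{B}|/\|\bar{\v}_1\|)\cdot\|\bar{\v}_1\|$ plus Cauchy--Schwarz reduces the task to lower bounding $\E[|\bar{B}|]$, and the H\"older/Paley--Zygmund bound $\E[|\bar{B}|]\geq(\E[\bar{B}^2])^{3/2}/(\E[\bar{B}^4])^{1/2}$ together with Khintchine and the hypercontractivity-style estimate $\E[s_i^4]\leq C(\E[s_i^2])^2$ finishes it. Your route is more self-contained and sidesteps the paper's Chebyshev step, whose constants as literally written do not obviously yield the claimed $1/4$ (Chebyshev/Paley--Zygmund with only $\E Z$ and $\E Z^2$ comparable gives a positive-probability event, not automatically probability $3/4$); your moment-chaining argument replaces that fragility with the explicit fourth-moment calculation $\E[\rho_i^4]\leq 3/n^2$. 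The price you pay is precisely that fourth-moment bookkeeping, but it is a short pair-counting argument. One small technicality worth a sentence in a full write-up: on the event $\bar{\v}_1=\mathbf{0}$ the ratio $\bar{B}^2/\|\bar{\v}_1\|^2$ is undefined, but interpreting it as $0$ there only weakens the Cauchy--Schwarz step in the harmless direction and the target expectation is of a quantity in $[0,1]$, so nothing breaks.
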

The proof is given in the appendix.
 
\subsection{Averaging with Sign Fixing}

As evident from the statement of Theorem \ref{thm:simpleAvgFail}, 
an important assumption is that each machine produces an unbiased estimate, in 
the sense that the sign of the outcome is uniform and independent of the other 
machines. This hints that correlating the signs of the 
different estimates can circumvent the lower bound result in Theorem 
\ref{thm:simpleAvgFail}. It turns out that this is indeed the case, as captured 
by the following theorem:




\begin{theorem}\label{thm:ermavg}
Let $\tilde{\w}_i$ be the leading eigenvector of $\hat{\X}_i$ for any $i\in[m]$, and consider the unit vector
\begin{eqnarray}\label{eq:signfixSol}
\w = \frac{\sum_{i=1}^m\sign(\tilde{\w}_i^{\top}\tilde{\w}_1)\tilde{\w}_i}{\Vert{\sum_{i=1}^m\sign(\tilde{\w}_i^{\top}\tilde{\w}_1)\tilde{\w}_i}\Vert}.
\end{eqnarray}
Then, for any $p\in(0,1)$,  it holds w.p. at least $1-p$ that
	\[
	1-(\v_1^{\top}\w)^2 ~=~ 
	O\left(\frac{b^2\log\left(\frac{dm}{\fprob}\right)}{\gap^2 mn}+\frac{ b^4 \log^2\left(\frac{dm}{\fprob}\right)}{\gap^4 n^2}\right) .
	\]
\end{theorem}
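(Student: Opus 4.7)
The plan is to condition on a high-probability event where Matrix Hoeffding (Theorem~\ref{thm:matHoff}) simultaneously controls all $m$ local deviations and the centralized one. Setting $\bE_i := \hat{\X}_i - \X$, a union bound over $m+1$ events yields that with probability at least $1-p$,
$$\Vert \bE_i \Vert \le \epsilon_H := O\bigl(b\sqrt{\log(dm/p)/n}\bigr)\ \forall i,\qquad \Vert \hat{\X} - \X \Vert \le \epsilon_0 := O\bigl(b\sqrt{\log(dm/p)/(mn)}\bigr).$$
If $\epsilon_H > c\delta$ for a suitable absolute constant $c$, the theorem's bound already exceeds $1$ and is vacuous, so I may assume $\epsilon_H \le \delta/4$. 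Davis--Kahan (Theorem~\ref{thm:DK}) then yields $1 - (\v_1^\top \tilde{\w}_i)^2 \le 2\epsilon_H^2/\delta^2 \le 1/8$ for every $i$.

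Let $s_i := \sign(\tilde{\w}_i^\top \v_1)$, WLOG $s_1 = +1$, and decompose $s_i \tilde{\w}_i = \alpha_i \v_1 + \beta_i \u_i$ with $\u_i \perp \v_1$ a unit vector, $\alpha_i \ge \sqrt{7/8}$, $\beta_i \le 1/\sqrt{8}$. First I would verify that the empirical sign-correction is correct: since $\tilde{\w}_i^\top \tilde{\w}_1 = s_i \alpha_i \alpha_1 + \beta_i \beta_1 \u_i^\top \u_1$, the first summand dominates in magnitude and $\sign(\tilde{\w}_i^\top \tilde{\w}_1) = s_i$. Hence the algorithm's aggregate (prior to normalization) equals $\sum_i s_i \tilde{\w}_i = A \v_1 + \bb$, with $A := \sum_i \alpha_i \ge m\sqrt{7/8}$ and $\bb := \sum_i \beta_i \u_i \perp \v_1$. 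The target quantity collapses to
$$1 - (\v_1^\top \w)^2 \;=\; \frac{\Vert \bb \Vert^2}{A^2 + \Vert \bb \Vert^2} \;\le\; \frac{2 \Vert \bb \Vert^2}{m^2},$$
and it remains to show $\Vert \bb \Vert^2 = O(m^2 \epsilon_0^2/\delta^2 + m^2 \epsilon_H^4/\delta^4)$.

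The main obstacle, and the step where a naive triangle-inequality estimate falls short by a factor of $m$, is extracting $m$-fold cancellation in $\bb = \P^\perp \sum_i s_i \tilde{\w}_i$, where $\P^\perp := \I - \v_1 \v_1^\top$. The approach is to exploit the local eigenvalue equation $\hat{\X}_i \tilde{\w}_i = \lambda_i^{(1)} \tilde{\w}_i$ which, using that $\X$ commutes with $\P^\perp$, reads $(\lambda_i^{(1)} \I - \X)\, \P^\perp \tilde{\w}_i = \P^\perp \bE_i \tilde{\w}_i$. Inverting via the resolvent $R_i := ((\lambda_i^{(1)} \I - \X)|_{\v_1^\perp})^{-1}$ (bounded by $2/\delta$ on the good event) and substituting $\tilde{\w}_i = s_i \alpha_i \v_1 + \P^\perp \tilde{\w}_i$ gives the identity
$$s_i \P^\perp \tilde{\w}_i \;=\; \alpha_i\, R_i \P^\perp \bE_i \v_1 \;+\; s_i\, R_i \P^\perp \bE_i \P^\perp \tilde{\w}_i.$$
The second summand is quadratic in $\bE_i$ via $\Vert \P^\perp \tilde{\w}_i \Vert = \beta_i = O(\epsilon_H/\delta)$, so it sums to $O(m \epsilon_H^2/\delta^2)$ in norm, accounting for the $O(b^4\log^2/(n^2\delta^4))$ term of the theorem.

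For the first summand I would invoke the resolvent identity $R_i = R + (\lambda_1 - \lambda_i^{(1)})\, R_i R$, where $R := ((\lambda_1 \I - \X)|_{\v_1^\perp})^{-1}$ is deterministic. Weyl's inequality gives $|\lambda_1 - \lambda_i^{(1)}| \le \Vert \bE_i \Vert$, so $R_i \P^\perp \bE_i \v_1 = R \P^\perp \bE_i \v_1 + O(\Vert \bE_i \Vert^2/\delta^2)$; similarly $\alpha_i = 1 - O(\Vert \bE_i \Vert^2/\delta^2)$ lets me drop the $\alpha_i$ prefactor to the same order. The leading pieces then telescope across machines,
$$\sum_i R \P^\perp \bE_i \v_1 \;=\; R \P^\perp \Bigl(\sum_i \bE_i\Bigr) \v_1 \;=\; m\, R \P^\perp (\hat{\X} - \X) \v_1,$$
of norm at most $m\epsilon_0/\delta$ by the centralized part of the good event, while the per-machine correction terms accumulate to $O(m\epsilon_H^2/\delta^2)$. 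Combining gives $\Vert \bb \Vert = O(m\epsilon_0/\delta + m\epsilon_H^2/\delta^2)$; squaring and dividing by $A^2$ yields exactly the two terms of the claimed bound.
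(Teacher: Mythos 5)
Your proof is correct, and it reaches the stated bound by a genuinely different technique than the paper's at the crucial step. Both proofs hinge on the same first-order cancellation: the leading fluctuation of each $s_i\tilde{\w}_i$ around $\bv_1$ is \emph{linear} in $\bE_i=\hat{\X}_i-\X$, so summing over machines upgrades $m$ errors of size $\epsilon_H\approx b\sqrt{\log/n}$ into one of size $\epsilon_0\approx b\sqrt{\log/(mn)}$, while only the second-order remainders accumulate additively. The paper extracts this via matrix calculus (Lemma~\ref{lem:taylor}): it introduces the path $\A(t)=\A+t\bE$, cites Magnus's theorem for the differentiability of the eigenpair $\bv(t),\lambda(t)$ and the Golub--Pereyra formula for differentiating a pseudo-inverse, bounds $\|\bv''(t)\|$, and closes with a first-order Taylor expansion with integral remainder, yielding $\hat{\bv}_1-\bv_1-(\lambda_1\I-\A)^\dagger\bE\bv_1=O(\|\bE\|^2/\gap^2)$. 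You reach the equivalent identity purely algebraically from the eigenvalue equation $\hat{\X}_i\tilde{\w}_i=\lambda^{(1)}_i\tilde{\w}_i$: project onto $\bv_1^\perp$, invert by the restricted resolvent $R_i$, and then use the resolvent identity $R_i=R+(\lambda_1-\lambda^{(1)}_i)R_iR$ together with Weyl's inequality and $\alpha_i=1-O(\beta_i^2)$ to replace $\alpha_i R_i$ by the deterministic $R=(\lambda_1\I-\X)^\dagger$ at $O(\epsilon_H^2/\gap^2)$ cost per machine, after which $\sum_i\bE_i=m(\hat\X-\X)$ telescopes. Your route avoids any appeal to smoothness of eigenvector maps or derivatives of pseudo-inverses, so it is shorter and more elementary; the paper's choice yields a cleaner self-contained perturbation lemma (Lemma~\ref{lem:taylor}, stated without reference to $\P^\perp$) at the price of importing calculus machinery. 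The sign-correctness arguments are essentially the same (both use that the aligned local eigenvectors lie within a small cap around $\bv_1$ on the good event). One small typo: the inner product should read $\tilde{\w}_i^{\top}\tilde{\w}_1=s_i\bigl(\alpha_i\alpha_1+\beta_i\beta_1\u_i^{\top}\u_1\bigr)$; you dropped an $s_i$ on the cross term, though your conclusion about its sign is unaffected.
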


For ease of presentation, throughout the rest of this section we denote the correlated vector $\hat{\w}_i = \sign(\tilde{\w}_i^{\top}\tilde{\w}_1)\tilde{\w}_i$ for any $i\in[m]$.

The main step towards proving Theorem \ref{thm:ermavg} is to consider each 
$\hat{\w}_i$ as an approximately unbiased perturbation of the true leading 
eigenvector $\v_1$ and to upper bound the magnitude of this perturbation. This 
is carried out in the following much more general and self-contained lemma, 
which might be of independent interest. 

\begin{lemma}\label{lem:taylor}
Let $\A$ be a positive semidefinite matrix with some fixed leading eigenvector $\bv_1$, a leading eigenvalue $\lambda_1$ and an eigengap $\gap :=\lambda_1(\A) - \lambda_2(\A) > 0$. Let $\hat{\A}$ be some positive semidefinite matrix such that $\norm{\hat{\A}-\A}\leq \gap/4$. Then there is a unique leading eigenvector $\hat{\bv}_1$ of $\hat{\A}$ such that $\inner{\hat{\bv}_1,\bv}\geq 0$, and
	\[
	\left\|\hat{\bv}_1-\bv_1-(\lambda_1\I -\A)^\dagger(\hat{\A}-\A) \bv_1\right\|~\leq~ \frac{c\norm{\hat{\A}-\A}^2}{\gap^2},
	\]
	where $\dagger$ denotes the pseudo-inverse, and $c$ is a positive numerical constant.
\end{lemma}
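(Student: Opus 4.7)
The plan is to derive an approximate fixed-point identity for $\hat{\bv}_1$ out of the eigenvalue equation $\hat{\A}\hat{\bv}_1=\hat{\lambda}_1\hat{\bv}_1$ and then check that every leftover piece is $O(\|E\|^2/\gap^2)$, where $E:=\hat{\A}-\A$, $\hat{\lambda}_1$ denotes the top eigenvalue of $\hat{\A}$, and $\mu:=\hat{\lambda}_1-\lambda_1$. First I would dispose of the zeroth-order issues. Weyl's inequality gives $|\mu|\leq\|E\|\leq\gap/4$, so $\hat{\lambda}_1$ is separated from $\lambda_2(\hat{\A})$ by at least $\gap/2$, which makes $\hat{\bv}_1$ unique up to sign. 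The Davis--Kahan bound (\thmref{thm:DK}) applied to $\A,\hat{\A}$ yields $1-(\hat{\bv}_1^\top\bv_1)^2\leq 2\|E\|^2/\gap^2\leq 1/8$; together with the sign choice $\alpha:=\inner{\hat{\bv}_1,\bv_1}\geq 0$, this gives $\alpha\geq\sqrt{7/8}$ and hence $|1-\alpha|=(1-\alpha^2)/(1+\alpha)=O(\|E\|^2/\gap^2)$.

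Next, decompose $\hat{\bv}_1=\alpha\bv_1+\bz$ with $\bz\perp\bv_1$. Using $\A\bv_1=\lambda_1\bv_1$, the eigenvalue equation $(\A+E)\hat{\bv}_1=(\lambda_1+\mu)\hat{\bv}_1$ rearranges to $(\lambda_1\I-\A)\bz=E\hat{\bv}_1-\mu\hat{\bv}_1$. The operator $(\lambda_1\I-\A)^\dagger$ inverts $\lambda_1\I-\A$ on $\bv_1^\perp$ and annihilates $\bv_1$, so applying it to both sides is lossless and produces the exact first-order identity
$$
\bz \;=\; (\lambda_1\I-\A)^\dagger E\,\hat{\bv}_1 \;-\; \mu\,(\lambda_1\I-\A)^\dagger\hat{\bv}_1.
$$
Substituting this into $\hat{\bv}_1-\bv_1=(\alpha-1)\bv_1+\bz$ and adding/subtracting $(\lambda_1\I-\A)^\dagger E\bv_1$ yields the decomposition
$$
\hat{\bv}_1-\bv_1-(\lambda_1\I-\A)^\dagger E\bv_1 \;=\; (\alpha-1)\bv_1 \;+\; (\lambda_1\I-\A)^\dagger E(\hat{\bv}_1-\bv_1) \;-\; \mu\,(\lambda_1\I-\A)^\dagger\hat{\bv}_1.
$$

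It then remains to bound the three pieces. The first is $|1-\alpha|=O(\|E\|^2/\gap^2)$ by the opening paragraph. For the second, $\|(\lambda_1\I-\A)^\dagger\|=1/\gap$ together with $\|\hat{\bv}_1-\bv_1\|^2=2(1-\alpha)\leq 4\|E\|^2/\gap^2$ gives a bound of $O(\|E\|^2/\gap^2)$. For the third, $|\mu|\leq\|E\|$ by Weyl, and since $(\lambda_1\I-\A)^\dagger$ kills the $\bv_1$-component of $\hat{\bv}_1$ we have $\|(\lambda_1\I-\A)^\dagger\hat{\bv}_1\|=\|(\lambda_1\I-\A)^\dagger\bz\|\leq\|\bz\|/\gap=O(\|E\|/\gap^2)$, so after multiplying by $|\mu|$ this term is $O(\|E\|^2/\gap^2)$ as well. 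Summing the three estimates produces the claimed bound. The only real subtlety is making sure the pseudo-inverse step is lossless; the sign convention together with the orthogonality $\bz\perp\bv_1$ are precisely what buy us this, and once they are in place the argument is mechanical.
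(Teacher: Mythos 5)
Your proof is correct, and it takes a genuinely different and more elementary route than the paper. The paper treats $\hat{\A}=\A+\bE$ as a perturbation along the path $\A(t)=\A+t\bE$, invokes differentiability of the eigenvector map $\bv(t)$ (via Magnus 1985), computes $\bv''(t)$ using Golub--Pereyra's formula for the derivative of a pseudo-inverse, and bounds the Taylor remainder $\tfrac{1}{2}\int_0^1(1-t)^2\bv''(t)\,dt$. You instead extract an \emph{exact} algebraic identity from the eigenvalue equation $(\A+\bE)\hat{\bv}_1=(\lambda_1+\mu)\hat{\bv}_1$: projecting onto $\bv_1^\perp$ and applying $(\lambda_1\I-\A)^\dagger$ (lossless on $\bv_1^\perp$, which is what the orthogonal decomposition $\hat{\bv}_1=\alpha\bv_1+\bz$ buys) gives $\bz=(\lambda_1\I-\A)^\dagger(\bE\hat{\bv}_1-\mu\hat{\bv}_1)$ with no remainder at all; the ``error'' of the linearization appears only through the innocuous substitution $\hat{\bv}_1\mapsto\bv_1$ inside $\bE(\cdot)$, the scalar discrepancy $\alpha-1$, and the eigenvalue shift $\mu$, each of which is $O(\|\bE\|^2/\gap^2)$ by Weyl and Davis--Kahan. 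Your route avoids the differentiability machinery and the pseudo-inverse-derivative formula entirely, is shorter, and is self-contained given the two tools already stated in the paper (Theorems \ref{thm:matHoff}--\ref{thm:DK}). What the paper's approach buys is a framework that would generalize mechanically to higher-order expansions or to statements uniform over the whole path $t\in[0,1]$, but for the quadratic-remainder bound asserted here your argument is the cleaner one. One tiny stylistic note: you invoke Weyl to claim $\hat{\lambda}_1-\lambda_2(\hat{\A})\geq\gap/2$; strictly Weyl gives $\hat{\lambda}_1\geq\lambda_1-\|\bE\|$ and $\lambda_2(\hat{\A})\leq\lambda_2+\|\bE\|$, hence the gap drops by at most $2\|\bE\|\leq\gap/2$ --- the conclusion is right, just worth writing the $2\|\bE\|$ explicitly.
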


\begin{proof}
	The proof is based on viewing $\hat{\A}$ as an unbiased 
	perturbation of the matrix $\A$, and computing a Taylor expansion of 
	$\hat{\bv}_1$ around $\bv_1$. 
	For notational convenience, let $\bE=\hat{\A}-\A$, and define $\A(t)=\A+t\bE$ for $t\in [0,1]$. Also, define $\lambda(t)$ to be the leading eigenvalue of $\A(t)$. 
	
	First, we note that for any $t\in [0,1]$, $\A(t)$ has an eigengap of at least $\gap/2$ between its first two eigenvalues (since by Weyl's inequality, its eigenvalues are at most $\norm{t\bE}\leq \norm{\bE}\leq \gap/4$ different than $\A$, and we know that $\A$ has an eigengap of $\gap$). Therefore, the leading eigenvalue of $\A(t)$ is simple. This means that the function $\bv(t)$, which equals the leading eigenvector of $\A(t)$, is uniquely defined up to a sign. This sign will be chosen so that $\inner{\bv(t),\bv_1}\geq 0$, which makes $\bv(t)$ unique and well-defined\footnote{Note that ties are impossible, since that can only happen if $\inner{\bv(t),\bv_1}=0$, yet by applying the Davis-Kahan sin($\theta$) theorem (Theorem \ref{thm:DK}), $\sqrt{1-\inner{\bv(t),\bv_1}^2} \leq \frac{2\norm{\A(t)-\A}}{\gap}\leq \frac{2\norm{\bE}}{\gap}\leq \frac{1}{2}$.}.
	By Theorem 1 in \cite{magnus1985differentiating}, we have that both $\lambda(t)$ and $\bv(t)$ are infinitely differentiable at any $t\in [0,1]$, and satisfy\footnote{Formally speaking, the theorem only ensures $\bv(t),\lambda(t)$ exist and are infinitely differentiable in some open neighborhood of $t$. However, since the result holds for any $t\in [0,1]$, and the proof implies that these functions are unique in each such neighborhood (where the uniqueness of $\bv(t)$ holds once we fixed the sign as above), it follows that the same holds in all of $t\in [0,1]$.}
	\[
		\lambda'(t) = \bv(t)^\top \bE \bv(t)~~,~~ \bv'(t) = (\lambda(t)\I-\A(t))^{\dagger}\bE\bv(t)~.
	\]
	We will also need to bound the second derivative of $\bv(t)$. By the product rule and the equations above, this derivative equals
	\begin{align}
	\bv''(t)&= \frac{\partial}{\partial t}\left((\lambda(t)\I-\A(t))^{\dagger}\right)\bE\bv(t)+
	(\lambda(t)\I-\A(t))^{\dagger}\bE\frac{\partial}{\partial t}\bv(t)\notag\\
	&= \frac{\partial}{\partial t}\left((\lambda(t)\I-\A(t))^{\dagger}\right)\bE\bv(t)+
	(\lambda(t)\I-\A(t))^{\dagger}\bE(\lambda(t)\I-\A(t))^{\dagger}\bE\bv(t).\label{eq:v2}
	\end{align}
	To compute the derivative above, we apply the chain rule. The derivative of a pseudo-inverse $\B^{\dagger}$ of a matrix-valued function $\B=\B(t)$ with respect to $t$ (assuming $\B$ and hence its pseudo-inverse is symmetric for all $t$) is given by (see Theorem 4.3 in \cite{golub1973differentiation})
	\[
	-\B^\dagger\left( \frac{\partial}{\partial t} \B\right) \B^\dagger+\left(\B^\dagger\right)^2\left( \frac{\partial}{\partial t} \B\right) (I-\B\B^\dagger)+(\I-\B^\dagger \B)\left( \frac{\partial}{\partial t} \B\right)\left(\B^\dagger\right)^2~.
	\]
	This formula is true assuming the rank of $\B$ is constant in some open neighborhood of $t$. Applying this for $\B = \lambda(t)\I-\A(t)$ (which indeed has a fixed rank of $d-1$ by the eigengap assumption), noting that $\left\|\frac{\partial}{\partial t}(\lambda(t)\I-\A(t))\right\|=\left\|\bv(t)^\top \bE \bv(t)\I-\bE\right\|\leq 2\norm{\bE}$, and using the facts that $\norm{\bv(t)}=1$, $\norm{\I-\B^\dagger \B}\leq 1$,$\norm{\I-\B\B^\dagger}\leq 1$ and $\norm{(\lambda(t)\I-\A(t))^\dagger}\leq 2/\gap$ (since the smallest non-zero eigenvalue of $\lambda(t)\I-\A(t)$ is at least $\gap/2$), we have that
	\[
	\left\|\frac{\partial}{\partial t}\left((\lambda(t)\I-\A(t))^{\dagger}\right)\right\|~\leq~ \frac{24\cdot\norm{\bE}}{\gap^2}.
	\]
	Plugging this into \eqref{eq:v2}, and again using the fact that $\norm{(\lambda(t)\I-\A(t))^\dagger}\leq 2/\gap$, we get that
	\[
	\left\|\bv''(t)\right\| \leq \frac{c\norm{\bE}^2}{\gap^2}
	\]
	for some numerical constant $c$. 
	
	By a first-order Taylor expansion of $\bv(t)$ with an explicit remainder term\footnote{Since $\bv(t),\bv'(t),\bv''(t)$ are all vectors, this is a direct consequence of the standard Taylor expansion of the scalar function $t\mapsto v(t)_j$, mapping $t$ to the $j$-th coordinate of $\bv(t)$, using the fact that this mapping is differentiable to any order (see Theorem 1 in \cite{magnus1985differentiating}, and in particular twice continuously differentiable.},
	\[
	\bv(1) = \bv(0)+\bv'(0)+\frac{1}{2}\int_{t=0}^{1}(1-t)^2\bv''(t)dt~,
	\]
	which by the equations above and the definition of $\bv(t)$ implies that
	\[
	\hat{\bv}_1 = \bv_1+(\lambda_1 \I-\A)^\dagger \bE \bv_1 + \frac{1}{2}\int_{t=0}^{1}(1-t)^2\bv''(t)dt~.
	\]
	This implies
	\[
	\left\|\hat{\bv}_1-\bv_1-(\lambda_1\I-\A)^\dagger \bE \bv_1\right\|~\leq~ \frac{1}{2}\int_{t=0}^{1}(1-t)^2\norm{\bv''(t)}dt
	~\leq~ \frac{c\norm{\bE}^2}{2\lambda^2}\int_{t=0}^{1}(1-t)^2 dt,
	\]
	which is at most $c'\norm{\bE}^2/\lambda^2$ for some appropriate numerical constant $c'$. Plugging back $\bE=\hat{\A}-\A$, the result follows.
\end{proof}

Lemma \ref{lem:taylor} is central to the proof of the following Lemma, of which 
the proof of Theorem \ref{thm:ermavg} is an easy consequence. 

\begin{lemma}\label{lem:eigboundtaylor}
	The following two conditions hold with probability at least $1-\fprob-d\exp(-\gap^2 n/cb^2)$, for some numerical constants $c,c'>0$:
	\begin{itemize}
	\item The leading eigenvalue of every $\hat{\X}_i$ is simple, i.e., $\lambda_1(\hat{\X}_i) - \lambda_2(\hat{\X}_i) > 0$.
		\item Fixing $\bv_1$, there exist unique leading eigenvectors $\hat{\bv}^i_1,\ldots,\hat{\bv}^i_m$  of $\hat{\X}_1,\ldots,\hat{\X}_m$, such that $\max_i \norm{\hat{\bv}^i_1-\bv_1}\leq \frac{1}{4}$, and $$\Big\|\frac{1}{m}\sum_{i=1}^{m}\hat{\bv}^i_1-\bv_1\Big\| ~\leq~
	c'\Big(\frac{b^2\log(2dm/\fprob)}{\gap^2 n}+
	\sqrt{\frac{b^2\log(2dm/\fprob)}{\gap^2 mn}}\Big).$$
	\end{itemize}
\end{lemma}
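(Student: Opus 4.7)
The plan is to combine the perturbation expansion from Lemma \ref{lem:taylor} with two applications of Matrix Hoeffding (Theorem \ref{thm:matHoff}) at two different scales: once per-machine at sample size $n$, and once on the pooled covariance $\hat{\X}$ which sees all $mn$ samples. The per-machine scale will control the size of each perturbation $\hat{\X}_i - \X$ and of the Taylor remainder, while the pooled scale will control the dominant linear term after averaging.

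First, I would apply Matrix Hoeffding at level $\epsilon = \gap/4$ and union-bound over the $m$ machines. This yields an event $\mathcal{E}_1$, of failure probability at most $md\exp(-\gap^2 n/(256 b^2))$ (which fits in the stated $d\exp(-\gap^2n/cb^2)$ term after adjusting constants and absorbing $\log m$), on which $\norm{\hat{\X}_i - \X}\leq \gap/4$ for all $i$. On $\mathcal{E}_1$, Weyl's inequality gives each $\hat{\X}_i$ an eigengap of at least $\gap/2 > 0$, so its leading eigenvalue is simple, establishing the first bullet. Moreover, by the Davis–Kahan Theorem \ref{thm:DK} the corresponding leading eigenvector is unique up to sign, so fixing the sign via $\inner{\hat{\bv}^i_1,\bv_1}\geq 0$ makes each $\hat{\bv}^i_1$ well-defined and $\norm{\hat{\bv}^i_1-\bv_1}$ small; by taking the constant in the Hoeffding deviation slightly smaller one gets $\max_i\norm{\hat{\bv}^i_1-\bv_1}\leq 1/4$.

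Next, on the event $\mathcal{E}_1$, Lemma \ref{lem:taylor} applies with $\A=\X$ and $\hat{\A}=\hat{\X}_i$, giving
\[
\hat{\bv}^i_1 = \bv_1 + (\lambda_1 \I - \X)^\dagger (\hat{\X}_i-\X)\bv_1 + \bR_i,\qquad \norm{\bR_i}\leq \frac{c\,\norm{\hat{\X}_i-\X}^2}{\gap^2}.
\]
Averaging over $i$ and using linearity,
\[
\frac{1}{m}\sum_{i=1}^m \hat{\bv}^i_1 - \bv_1 \;=\; (\lambda_1 \I - \X)^\dagger(\hat{\X}-\X)\bv_1 \;+\; \frac{1}{m}\sum_{i=1}^m \bR_i.
\]
For the linear term, $\norm{(\lambda_1\I-\X)^\dagger}\leq 1/\gap$, so its norm is at most $\norm{\hat{\X}-\X}/\gap$. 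Now invoke Matrix Hoeffding a second time, but applied to the pooled empirical covariance $\hat{\X}=\frac{1}{mn}\sum_{i,j}\x_j^{(i)}\x_j^{(i)\top}$, which is the average of $mn$ i.i.d.\ samples from $\mD$. Choosing $\epsilon = O\!\bigl(\sqrt{b^2\log(2d/\fprob)/(mn)}\bigr)$ yields $\norm{\hat{\X}-\X}=O\!\bigl(\sqrt{b^2\log(2d/\fprob)/(mn)}\bigr)$ with failure probability at most $\fprob/2$, contributing the $\sqrt{b^2\log(2dm/\fprob)/(\gap^2 mn)}$ term in the target bound. For the remainder, a separate application of Matrix Hoeffding at level $\epsilon=O\!\bigl(\sqrt{b^2\log(2dm/\fprob)/n}\bigr)$ union-bounded over machines gives, with failure probability at most $\fprob/2$, that $\max_i\norm{\hat{\X}_i-\X}^2 = O\!\bigl(b^2\log(2dm/\fprob)/n\bigr)$, so $\bigl\|\frac{1}{m}\sum_i \bR_i\bigr\|\leq \max_i \norm{\bR_i}$ is of order $b^2\log(2dm/\fprob)/(\gap^2 n)$, the other term in the target bound.

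Finally, I would union-bound the failure events $\mathcal{E}_1$ together with the two Matrix Hoeffding events above; their total probability is $\fprob + d\exp(-\gap^2 n/cb^2)$ as claimed (after absorbing logarithmic factors and $m$ into the constant). The main obstacle is bookkeeping: ensuring that the same event simultaneously makes Lemma \ref{lem:taylor} applicable on every machine, controls the per-machine remainders through a union bound, and yet leaves the dominant linear term to be controlled through the finer $mn$-sample concentration on $\hat{\X}$ rather than through a loose union bound at per-machine scale. The key point that makes the whole scheme work is the linearity of the Taylor expansion in $\hat{\X}_i - \X$, which makes the leading-order errors average down at the optimal statistical rate $1/\sqrt{mn}$ rather than $1/\sqrt{n}$.
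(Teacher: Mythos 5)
Your proposal is correct and follows essentially the same route as the paper's proof: decompose each $\hat{\bv}^i_1$ via the Taylor expansion of Lemma~\ref{lem:taylor}, let the linear term $(\lambda_1\I-\X)^\dagger(\hat{\X}-\X)\bv_1$ concentrate at the $1/\sqrt{mn}$ rate using Matrix Hoeffding on the pooled covariance, and control the quadratic remainders per machine at scale $1/n$ via a union bound. The only cosmetic differences are that the paper works with the deviation threshold $\gap/12$ (you start from $\gap/4$ and note the constant may need tightening) and invokes a Davis--Kahan variant that directly bounds eigenvector distance, whereas you derive the same $1/4$ bound from Theorem~\ref{thm:DK}.
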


\begin{proof}
	Using the matrix Hoeffding inequality (Theorem \ref{thm:matHoff}) and a union bound, we that
	\begin{equation}\label{eq:badev}
	\Pr\left(\exists i,~~\norm{\hat{\X}_i-\X}> \frac{\gap}{12}\right)~\leq~
	md\exp\left(-\frac{\gap^2 n}{c' b^2}\right)
	\end{equation}
	for some constant $c'>0$. Thus, with high probability, $\max_i \norm{\hat{\X}_i-\X}\leq \gap/12$. By Weyl's inequality, it follows that the eigenvalues of $\X$ and $\hat{\X}_i$ are at most $\gap/12$ apart, and since $\X$ has an eigengap of $\gap$ between its two leading eigenvalues, it follows that $\hat{\X}_i$ has an eigengap of at least $\gap-\gap/12-\gap/12 > 0$, which proves the first part of the lemma. To handle the second part, note that by a variant of the Davis-Kahan sin$\theta$ theorem (see Corollary 1 in \cite{yu2015useful}), if $\max_i \norm{\hat{\X}_i-\X}\leq \gap/12$, then the leading eigenvectors $\hat{\bv}^i_1$ of $\hat{\X}_i$ (after choosing the sign appropriately, i.e. $\inner{\hat{\bv}^i_1,\bv_1}\geq 0$) are all at a distance of at most $1/4$ from $\bv_1$. Moreover, by \lemref{lem:taylor},
	\[
	\frac{1}{m}\sum_{i=1}^{m}\left\|\hat{\bv}^i_1-\bv_1- (\lambda_1\I-\X)^\dagger(\hat{\X}_i-\X)\bv_1\right\| ~\leq~ \frac{c}{\gap^2}\cdot\frac{1}{m}\sum_{i=1}^{m}\norm{\hat{\X}_i-\X}^2.
	\]
	By the triangle inequality, this implies
	\[
	\left\|\frac{1}{m}\sum_{i=1}^{m}\hat{\bv}^i_1-\bv_1- (\lambda_1\I-\X)^\dagger\left(\frac{1}{m}\sum_{i=1}^{m}(\hat{\X}_i-\X)\right)\bv_1\right\| ~\leq~ \frac{c}{\gap^2}\cdot\frac{1}{m}\sum_{i=1}^{m}\norm{\hat{\X}_i-\X}^2,
	\]
	and therefore (as $\norm{\bv_1}=1$),
	\begin{equation}\label{eq:vhatv}
	\left\|\frac{1}{m}\sum_{i=1}^{m}\hat{\bv}^i_1-\bv_1\right\| ~\leq~
	\frac{c}{\gap^2}\cdot\frac{1}{m}\sum_{i=1}^{m}\norm{\hat{\X}_i-\X}^2+\left\|(\lambda_1\I-\X)^\dagger\right\|\cdot\left\|\frac{1}{m}\sum_{i=1}^{m}\hat{\X}_i-\X\right\|.
	\end{equation}
	Since $\X$ has an eigengap of $\gap$, it follows that the minimal non-zero eigenvalue of $\lambda_1\I-\X$ is at least $\gap$, and therefore $\left\|(\lambda_1\I-\X)^\dagger\right\|\leq 1/\gap$. As to the other terms, recall that $\hat{\X}_i$ is the average of $n$ i.i.d. matrices with mean $\X$, and $\frac{1}{m}\sum_{i=1}^{m}\hat{\X}_i$ is the average of $mn$ such i.i.d. matrices. Thus, by a matrix Hoeffding inequality (Theorem \ref{thm:matHoff}) and a union bound, it holds with probability at least $1-\fprob$ that
	\[
	\forall i,~~\norm{\hat{\X}_i-\X} ~\leq~ c_1\sqrt{\frac{b^2\log(2dm/\fprob)}{n}}
	\]
	as well as
	\[
	\left\|\frac{1}{m}\sum_{i=1}^{m}\hat{\X}_i-\X\right\|~\leq~ c_1 \sqrt{\frac{b^2\log(2dm/\fprob)}{mn}}
	\]
	for some constant $c_1$. Combining this with \eqref{eq:badev} using a union bound, and plugging into \eqref{eq:vhatv}, it follows that with probability at least $1-\fprob-d\exp\left(-\frac{\gap^2 n}{c' b^2}\right)$, 
	\[
	\left\|\frac{1}{m}\sum_{i=1}^{m}\hat{\bv}^i_1-\bv_1\right\| ~\leq~
	\frac{cc_1^2 b^2\log(2dm/\fprob)}{\gap^2 n}+c_1\sqrt{\frac{b^2\log(2dm/\fprob)}{\gap^2 mn}}.
	\]
	Slightly simplifying, the result follows.
\end{proof}

We can now complete the proof of Theorem \ref{thm:ermavg}.

\begin{proof}[Proof of \thmref{thm:ermavg}]
	The proof is an easy consequence of \lemref{lem:eigboundtaylor}. Assuming the events in the lemma occur, we have that the leading eigenvalues of $\X$ as well as $\hat{\X}_i$ for all $i$ are simple, hence the leading eigenvectors are all unique up to a sign. In particular, let $\bv_1$ be the eigenvector closest to $\tilde{\bw}_1=\hat{\bw}_1$, with ties broken arbitrarily, so that $\norm{\hat{\bw}_1-\bv_1}\leq \norm{\hat{\bw}_1+\bv_1}$. This implies that $\hat{\bw}_1=\hat{\bv}^1_1$ (where $\hat{\bv}^1_1$ is as defined in \lemref{lem:eigboundtaylor}), since otherwise, by the inequality above, we would get $\norm{-\hat{\bv}^1_1-\bv_1}\leq \norm{-\hat{\bv}^1_1+\bv_1}$,
	which implies in turn $\inner{\hat{\bv}^1_1,\bv_1}\leq 0$, contradicting the fact that $\norm{\hat{\bv}^1_1-\bv_1}=\sqrt{2-2\inner{\hat{\bv}_1,\bv_1}}$ is at most $1/4$ by \lemref{lem:eigboundtaylor}.
	
	Having established that $\hat{\bw}_1=\hat{\bv}^1_1$, we note that by \lemref{lem:eigboundtaylor} and the triangle inequality, for any $i>1$,
	\[
	\norm{\hat{\bv}^i_1-\hat{\bv}^1_1} \leq \frac{1}{2} ~~~\text{and therefore}~~~ \norm{\hat{\bv}^i_1-\hat{\bw}_1}\leq \frac{1}{2}.
	\]
	As $\hat{\bv}^i_1,\hat{\bw}_1$ are unit vectors, this implies that $\norm{\hat{\bv}^i_1-\hat{\bw}_1}<\norm{-\hat{\bv}^i_1-\hat{\bw}_1}$. 
	Since for any $i$, we have $\hat{\bw}_i\in \{-\hat{\bv}^i_1,\hat{\bv}^i_1\}$, with the sign chosen based on which vector is closest to $\hat{\bw}_1$, it follows that $\hat{\bw}_i=\hat{\bv}^i_1$ for all $i$. Applying \lemref{lem:eigboundtaylor} with $\hat{\bw}_i=\hat{\bv}^i_1$, we get that with probability at least $1-\fprob-d\exp\left(-\gap^2 n/c b^2\right)$,
	\begin{eqnarray*}	
	\Big\|
	\frac{1}{m}\sum_{i=1}^{m}\hat{\bw}_i-\bv_1
	\Big\| ~\leq ~
c'\Big(\frac{b^2\log(2dm/\fprob)}{\gap^2 n}
+\sqrt{\frac{b^2\log(2dm/\fprob)}{\gap^2 mn}}\Big).
	\end{eqnarray*}
	Squaring both sides and using the fact that $(x+y)^2\leq 2x^2+2y^2$, we get that
	\begin{eqnarray}\label{eq:wvend}
	\Big\|
	\frac{1}{m}\sum_{i=1}^{m}\hat{\bw}_i-\bv_1
	\Big\|^2~ \leq ~
	2(c')^2\Big(\frac{b^4\log^2(2dm/\fprob)}{\gap^4 n^2}
	+\frac{b^2\log(2dm/\fprob)}{\gap^2 mn}\Big).
	\end{eqnarray}
	This holds with probability at least $1-\fprob-d\exp\left(-\gap^2 n/c b^2\right)$. To simplify things a bit, note that we can assume $d\exp(-\gap^2n/cb^2)\leq \fprob$ without loss of generality, since otherwise the bound in the displayed equation above is at least a constant and therefore trivially true (holds with probability $1$) if we make the constant $c'$ sufficiently large. Therefore, we can argue that \eqref{eq:wvend} (with an appropriate $c'$) holds with probability at least $1-2\fprob$. Absorbing the $2$ factor into the $\fprob$ term, slightly increasing $c'$ appropriately, and simplifying a bit, the result finally follows from the simple observation that
\begin{eqnarray*}
(\v_1^{\top}\w)^2 &=& \frac{1}{2}\left({2 - \Vert{\w-\v_1}\Vert^2}\right) \geq 
\frac{1}{2}\Big({2 - 2\Big\|{\w-\frac{1}{m}\sum_{i=1}^{m}\hat{\bw}_i}\Big\|^2 - 2\Big\|\frac{1}{m}\sum_{i=1}^{m}\hat{\bw}_i-\bv_1\Big\|^2}\Big) \\
&\geq&  1 - 2\Big\|\frac{1}{m}\sum_{i=1}^{m}\hat{\bw}_i-\bv_1\Big\|^2,
\end{eqnarray*}
where the first inequality follows from the triangle inequality and the inequality $(a+b)^2 \leq 2a^2+2b^2$, and the second inequality follows since $\v_1$ is a unit vector, and by definition, $\w$ is the unit vector closest to $\frac{1}{m}\sum_{i=1}^{m}\hat{\bw}_i$.
\end{proof}

\subsection{Lower Bound for Sign Fixing}

We now show that the result of Theorem \ref{thm:ermavg} is tight up to 
poly-logarithmic factors and cannot be improved in general:

\begin{theorem}\label{thm:signFixLB}
For any $\gap \in (0,1)$ and $d>1$, there exist a distribution over vectors in $\reals^d$ (of norm at most a universal constant) with eigengap $\delta$ in the covariance matrix, such that for any number of machines $m$ and for per-machine sample size any $n$ sufficiently larger than $1/\gap^2$, the aggregated vector $\bar{\bv}_1=\frac{1}{m}\sum_{i=1}^{m}\hat{\bv}_1^{(i)}$ (even after sign fixing with the population eigenvector $\v_1$) satisfies
	\[
	\E\left[{1 - \left\langle{\frac{\bar{\v}_1}{\Vert{\bar{\v}_1}\Vert}, 
	\be_1}\right\rangle^2}\right] ~=~ 
	\Omega\left(\frac{1}{\delta^2mn}+ 
	\frac{1}{\gap^4n^2}\right)
	\]
\end{theorem}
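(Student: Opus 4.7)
The plan is to exhibit an explicit distribution (in two effective dimensions of $\reals^d$) for which each local sample eigenvector $\hat{\bv}_1^{(i)}$ carries a deterministic bias of magnitude $\Theta(1/(\gap^2 n))$ in a direction orthogonal to $\bv_1$, in addition to the usual mean-zero fluctuation of magnitude $\Theta(1/\sqrt{\gap^2 n})$. Since this bias is identical across all $m$ machines it survives averaging and produces the $\Omega(1/(\gap^4 n^2))$ term, while the independent fluctuations contribute $\Omega(1/(\gap^2 mn))$. Conceptually this is the converse of the upper-bound analysis: the first-order Taylor term in the expansion of $\hat{\bv}_1^{(i)}$ around $\bv_1$ used in Lemma~\ref{lem:taylor} is linear in $\hat{\X}_i-\X$ and hence has mean zero, but the second-order term is quadratic in $\hat{\X}_i-\X$ and generically has nonzero mean orthogonal to $\bv_1$; we only need a distribution that activates this bias.

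Concretely, I would take $\x=(a\epsilon,\,b\epsilon Y,0,\ldots,0)$, where $\epsilon$ is a Rademacher sign independent of a two-point variable $Y$ with $\Pr(Y=-1)=2/3$ and $\Pr(Y=2)=1/3$, and $a,b$ chosen so that $a^2-2b^2=\gap$ and $\x$ has norm at most a universal constant (e.g.\ $a=1$, $b=\sqrt{(1-\gap)/2}$, with an analogous rescaled variant for $\gap$ close to $1$). A direct check gives $\E[Y]=0$, $\E[Y^2]=2$, $\E[Y^3]=2$, so $\X=\mathrm{diag}(a^2,2b^2,0,\ldots,0)$ has leading eigenvector $\bv_1=\be_1$ and eigengap exactly $\gap$, and $\E[x_1 x_2]=0$ while the mixed fourth moment $\E[x_1 x_2^3]=2ab^3$ is nonzero. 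Next I would apply the second-order Taylor expansion underlying Lemma~\ref{lem:taylor},
\[
\hat{\bv}_1^{(i)} ~=~ \bv_1 \,+\, (\lambda_1\I-\X)^{\pinv}(\hat{\X}_i-\X)\bv_1 \,+\, \tfrac12\,\bv''_i(0) \,+\, R_i,
\]
with remainder $R_i$ of norm $O(\|\hat{\X}_i-\X\|^3/\gap^3) = o(1/(\gap^2 n))$ on the event $\{\|\hat{\X}_i-\X\|\leq \gap/4\}$, which holds with probability at least $1-d\exp(-\Omega(\gap^2 n))$ by Theorem~\ref{thm:matHoff} under the hypothesis $n\gg 1/\gap^2$. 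A direct calculation via the explicit formula for $\bv''_i(0)$ from the proof of Lemma~\ref{lem:taylor} shows that its $\be_2$-coordinate has expectation of the form $(\E[x_1 x_2^3]-\E[x_1 x_2]\E[x_2^2])/(n\gap^2)$ up to an $O(1)$ numerical factor, which by our choice of distribution equals $\Theta(1/(n\gap^2))$. The $\be_2$-coordinate of the linear term, namely $(\hat{\X}_i-\X)_{21}/\gap$, has mean zero and variance $\var(x_1 x_2)/(n\gap^2)=\Theta(1/(n\gap^2))$.

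Averaging over the $m$ machines, independence of the linear parts gives $\var[(\bar{\bv}_1)_2]=\Theta(1/(mn\gap^2))$, while the quadratic bias is identical across machines and so contributes $\E[(\bar{\bv}_1)_2]=\Theta(1/(n\gap^2))$. The result then follows from the elementary inequality
\[
1-\bigl\langle \bar{\bv}_1/\|\bar{\bv}_1\|,\,\bv_1\bigr\rangle^2 ~\geq~ (\bar{\bv}_1)_2^2,
\]
valid because $\|\bar{\bv}_1\|\leq 1$ as an average of unit vectors, combined with the decomposition $\E[(\bar{\bv}_1)_2^2]=\var[(\bar{\bv}_1)_2]+(\E[(\bar{\bv}_1)_2])^2=\Omega(1/(\gap^2 mn)+1/(\gap^4 n^2))$. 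The main technical obstacle is bookkeeping the Taylor remainder on the complementary low-probability event $\{\|\hat{\X}_i-\X\|> \gap/4\}$, where Lemma~\ref{lem:taylor} no longer applies and one only has the trivial unit-norm bound on $\hat{\bv}_1^{(i)}$; this is resolved by pairing the exponentially small probability $d\exp(-\Omega(\gap^2 n))$ from Theorem~\ref{thm:matHoff} with the worst-case $O(1)$ contribution, using precisely the hypothesis $n\gg 1/\gap^2$ to make that event negligible compared to the target rate. A minor additional subtlety is that the sign fixing with $\bv_1$ is automatic in this regime (by Davis-Kahan each $\hat{\bv}_1^{(i)}$ is within $1/4$ of $\bv_1$ with high probability), so it does not interfere with the bias calculation.
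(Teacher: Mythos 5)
Your proposal is correct in its key ideas and takes a genuinely different route from the paper, which is worth spelling out. The paper splits the bound into two lemmas with two \emph{separate} constructions: a symmetric Rademacher-noise distribution for the $\Omega(1/(\gap^2 mn))$ term and an asymmetric two-point noise for the $\Omega(1/(\gap^4 n^2))$ bias term, the latter analyzed by Taylor-expanding the \emph{explicit $2\times 2$ eigenvector formula} $g(t)$ (Eq.~\eqref{eq:signFixLB2:2}) and then applying Jensen's inequality. You instead use a single construction and obtain both terms simultaneously from the variance decomposition $\E[(\bar{\v}_1)_2^2]=\var[(\bar{\v}_1)_2]+(\E[(\bar{\v}_1)_2])^2$, lower-bounded through $1-\langle\bar{\v}_1/\|\bar{\v}_1\|,\v_1\rangle^2 \geq (\bar{\v}_1)_2^2$ (using $\|\bar{\v}_1\|\leq 1$). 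You also compute the bias via the general matrix-perturbation machinery underlying Lemma~\ref{lem:taylor} rather than a coordinate formula, correctly identifying that the quadratic Taylor term $\frac12\bv''(0)$ has $\be_2$-expectation $\Theta(\E[\bE_{21}(\bE_{22}-\bE_{11})]/\gap^2)=\Theta(\E[x_1x_2^3]/(n\gap^2))$; your distribution is designed precisely to make this nonzero while keeping $\E[x_1x_2]=0$, and is in substance a Rademacher-symmetrized variant of the paper's construction for Lemma~\ref{lem:signFixLB:2}. This unified route is cleaner conceptually, but incurs two costs you should be explicit about: (i) Lemma~\ref{lem:taylor} as stated only gives a first-order expansion with an $O(\|\bE\|^2/\gap^2)$ remainder, so you must extend the proof to bound $\bv'''(t)$ and obtain a genuine second-order expansion with $O(\|\bE\|^3/\gap^3)$ remainder (a routine but nontrivial extension the paper never needs, since it works directly with the $2\times 2$ formula); and (ii) besides the bad-event bookkeeping you flag, you also need to verify that the cross-covariance between the linear and quadratic terms, which scales as $\Theta(1/(n^2\gap^3))$, does not cancel the linear-term variance $\Theta(1/(n\gap^2))$ --- this holds under the hypothesis $n\gg 1/\gap^2$ but is an extra step compared to the paper's Jensen argument, which sidesteps variance considerations entirely for the bias term. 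Finally, as you note yourself, the parameterization $a=1,b=\sqrt{(1-\gap)/2}$ degenerates as $\gap\to 1$; the variant $a=\sqrt{1+\gap},\ b=1/\sqrt{2}$ keeps $b^3=\Theta(1)$ uniformly, matching the paper's construction more closely.
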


The proof is given in the appendix.

\section{A Multi-round Algorithm based on Shift-and-Invert 
Iterations}\label{sec:multiRoundAlgs}

In this section we move on to consider distributed algorithms that perform multiple communication rounds. The main motivation, beyond improving some poly-logarithmic factors in the estimation error, is to obtain a result that does not require the per-machine sample size $n$ to grow with the number of machines $m$, as in the result of Theorem \ref{thm:ermavg}.

Towards this end we consider the use of the Shift-and-Invert meta-algorithm, originally described in  \cite{Garber15,Garber16}, to explicitly solve the centralized ERM objective, i.e., find a unit vector that is an approximate solution to $\max_{\v:\Vert{\v}\Vert =1}\v^{\top}\hat{\X}\v$.

Throughout this section we let $\hat{\lambda}_1 ,\hat{\delta}$ denote the leading eigenvalue and eigengap of $\hat{\X}$, respectively. Also, we assume without loss of generality that $b=1$ (i.e., all data points lie in the unit Euclidean ball).


Since our approach is to approximate the population risk by approximating the empirical risk, we state the following simple lemma for completeness (a proof is given in the appendix).
\begin{lemma}[Risk of approximated-ERM for PCA]\label{lem:approxERM4PCA}
Let $\w$ be a unit vector such that $(\w^{\top}\hat{\v}_1)^2 \geq 1- \epsilon$, for some fixed $\epsilon > 0$, where $\hat{\v}_1$ is the leading eigenvector of $\hat{\X}$. Then it holds that $1 - (\w^{\top}\v_1)^2 \leq 1 - (\w^{\top}\hat{\v}_1)^2 + \sqrt{2\epsilon}$.
\end{lemma}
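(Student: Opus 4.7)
The plan is to convert the assumption $(\w^{\top}\hat{\v}_1)^2 \geq 1-\epsilon$, which is a closeness statement between $\w$ and $\hat{\v}_1$ as vectors on the sphere, into a corresponding closeness statement for their squared correlations with the population leading eigenvector $\v_1$. The central identity I will use is $1-(\u^{\top}\v_1)^2 = \|(\I-\v_1\v_1^{\top})\u\|^2$ for any unit vector $\u$, which recasts the quantity of interest as a squared Euclidean norm and hence puts it in a form where the triangle inequality is directly applicable.

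First, I would observe that replacing $\w$ by $-\w$ leaves both the hypothesis and the conclusion invariant, so we may choose the sign of $\w$ so that $\w^{\top}\hat{\v}_1 \geq 0$, giving $\w^{\top}\hat{\v}_1 \geq \sqrt{1-\epsilon}$. Combined with $\|\w-\hat{\v}_1\|^2 = 2-2\w^{\top}\hat{\v}_1$ and the elementary inequality $1-\sqrt{1-\epsilon}\leq \epsilon$, this yields $\|\w-\hat{\v}_1\|\leq \sqrt{2\epsilon}$. Next, set $P = \I-\v_1\v_1^{\top}$, note $\|P\|\leq 1$, and write $P\w = P\hat{\v}_1 + P(\w-\hat{\v}_1)$. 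Taking norms and applying the triangle inequality gives
$$
\sqrt{1-(\w^{\top}\v_1)^2} ~=~ \|P\w\| ~\leq~ \|P\hat{\v}_1\| + \|\w-\hat{\v}_1\| ~\leq~ \sqrt{1-(\hat{\v}_1^{\top}\v_1)^2} + \sqrt{2\epsilon}.
$$
Squaring both sides and using $\sqrt{1-(\hat{\v}_1^{\top}\v_1)^2}\leq 1$ to absorb the cross term produces the natural bound $1-(\w^{\top}\v_1)^2 \leq 1-(\hat{\v}_1^{\top}\v_1)^2 + O(\sqrt{\epsilon})$.

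I do not anticipate any substantive obstacle; the argument is a three-line manipulation once the projection identity is in hand. The one subtlety is cosmetic: as printed, the right-hand side of the lemma contains $(\w^{\top}\hat{\v}_1)^2$, which I interpret as a typo for $(\hat{\v}_1^{\top}\v_1)^2$, since otherwise the bound is vacuous (there is no assumed relationship between $\hat{\v}_1$ and $\v_1$ on the right-hand side) and is not of a form useful for the subsequent analysis, which needs to compare the population risk of $\w$ to that of the centralized ERM $\hat{\v}_1$. Matching the exact stated constant $\sqrt{2\epsilon}$ rather than a slightly larger multiple of $\sqrt{\epsilon}$ will require slightly tighter book-keeping of the cross term, but introduces no new idea beyond what is sketched above.
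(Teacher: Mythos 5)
Your reading of the statement as containing a typo is correct: the right-hand side should be $1-(\hat{\v}_1^{\top}\v_1)^2 + \sqrt{2\epsilon}$, not $1-(\w^{\top}\hat{\v}_1)^2 + \sqrt{2\epsilon}$ (the paper's own proof of the lemma, incidentally, carries the same $\w\leftrightarrow\hat{\v}_1$ slip in its last display).

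Your route is genuinely different from the paper's. You work with the vector-norm identity $1-(\u^{\top}\v_1)^2=\|(\I-\v_1\v_1^{\top})\u\|^2$, apply a triangle inequality in the \emph{square root} of the target quantity, and then square. The paper instead writes $(\w^{\top}\v_1)^2 = \langle \w\w^{\top},\v_1\v_1^{\top}\rangle$ (trace inner product), decomposes $\w\w^{\top}=\hat{\v}_1\hat{\v}_1^{\top}+(\w\w^{\top}-\hat{\v}_1\hat{\v}_1^{\top})$, applies Cauchy--Schwarz, and invokes the exact identity $\|\w\w^{\top}-\hat{\v}_1\hat{\v}_1^{\top}\|_F=\sqrt{2(1-(\w^{\top}\hat{\v}_1)^2)}\leq\sqrt{2\epsilon}$. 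Both routes prove $1-(\w^{\top}\v_1)^2\leq 1-(\hat{\v}_1^{\top}\v_1)^2+O(\sqrt{\epsilon})$, which is all the downstream use of this lemma actually requires.

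However, there is a genuine gap in your final remark. You assert that matching the stated constant $\sqrt{2\epsilon}$ is merely ``slightly tighter book-keeping,'' and that is not so. Your argument yields only the one-sided inequality $a\leq b+c$ with $a=\|(\I-\v_1\v_1^{\top})\w\|$, $b=\|(\I-\v_1\v_1^{\top})\hat{\v}_1\|$, $c=\|\w-\hat{\v}_1\|$. The best one can extract from this by squaring is
\begin{equation*}
a^2 \;\leq\; b^2+2bc+c^2 \;\leq\; b^2 + 2c + c^2,
\end{equation*}
and nothing in the hypotheses rules out $b$ being arbitrarily close to $1$, so the cross term $2bc$ is irreducibly of order $2c$. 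Even with the sharp estimate $c^2=2(1-\w^{\top}\hat{\v}_1)\leq 2(1-\sqrt{1-\epsilon})\approx\epsilon$, this gives leading term $2\sqrt{\epsilon}=\sqrt{2}\cdot\sqrt{2\epsilon}$, a loss by a factor $\sqrt{2}$ that cannot be removed by rearranging $a\leq b+c$. Getting the stated constant requires bounding the \emph{difference of squares} directly rather than their square roots, e.g.\ the paper's step $\langle\hat{\v}_1\hat{\v}_1^{\top}-\w\w^{\top},\v_1\v_1^{\top}\rangle\leq\|\w\w^{\top}-\hat{\v}_1\hat{\v}_1^{\top}\|_F$; that is a different decomposition, not tighter arithmetic on the one you set up. (In fact, since $\hat{\v}_1\hat{\v}_1^{\top}-\w\w^{\top}$ is rank-two, symmetric, and traceless with eigenvalues $\pm\sqrt{1-(\w^{\top}\hat{\v}_1)^2}$, one can even improve the constant to $\sqrt{\epsilon}$ by using its \emph{spectral} norm in place of the Frobenius norm, but neither you nor the paper needs this.)
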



\subsection{The Shift-and-Invert meta-algorithm} 

The Shift-and-Invert algorithm \cite{Garber15,Garber16} efficiently reduces the 
problem of computing the leading eigenvector of a positive semidefinite matrix 
$\hat{\X}$ to that of approximately-solving a poly-logarithmic number of linear 
systems, i.e., finding approximate minimizers of convex quadratic optimization 
problems of the form
\begin{eqnarray}\label{eq:quadProb}
\min_{\z\in\reals^d}\{F_{\lambda,\w}(\z) :=  \frac{1}{2}\z^{\top}(\lambda\I - \hat{\X})\z - \z^{\top}\w\},
\end{eqnarray}
where $\lambda > \lambda_1(\hat{\X})$ is a shifting parameter. The algorithm is essentially based on applying \textit{power iterations} to a shifted and inverted matrix $(\lambda\I-\hat{\X})^{-1}$, where the shifting parameter $\lambda$ is carefully chosen.
The algorithm that implements this reduction, originally described in \cite{Garber15},  is given below (see Algorithm \ref{alg:convexEV}).

\begin{algorithm}
\caption{\textsc{Shift-and-Invert Power Method}}
\label{alg:convexEV}
\begin{algorithmic}[1]
\STATE Input: estimate $\tilde{\delta}$ for the gap $\hat{\gap}$,  accuracy $\epsilon\in(0,1)$, failure probability $p$
\STATE Set: $m_1 \gets \lceil{8\ln\left({144d/p^2}\right)}\rceil,  m_2 \gets \lceil{\frac{3}{2}\ln\left({\frac{18d}{p^2\epsilon}}\right)}\rceil$
\STATE Set: $\tilde{\epsilon} \gets \min\Big\{\frac{1}{16}\left({\tilde{\delta}/8}\right)^{m_1+1}, \frac{\epsilon}{4}\left({\tilde{\delta}/8}\right)^{m_2+1}\Big\} $
\STATE Set: $\lambda_{(0)} \gets 1+ \tilde{\delta}~, ~~ \hat{\w}_0$ $\gets$ random unit vector, $~ s \gets 0$
\REPEAT
\STATE $s \gets s+1~,~~ \M_s \gets (\lambda_{(s-1)}\I-\hat{\X})$
\FOR{$t=1...m_1$}
\STATE Find an approximate minimizer -  $\hat{\w}_t$  of $F_{\lambda_{(s-1)},\hat{\w}_{t-1}}(\z)$ such that $\Vert{\hat{\w}_t - \M_s^{-1}\hat{\w}_{t-1}}\Vert \leq \tilde{\epsilon}$
\ENDFOR
\STATE $\w_s \gets \hat{\w}_{m_1}/\Vert{\hat{\w}_{m_1}}\Vert$
\STATE Find an approximate minimizer - $\v_s$ of $F_{\lambda_{(s-1)}, \w_s}(\z)$ such that $\Vert{\v_s - \M_s^{-1}\w_s}\Vert \leq \tilde{\epsilon}$
\STATE $\Delta_s \gets \frac{1}{2}\cdot\frac{1}{ \w_s^{\top}\v_s - \tilde{\epsilon}}~, ~~\lambda_{(s)} \gets \lambda_{(s-1)} - \frac{\Delta_s}{2}$
\UNTIL{$\Delta_s \leq \tilde{\delta}$}
\STATE $\lambda_{(f)} \gets \lambda_{(s)}~,~~ \M_f \gets (\lambda_{(f)}\I-\hat{\X})$
\FOR{$t=1...m_2$}
\STATE Find an approximate minimizer - $\hat{\w}_t$ of $F_{\lambda_{(f)}, \hat{\w}_{t-1}}(\z)$
such that $\Vert{\hat{\w}_t - \M_f^{-1}\hat{\w}_{t-1}}\Vert \leq \tilde{\epsilon}$
\ENDFOR
\STATE Return: $\w_f \gets \hat{\w}_{m_2}/\Vert{\hat{\w}_{m_2}}\Vert$
\end{algorithmic}
\end{algorithm}

\begin{lemma}[Efficient reduction of top eigenvector to convex optimization;  
originally Theorem 4.2 in \cite{Garber15}]\label{lem:convexEV}
Suppose that $\hat{\delta} := \lambda_1(\hat{\X}) - \lambda_2(\hat{\X}) > 0$ and suppose that the estimate $\tilde{\gap}$ in Algorithm \ref{alg:convexEV} satisfies $\tilde{\gap}\in[\hat{\gap}/2, 3\hat{\gap}/4]$. Then, with probability at least $1-\fprob$, Algorithm \ref{alg:convexEV} finds a unit vector $\w_f$ such that $(\w_f^{\top}\hat{\v}_1)^2 \geq 1-\epsilon$, and the total number of optimization problems of the form \eqref{eq:quadProb} solved during the run of the algorithm, is upper bounded by $O\left({\ln(d/p)\ln(\hat{\gap}^{-1})+\ln\left({\frac{d}{p\epsilon}}\right)}\right)$.
Moreover, throughout the run of the algorithm it holds that $1+\hat{\delta} \geq \lambda_{(s)} - \hat{\lambda}_1 = \Omega(\hat{\delta})$.
\end{lemma}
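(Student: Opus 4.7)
The plan is to invoke the structural analysis of \cite{Garber15}, so the proof is essentially a reference to Theorem 4.2 there, but I would re-derive the key invariants and iteration counts to verify the stated bound. The algorithm has two stages: an outer loop that refines the shift $\lambda_{(s)}$ so that it descends toward $\hat{\lambda}_1$ from above while remaining bounded away from it by $\Theta(\hat{\delta})$, and a final power-iteration stage on the shifted-and-inverted operator $\M_f^{-1}=(\lambda_{(f)}\I-\hat{\X})^{-1}$ that extracts a unit vector with $(\w_f^\top\hat{\v}_1)^2\geq 1-\epsilon$.

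First I would establish, by induction on $s$, the invariant $\lambda_{(s)}-\hat{\lambda}_1\in[\hat{\delta}/c_1,\,c_2\hat{\delta}]$ for universal constants $c_1,c_2$, which immediately implies the ``moreover'' clause of the lemma. The inductive step requires two facts. First, under this invariant, the eigengap ratio of $\M_s^{-1}$ between its two leading eigenvalues is bounded by a universal constant below $1$, so $m_1=O(\ln(d/p))$ power iterations of $\M_s^{-1}$ (starting from a warm start inherited from the previous outer iteration, with the failure probability absorbed by the initial random vector $\hat{\w}_0$) drive $\w_s$ to within some constant angle of $\hat{\v}_1$. Second, the exponentially small choice $\tilde{\epsilon}=O((\tilde{\delta}/8)^{m_1+1})$ controls how the per-step linear-system error amplifies through the $m_1$ inverse applications, so that the total perturbation in $\w_s$ remains sub-constant. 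Given $\w_s$ close to $\hat{\v}_1$, the Rayleigh-type quantity $\w_s^\top\M_s^{-1}\w_s$ approximates $1/(\lambda_{(s-1)}-\hat{\lambda}_1)$, and hence $\Delta_s$ as defined in the algorithm (again controlling for the $\tilde{\epsilon}$ error in computing $\v_s$) approximates this reciprocal within a factor of $2$. Halving $\Delta_s$ and subtracting yields $\lambda_{(s)}-\hat{\lambda}_1\approx \tfrac12(\lambda_{(s-1)}-\hat{\lambda}_1)$, preserving the invariant and geometrically shrinking the gap.

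Because $\lambda_{(0)}-\hat{\lambda}_1=O(1)$ (using $\hat{\lambda}_1\leq 1$ since $b=1$) and the outer loop terminates when $\Delta_s\leq\tilde{\delta}$, which by the invariant happens exactly when the gap has shrunk to $\Theta(\hat{\delta})$, the outer loop runs $O(\ln(\hat{\delta}^{-1}))$ times, each iteration costing $m_1+1=O(\ln(d/p))$ linear-system solves. At termination the invariant gives $\lambda_{(f)}-\hat{\lambda}_1=\Theta(\hat{\delta})$, so $\M_f^{-1}$ has a constant-factor spectral gap ratio, and a standard noisy power method analysis (using $m_2=O(\ln(d/(p\epsilon)))$ iterations and $\tilde{\epsilon}=O(\epsilon\cdot(\tilde{\delta}/8)^{m_2+1})$ per-step error) yields $(\w_f^\top\hat{\v}_1)^2\geq 1-\epsilon$ with probability $1-p$. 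Summing gives the total count $O(\ln(d/p)\ln(\hat{\delta}^{-1})+\ln(d/(p\epsilon)))$ of linear systems.

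The main obstacle is bookkeeping the propagation of the approximate-solve error $\tilde{\epsilon}$ through $m_1$ (resp.\ $m_2$) successive applications of $\M_s^{-1}$ (resp.\ $\M_f^{-1}$), since after normalization the relevant quantity is the angular error, which can be amplified by factors depending on the current iterate's alignment with $\hat{\v}_1$. The saving grace is that the operator norms $\|\M_s^{-1}\|,\|\M_f^{-1}\|\leq 2/\hat{\delta}$ are under control by the invariant, and the doubly-exponential choice of $\tilde{\epsilon}$ kills the accumulated error; handling this precisely is exactly what is carried out in \cite{Garber15}, and I would adopt that argument verbatim rather than re-derive it.
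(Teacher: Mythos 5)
The paper does not prove \lemref{lem:convexEV} itself; it imports it by citation from \cite{Garber15}, and your proposal likewise defers to that reference while sketching its structure. Your sketch of the Garber15 analysis is essentially correct in its broad strokes: the outer repeat-until loop geometrically halves the gap $\lambda_{(s)}-\hat{\lambda}_1$ while staying $\Omega(\hat{\delta})$ away from $\hat{\lambda}_1$, using $m_1=O(\ln(d/p))$ inexact inverse-power steps per round to get a Rayleigh-quotient estimate of that gap, and the final stage runs $m_2=O(\ln(d/(p\epsilon)))$ inexact inverse-power steps once $\lambda_{(f)}-\hat{\lambda}_1=\Theta(\hat{\delta})$; the doubly-exponential choice of $\tilde{\epsilon}$ absorbs the error accumulation. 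This reproduces the stated count $O(\ln(d/p)\ln(\hat{\delta}^{-1})+\ln(d/(p\epsilon)))$.

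One small inaccuracy worth fixing: the two-sided invariant you propose to prove by induction, $\lambda_{(s)}-\hat{\lambda}_1\in[\hat{\delta}/c_1,\,c_2\hat{\delta}]$, cannot hold from $s=0$, since $\lambda_{(0)}=1+\tilde{\delta}$ while $\hat{\lambda}_1\leq 1$ may be small, so $\lambda_{(0)}-\hat{\lambda}_1$ can be as large as $1+\tilde{\delta}$, which is $O(1)$, not $O(\hat{\delta})$. The invariant that is actually maintained throughout (and which the lemma asserts) is one-sided, $1+\hat{\delta}\geq\lambda_{(s)}-\hat{\lambda}_1=\Omega(\hat{\delta})$: the lower bound holds for all $s$, while the upper bound starts at $O(1)$ and decays geometrically until it reaches $\Theta(\hat{\delta})$ at the termination test $\Delta_s\leq\tilde{\delta}$. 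Your own narrative description (``descends toward $\hat{\lambda}_1$'') is consistent with this; it is only the stated inductive hypothesis that overclaims the upper bound. With that correction, your plan matches the argument the paper implicitly relies on.
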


\paragraph{Remark:} the purpose of the \textit{repeat-until} loop in Algorithm 
\ref{alg:convexEV} is to efficiently find a shifting parameter $\lambda_{(f)}$ 
such that $c_1\hat{\delta} \leq \lambda_{(f)} - \hat{\lambda}_1 \leq 
c_2\hat{\delta}$ for some universal constants $c_2 > c_1 > 0$. When $n$ 
satisfies $n = \Omega(\delta^{-2}\ln(d/p))$, it follows that we can directly 
find (with high probability) such a shifting parameter, by simply estimating 
$\hat{\lambda}_1,\hat{\delta}$ from the data of a single machine, without any 
communication overhead. Also, in this regime, instead of taking the vector 
$\hat{\w}_0$ to be arbitrary, we can take it to be the leading eigenvector of 
any single machine, since this will already have a constant correlation with 
$\hat{\v}_1$ (with high probability). Thus, for such $n$, the total number of 
optimization problems can be reduced to $O(\ln(p^{-1}\epsilon^{-1}))$.

Algorithm \ref{alg:convexEV} is a meta-algorithm in the sense that the choice 
of solver for the optimization problems $\min{}F_{\lambda,\w}$ is unspecified, 
and any solver will do.
A simple calculation shows that a naive application of either the conjugate gradient method or Nesterov's accelerated gradient method to solve these optimization problems in a distributed manner, i.e., the computation of the gradient vector is distributed across machines, will require overall $\tilde{O}\big({\sqrt{\hat{\lambda}_1/\hat{\gap}}}\big)$  communication rounds, which does not give any improvement over the distributed Lanczos approach, described in Subsection \ref{sec:relatedWork}. However, this can be substantially improved by taking advantage of the fact that the data on all machines is sampled i.i.d. from the same distribution. In particular, we present below an approach based on applying a pre-conditioner to the optimization Problem \eqref{eq:quadProb}, in the spirit of the one described in \cite{Disco}.

\subsection{Faster Distributed Approximation of Linear Systems via Local Preconditioning}\label{sec:preconditionLinSys}

Let $\M = \lambda\I-\hat{\X}$, for some shift parameter $\lambda > \hat{\lambda}_1$, and define the pre-conditioning matrix $\C = (\lambda+\mu)\I-\hat{\X}_1$, where $\mu$ is required so $\C$ is invertible. Consider now solving the following modified quadratic problem:
\begin{eqnarray}\label{eq:quadProbCond}
\tilde{F}_{\lambda,\w}(\y) :=  \frac{1}{2}\y^{\top}\C^{-1/2}\M\C^{-1/2}\y - \y^{\top}\C^{-1/2}\w.
\end{eqnarray}

Note that if $\y^*$ is the optimal solution to Problem \eqref{eq:quadProbCond}, i.e., 
\begin{eqnarray*} 
\y^* = \C^{1/2}\M^{-1}\C^{1/2}\C^{-1/2}\w = \C^{1/2}\M^{-1}\w,
\end{eqnarray*}
then $\z^* := \C^{-1/2}\y^*$ is the optimal solution to Problem \eqref{eq:quadProb}.

The idea behind choosing $\C$ this way is very intuitive. Ideally we could have 
chosen $\C=\M$, making the condition number of $\tilde{F}_{\lambda,\w}$ equal to
$\kappa(\tilde{F}_{\lambda,\w}) = 1$, which is the best we can hope for. The 
problem of course is that this requires us to explicitly compute $\M^{-1/2}$, 
which is more challenging then just computing the leading eigenvector of 
$\hat{\X}$. The next best thing is thus to choose $\C$ based only on the data 
available on any single machine, which allows computing $\C^{-1/2}$ 
without additional communication overhead, and leads to the choice described 
above. The following lemma, rephrased from \cite{Disco}, quantifies exactly how 
such a choice of $\C$ helps in improving the condition number of the new 
optimization problem, Problem \eqref{eq:quadProbCond}. The proof is given in 
the appendix. 

\begin{lemma}\label{lem:precondLS}
Suppose that $\mu \geq \Vert{\hat{\X} - \hat{\X}_1}\Vert$. Then, $\tilde{F}_{\lambda,\w}(\y)$ is $1$-smooth and $\Big({\frac{\lambda-\hat{\lambda}_1}{(\lambda-\hat{\lambda}_1) + 2\mu}}\Big)$-strongly convex. In particular, 
The condition number\footnote{defined as the smoothness parameter  divided by the strong-convexity parameter.} $\kappa\left({\tilde{F}_{\lambda,\w}}\right)$ satisfies
\begin{eqnarray*}
\kappa\left({\tilde{F}_{\lambda,\w}}\right) \leq 1 + \frac{2\mu}{\lambda-\lambda_1(\hat{\X})}.
\end{eqnarray*}
Moreover, fixing $\tilde{\y}\in\reals^d$, if we let $\tilde{\z} := \C^{-1/2}\tilde{\y}$, then it holds that $$\Vert{\tilde{\z} - \M^{-1}\w}\Vert \leq (\lambda-\hat{\lambda}_1)^{-1/2}\Vert{\tilde{\y} - \C^{1/2}\M^{-1}\w}\Vert.$$
Finally, for any $p\in(0,1)$, if we set $\mu = 4\sqrt{\ln(d/p)/n}$, then the above holds with probability at least $1-p$, 
where this probability depends only on the randomness in $\hat{\X}_1$.
\end{lemma}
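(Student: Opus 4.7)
The plan is to reduce everything to two operator inequalities between $\M$ and $\C$. Starting from the definitions gives $\C - \M = \mu\I + (\hat{\X} - \hat{\X}_1)$, so the hypothesis $\|\hat{\X}-\hat{\X}_1\|\leq \mu$ immediately yields the sandwich $\M \preceq \C \preceq \M + 2\mu\I$. The upper half $\M \preceq \C$ delivers the $1$-smoothness of $\tilde{F}_{\lambda,\w}$ (whose Hessian is precisely $\C^{-1/2}\M\C^{-1/2}$) by conjugating with $\C^{-1/2}$: for any $\bv$, setting $\bu=\C^{-1/2}\bv$ gives $\bv^\top \C^{-1/2}\M\C^{-1/2}\bv = \bu^\top \M\bu \leq \bu^\top \C \bu = \|\bv\|^2$. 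For the strong-convexity constant, I would upgrade the lower half of the sandwich using the trivial eigenvalue bound $\M \succeq (\lambda-\hat{\lambda}_1)\I$ to absorb $2\mu\I \preceq \tfrac{2\mu}{\lambda-\hat{\lambda}_1}\M$, obtaining $\C \preceq \bigl(1+\tfrac{2\mu}{\lambda-\hat{\lambda}_1}\bigr)\M$; conjugating again with $\C^{-1/2}$ produces the claimed lower bound $\tfrac{\lambda-\hat{\lambda}_1}{(\lambda-\hat{\lambda}_1)+2\mu}\I$ on the Hessian. The condition number is then simply the ratio of these two extremes.

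The distance bound is purely algebraic: from $\tilde{\z} = \C^{-1/2}\tilde{\y}$ and the identity $\M^{-1}\w = \C^{-1/2}\bigl(\C^{1/2}\M^{-1}\w\bigr)$, we get $\tilde{\z} - \M^{-1}\w = \C^{-1/2}\bigl(\tilde{\y} - \C^{1/2}\M^{-1}\w\bigr)$, so the operator-norm inequality immediately gives the estimate once we observe $\|\C^{-1/2}\| \leq (\lambda-\hat{\lambda}_1)^{-1/2}$. The latter follows from the same sandwich: $\C \succeq \M \succeq (\lambda-\hat{\lambda}_1)\I$, so every eigenvalue of $\C$ is at least $\lambda-\hat{\lambda}_1$.

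For the probabilistic claim, I would invoke the matrix Hoeffding inequality (Theorem \ref{thm:matHoff}) on $\hat{\X}_1 = \tfrac{1}{n}\sum_{j=1}^n \x_j^{(1)}\x_j^{(1)\top}$, a sum of $n$ i.i.d.\ matrices with mean $\X$: with $b=1$ and $\epsilon$ on the order of $\sqrt{\ln(d/p)/n}$, the choice $\mu = 4\sqrt{\ln(d/p)/n}$ makes $\|\hat{\X}_1-\X\| \leq \mu/2$ hold except on an event of probability at most $p$, and this event involves only the randomness in $\hat{\X}_1$. A parallel application with all $mn$ samples yields $\|\hat{\X}-\X\|$ of strictly smaller order $\sqrt{\ln(d/p)/(mn)}$, and the triangle inequality $\|\hat{\X}-\hat{\X}_1\| \leq \|\hat{\X}_1-\X\| + \|\hat{\X}-\X\|$ then gives the required $\|\hat{\X}-\hat{\X}_1\| \leq \mu$ (the second term is absorbed by a slight slack in the constant in front of $\sqrt{\ln(d/p)/n}$); this is the content of the parenthetical remark about the dominant randomness residing in $\hat{\X}_1$.

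I do not anticipate a genuine obstacle in this proof; the only step that requires care is matching the exact form of the strong-convexity constant, and that is why I would establish the two-sided sandwich $\M \preceq \C \preceq (1+\tfrac{2\mu}{\lambda-\hat{\lambda}_1})\M$ first and derive the smoothness, strong-convexity, condition-number, and distance bounds as direct consequences.
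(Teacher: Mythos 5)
Your proposal is correct, and the high-level structure matches the paper's: establish the operator sandwich $\M \preceq \C \preceq \M + 2\mu\I$, identify the Hessian of $\tilde{F}_{\lambda,\w}$ as $\C^{-1/2}\M\C^{-1/2}$, and bound its extreme eigenvalues; the distance bound and the Hoeffding step are the same. The one place where your route diverges is the strong-convexity lower bound. The paper inverts the sandwich to $\C^{-1} \succeq (\M+2\mu\I)^{-1}$, uses the similarity of $\C^{-1/2}\M\C^{-1/2}$ and $\M^{1/2}\C^{-1}\M^{1/2}$, and then computes the eigenvalues of $\M^{1/2}(\M+2\mu\I)^{-1}\M^{1/2}$ exactly (using that $\M$ and $\M+2\mu\I$ commute) to arrive at $\min_i \lambda_i(\M)/(\lambda_i(\M)+2\mu) = (\lambda-\hat{\lambda}_1)/((\lambda-\hat{\lambda}_1)+2\mu)$. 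You instead strengthen the Loewner upper bound on $\C$ to $\C \preceq \bigl(1+\tfrac{2\mu}{\lambda-\hat{\lambda}_1}\bigr)\M$ via $2\mu\I \preceq \tfrac{2\mu}{\lambda-\hat{\lambda}_1}\M$ and then conjugate once by $\C^{-1/2}$; this avoids inversion, similarity, and the commuting-matrix eigenvalue computation, and yields the identical constant (because $x/(x+2\mu)$ is increasing in $x$, the paper's $\min_i$ is attained at $\lambda_d(\M)=\lambda-\hat{\lambda}_1$ anyway). Your argument is marginally more elementary and buys nothing additional in generality; it is a clean alternative. One small caveat: in the probabilistic step, with $\mu = 4\sqrt{\ln(d/p)/n}$, matrix Hoeffding gives $\|\hat{\X}_1-\X\|\leq\mu$ with probability $1-p$, not $\leq\mu/2$ as you wrote, so the triangle-inequality absorption of the $\|\hat{\X}-\X\|=O(\sqrt{\ln(d/p)/(mn)})$ term is not covered by the stated constant; this is, however, exactly the same level of informality as the paper's one-line remark, and is fixed by a benign constant adjustment in $\mu$.
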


\subsubsection{Solving the pre-conditioned linear systems}

We now discuss the application of gradient-based algorithms for finding an approximate minimizer of the pre-conditioned problem, Problem \eqref{eq:quadProbCond}, in our distributed setting. Towards this end we require a \textit{distributed} implementation for the \textit{first-order oracle} of $\tilde{F}_{\lambda,\w}(\y)$ (i.e., computation of the value and gradient vector at a queried point).


A straight-forward implementation of the first-order oracle in our distributed setting is given in Algorithm \ref{alg:distFOO}. 

\begin{algorithm}
\caption{Distributed First-Order Oracle for $\tilde{F}_{\lambda,\w}(\y)$}
\label{alg:distFOO}
\begin{algorithmic}[1]
\STATE Input: shift parameter $\lambda > 0$, regularization parameter $\mu > 0$, vector $\w\in\reals^d$, query vector $\y\in\reals^d$
\STATE send $\tilde{\y} := \C^{-1/2}\y$ to machines $\{2,\ldots,m\}$ for $\C := (\lambda+\mu)\I -\hat{\X}_1$ \COMMENT{executed on machine 1}
\FOR{$i=1...m$}
\STATE send $\tilde{\nabla}_i := \hat{\X}_i\tilde{\y}$ to machine 1 \COMMENT{executed on each machine $i$}
\ENDFOR
\STATE aggregate $\tilde{\nabla} := \frac{1}{m}\sum_{i=1}^m\tilde{\nabla}_i$ \COMMENT{executed on machine 1}
\STATE compute $\tilde{F}_{\lambda,\w}(\y) = \frac{1}{2}(\lambda\y^{\top}\C^{-1}\y - \y^{\top}\C^{-1/2}\tilde{\nabla}) - \y^{\top}\C^{-1/2}\w$ \COMMENT{executed on machine 1}
\STATE compute $\nabla\tilde{F}_{\lambda,\w}(\y) = \lambda\C^{-1}\y - \C^{-1/2}\tilde{\nabla} - \C^{-1/2}\w$ \COMMENT{executed on machine 1}
\STATE return: $(\tilde{F}_{\lambda,\w}(\y), \nabla\tilde{F}_{\lambda,\w}(\y))$
\end{algorithmic}
\end{algorithm}

We have the following lemma, the proof of which is deferred to the appendix.
\begin{lemma}\label{lem:distPrecondLS}
Fix some $\lambda > \lambda_1(\hat{\X})$ and $\w\in\reals^d$, and let $1\geq \mu >0$ be as in Lemma \ref{lem:precondLS}. Fix $\epsilon > 0$. 
Consider the following two-step algorithm:
\begin{enumerate}
\item
Apply either the conjugate gradient method or Nesterov's accelerated method with the distributed first-order oracle described in Algorithm \ref{alg:distFOO} to find $\tilde{\y}\in\reals^d$ such that $\tilde{F}_{\lambda,\w}(\tilde{\y}) - \min_{\y\in\reals^d}\tilde{F}_{\lambda,\w}(\y) \leq \epsilon'$
\item
Return $\tilde{\z} = \C^{-1/2}\tilde{\y}$.
\end{enumerate}
Then, for $\epsilon' = \frac{\epsilon}{2}\left({1+ \frac{2\mu}{\lambda-\hat{\lambda}_1}}\right)^{-1}(\lambda-\hat{\lambda}_1)$ it holds that $\Vert{\tilde{\z} - (\lambda\I-\hat{\X}_1)^{-1}\w}\Vert \leq \epsilon$, and the total number distributed matrix-vector products with the empirical covariance matrix $\hat{\X}$ required to compute $\tilde{\z}$ is upper-bounded by
\begin{eqnarray*}
O\left({\sqrt{1 + 2\mu(\lambda-\hat{\lambda}_1)^{-1}}\ln\Big({\Big({1 + \frac{2\mu}{\lambda-\hat{\lambda}_1}}\Big)\Vert{\w}\Vert/[(\lambda-\hat{\lambda}_1)\epsilon]}\Big)}\right).
\end{eqnarray*}
\end{lemma}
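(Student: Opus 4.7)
The plan is to combine the preconditioning analysis of \lemref{lem:precondLS} with the standard linear convergence rates of CG and Nesterov's accelerated method, and then account for the communication cost of each call to the oracle in Algorithm \ref{alg:distFOO}.

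First I would invoke \lemref{lem:precondLS} to conclude that $\tilde{F}_{\lambda,\w}$ is $1$-smooth and $\sigma$-strongly convex with $\sigma=(\lambda-\hat{\lambda}_1)/((\lambda-\hat{\lambda}_1)+2\mu)$, so the condition number is $\kappa = 1 + 2\mu/(\lambda-\hat{\lambda}_1)$. By $\sigma$-strong convexity, a function-value accuracy $\tilde{F}_{\lambda,\w}(\tilde{\y}) - \min_{\y}\tilde{F}_{\lambda,\w}(\y)\leq \epsilon'$ upgrades to an iterate-distance bound $\|\tilde{\y}-\y^\star\| \leq \sqrt{2\epsilon'/\sigma}$, where $\y^\star = \C^{1/2}\M^{-1}\w$ is the unique minimizer. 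Plugging this into the second conclusion of \lemref{lem:precondLS}, namely $\|\tilde{\z}-\M^{-1}\w\|\leq (\lambda-\hat{\lambda}_1)^{-1/2}\|\tilde{\y}-\y^\star\|$, and solving for the $\epsilon'$ that guarantees $\|\tilde{\z}-\M^{-1}\w\|\leq \epsilon$, recovers (up to absolute constants) the stated choice of $\epsilon'$.

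Next I would quantify the communication cost per oracle call. Because $\C = (\lambda+\mu)\I-\hat{\X}_1$ depends only on machine 1's sample, $\C^{-1/2}$ is precomputed and applied locally on machine 1 at no communication cost. Thus one invocation of Algorithm \ref{alg:distFOO} reduces to: machine 1 broadcasts $\tilde{\y} = \C^{-1/2}\y$; each machine $i$ returns the product $\hat{\X}_i\tilde{\y}$; machine 1 averages these and combines with the locally computable terms. This is a constant number of distributed matrix--vector products with $\hat{\X}$ per oracle call, and the per-iteration cost of both CG and Nesterov is exactly one such oracle call plus local arithmetic.

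Finally, CG and Nesterov's method on a $\kappa$-conditioned strongly convex quadratic achieve $\tilde{F}_{\lambda,\w}(\tilde{\y}) - \min\tilde{F}_{\lambda,\w}\leq \epsilon'$ in $O(\sqrt{\kappa}\log(\Delta_0/\epsilon'))$ iterations, where $\Delta_0$ is the initial sub-optimality. Initializing at $\y_0 = 0$ gives $\Delta_0 = \tfrac{1}{2}\w^\top\M^{-1}\w \leq \|\w\|^2/(2(\lambda-\hat{\lambda}_1))$. Substituting $\kappa = 1 + 2\mu/(\lambda-\hat{\lambda}_1)$ and the chosen $\epsilon'$ into the iteration count, and multiplying by the constant per-iteration communication cost from the previous step, yields the claimed $O\bigl(\sqrt{1+2\mu/(\lambda-\hat{\lambda}_1)}\,\log(\cdot)\bigr)$ bound. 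The only nontrivial obstacle is the bookkeeping: reconciling the constants (and the precise form of the logarithm's argument) so that the stated $\epsilon'$ and the stated iteration bound line up exactly; everything else is a direct composition of strong convexity, the preconditioning bound, and the accelerated quadratic rate.
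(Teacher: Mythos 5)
Your proof matches the paper's approach step for step: invoke \lemref{lem:precondLS} for the condition number and preconditioning bound, use strong convexity to convert the function-value gap $\epsilon'$ into a distance $\|\tilde{\y}-\y^\star\|\leq\sqrt{2\epsilon'/\sigma}$, compose with the second conclusion of \lemref{lem:precondLS}, solve for $\epsilon'$, and then apply the $O(\sqrt{\kappa}\log(\cdot))$ accelerated quadratic rate at one distributed matrix--vector product per oracle call. The only cosmetic difference is that you bound the initial suboptimality $\Delta_0=\tfrac12\w^\top\M^{-1}\w$ while the paper bounds $\|\y^\star\|$ directly; both yield the same logarithmic factor, so this is immaterial.
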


\subsection{Putting it all together}

We now state our main result for this section, which is a simple 
consequence of the previous lemmas. The full proof is given in the appendix.

\begin{theorem}\label{thm:shiftNinvert:main}
Fix $\epsilon\in(0,1)$ and $p\in(0,1)$. Suppose that $mn = 
\Omega(\delta^{-2}\ln(d/p))$. Set $\mu =  4\sqrt{\ln(3d/p)/n}$. Applying the 
Shift-and-Invert algorithm, Algorithm \ref{alg:convexEV}, with the parameters 
$\epsilon,p/3$, and applying the algorithm in Lemma \ref{lem:distPrecondLS} 
with the parameter $\mu$, to approximately solve the linear systems, yields 
with probability at least $1-p$ a unit vector $\w_f$ such that 
$(\w_f^{\top}\hat{\v}_1)^2 \geq 1-\epsilon$, after executing at most
\begin{align*}
O\left(\sqrt{\frac{\sqrt{\ln(d/p)}}{\delta\sqrt{n}}}\left[\ln\left(\frac{d}{p\epsilon^2
}\right)\ln\left(\frac{\sqrt{\ln(d/p)}}{\delta^2\sqrt{n}}\right)+\ln^2\left(\frac{d}{p\epsilon^2}\right)
\ln\left(\frac{1}{\delta}\right)\right]\right)
~=~ \tilde{O}\left(\sqrt{\frac{1}{\delta\sqrt{n}}}\right)
\end{align*}
distributed matrix-vector products with the empirical covariance matrix 
$\hat{\X}$. 
\end{theorem}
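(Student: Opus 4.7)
The plan is to combine the three prior ingredients (Lemma \ref{lem:convexEV}, Lemma \ref{lem:precondLS}, and Lemma \ref{lem:distPrecondLS}) into a single complexity bound, after first verifying that each lemma's hypotheses are satisfied with the stated failure probability. I would partition the failure budget $p$ into three equal pieces: one for producing an eigengap estimate $\tilde{\gap}$ meeting the requirement of Lemma \ref{lem:convexEV}, one for the preconditioner concentration event of Lemma \ref{lem:precondLS}, and one for the randomness of Algorithm \ref{alg:convexEV} itself.

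First, I would use the hypothesis $mn = \Omega(\delta^{-2}\ln(d/p))$ and apply Theorem \ref{thm:matHoff} to all $mn$ aggregated samples to obtain $\|\hat{\X}-\X\| \leq \delta/100$ with probability at least $1-p/3$. By Weyl's inequality this yields $\hat{\gap} \in [(1-1/50)\gap,\,(1+1/50)\gap]$, so the fixed choice $\tilde{\gap} = 0.6\,\gap$ (computable without any communication from the known population parameter $\gap$) lies in $[\hat{\gap}/2,\,3\hat{\gap}/4]$. Next, with $\mu = 4\sqrt{\ln(3d/p)/n}$, Lemma \ref{lem:precondLS} gives $\mu \geq \|\hat{\X}-\hat{\X}_1\|$ with probability at least $1-p/3$; this is a single event that, once holds, validates the preconditioner for every linear system solved in the run.

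Conditioned on these two events, Lemma \ref{lem:convexEV} (invoked with accuracy $\epsilon$ and failure parameter $p/3$) guarantees with the remaining probability budget that Algorithm \ref{alg:convexEV} returns $\w_f$ with $(\w_f^\top\hat{\v}_1)^2\geq 1-\epsilon$, while solving at most
\[
N_1 \;=\; O\!\left(\ln(d/p)\ln(1/\hat{\gap}) + \ln(d/(p\epsilon))\right) \;=\; O\!\left(\ln(d/p)\ln(1/\gap)+\ln(d/(p\epsilon))\right)
\]
linear systems of the form \eqref{eq:quadProb}, each required to accuracy $\tilde{\epsilon}$ with $\ln(1/\tilde{\epsilon}) = O\!\left(\ln(d/(p\epsilon))\ln(1/\gap)\right)$, and with shifts satisfying $\lambda_{(s)}-\hat{\lambda}_1 = \Theta(\hat{\gap}) = \Theta(\gap)$ throughout. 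To each such system I would apply the preconditioned solver of Lemma \ref{lem:distPrecondLS} with the chosen $\mu$. Since $\lambda_{(s)}-\hat{\lambda}_1 = \Theta(\gap)$ and $\mu/(\lambda_{(s)}-\hat{\lambda}_1) = O(\sqrt{\ln(d/p)}/(\gap\sqrt{n}))$, the per-system cost is
\[
N_2 \;=\; O\!\left(\sqrt{1+\tfrac{2\mu}{\gap}}\,\ln\!\left(\tfrac{(1+2\mu/\gap)\|\w\|}{\gap\tilde{\epsilon}}\right)\right) \;=\; \tilde{O}\!\left(\sqrt{1+\tfrac{\sqrt{\ln(d/p)}}{\gap\sqrt{n}}}\right)
\]
distributed matrix-vector products with $\hat{\X}$. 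Multiplying $N_1\cdot N_2$ and substituting the explicit logarithmic factors gives the bound stated in the theorem; under the standing assumption $mn=\Omega(\gap^{-2}\ln(d/p))$ the ``$1$'' inside the square root is dominated by the second term, leading to the clean form $\tilde{O}(\sqrt{1/(\gap\sqrt{n})})$. Union-bounding the three failure events yields overall success with probability $\geq 1-p$.

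The main obstacle is not any single inequality but the bookkeeping. One has to verify that the shift $\lambda_{(s)}-\hat{\lambda}_1$ remains $\Theta(\gap)$ \emph{uniformly} across all $N_1$ linear systems so that the condition-number bound from Lemma \ref{lem:precondLS} is uniform; this is exactly what the last clause of Lemma \ref{lem:convexEV} provides. One also has to propagate the rather small per-system accuracy $\tilde{\epsilon}$ required by Algorithm \ref{alg:convexEV} (exponentially small in $m_1,m_2$) through the logarithmic factor of Lemma \ref{lem:distPrecondLS}, verifying that it contributes only poly-logarithmic factors to $N_2$ and hence is absorbed into the $\tilde{O}(\cdot)$. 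Once these two uniformity and precision checks are in place, the remaining computation is routine algebra.
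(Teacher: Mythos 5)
Your proposal matches the paper's proof essentially step for step: the same partition of the failure budget $p$ into three equal pieces for (i) the event $\hat{\delta}=\Theta(\delta)$ (which licenses the choice of $\tilde{\gap}$), (ii) the event $\|\hat{\X}-\hat{\X}_1\|\leq\mu$, and (iii) the internal randomness of Algorithm \ref{alg:convexEV}; the same multiplication of the number $N_1$ of linear systems from Lemma \ref{lem:convexEV} by the per-system cost $N_2$ from Lemma \ref{lem:distPrecondLS}; the same use of the last clause of Lemma \ref{lem:convexEV} to keep $\lambda_{(s)}-\hat{\lambda}_1=\Theta(\hat{\delta})$ uniformly; and the same observation that $\ln(1/\tilde{\epsilon})=O(\ln(d/(p\epsilon))\ln(1/\gap))$ so the required precision only costs poly-log factors. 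The only (minor) extra detail you supply that the paper leaves implicit is the explicit construction of an admissible $\tilde{\gap}$ from the population quantity $\gap$; this is consistent with the paper's remark following Lemma \ref{lem:convexEV} and does not change the argument.
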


\paragraph{Remark:} Our approach of using Shift-and-Invert with the preconditioning technique for linear systems is applicable in a much more general setting. Namely, all that is required for the method to obtain accelerated rates over standard algorithms, is (1) a non-zero gap in the aggregated empirical matrix, i.e., $\delta(\hat{\X}) > 0$, and (2) that the distance $\Vert{\hat{\X} - \hat{\X}_1}\Vert$ admits a non-trivial upper-bound.

\section{Experiments}

To validate some of our theoretical findings we conducted experiments with single-round algorithms on synthetic data. 
We generated synthetic datasets using two distributions. For both distributions we used the covariance matrix $\X = \U\mathbf{\Sigma}\U^{\top}$ with $\U$ being a random $d\times d$ orthonormal matrix and $\mathbf{\Sigma}$ is diagonal satisfying:
$\mathbf{\Sigma}(1,1) = 1, ~ \mathbf{\Sigma}(2,2) = 0.8, ~ \forall j\geq 3: ~ 
\mathbf{\Sigma}(j,j) = 0.9\cdot\mathbf{\Sigma}(j-1,j-1)$,
i.e., $\delta = 0.2$. 
One dataset was generated according to the normal distributions 
$\mathcal{N}(0,\X)$, and for the second datasets we generated samples by taking 
$\x = \sqrt{3/2}\X^{1/2}\y$ where $\y\sim{}U[-1,1]$. 
In both cases we set $d = 300$. 

Beyond the single-round algorithms that are based on aggregating the individual 
ERM solutions described so far, we propose an additional natural aggregation 
approach, based on aggregating the individual projection matrices. More 
concretely, letting $\{\hat{\v}_1^{(i)}\}_{i=1}^m$ denote the leading 
eigenvectors of the individual machines, let $\bar{\P}_1 := 
\frac{1}{m}\sum_{i=1}^m\hat{\v}_1^{(i)}\hat{\v}_1^{(i)\top}$. We then take the 
final estimate $\w$ to be the leading eigenvector of the aggregated matrix 
$\bar{\P}_1$. Note that as with the sign-fixing based aggregation, this 
approach also resolves the sign-ambiguity in the estimates produced by the 
different machines, which circumvents the lower bound result of Theorem 
\ref{thm:simpleAvgFail}. 

For both datasets we fixed the number of machines to $m = 25$.
We tested the estimation error (i.e., the value $1 - (\w^{\top}\v_1)^2$ where 
$\v_1$ is the leading eigenvector of $\X$ and $\w$ is the estimator) of five 
benchmarks vs. the per-machine sample size $n$: the centralized solution 
$\hat{\v}_1$, the average of the individual (unbiased) ERM solutions 
(normalized to unit norm),
  the average of ERM solutions with sign-fixing, and the leading eigenvector of 
  the averaged projection matrix. We also plotted the average loss of the 
  individual ERM solutions. Results are averaged over 400 independent runs.

The results appear in Figure \ref{fig:expresults}. 
It is observable that the results for both distributions are very similar.
We can see that, as our lower bound in Theorem \ref{thm:simpleAvgFail} 
suggests, simply averaging and normalizing the individual ERM solutions has 
significantly worse performance than the centralized ERM solution. 
Perhaps surprisingly, the performance of this estimator is even worse than the 
average error of an estimate computed using only a single machine.
We see that both aggregation methods that are based on correlating the 
individual ERM solutions, namely the sign-fixing-based estimator, and the 
proposed averaging-of-projections heuristic, are asymptotically consistent with 
the centralized ERM. In particular, the averaging-of-projections scheme, at 
least empirically, significantly outperforms the sign-fixing approach, which 
justifies further theoretical investigation of this heuristic. For the sign 
fixing approach, we can see that as suggested by our bounds, the 
estimator is not consistent with the centralized ERM solution for small values 
of $n$.

\begin{figure}[h]
    \centering
    \begin{subfigure}[b]{0.485\textwidth}
        \includegraphics[width=\textwidth]{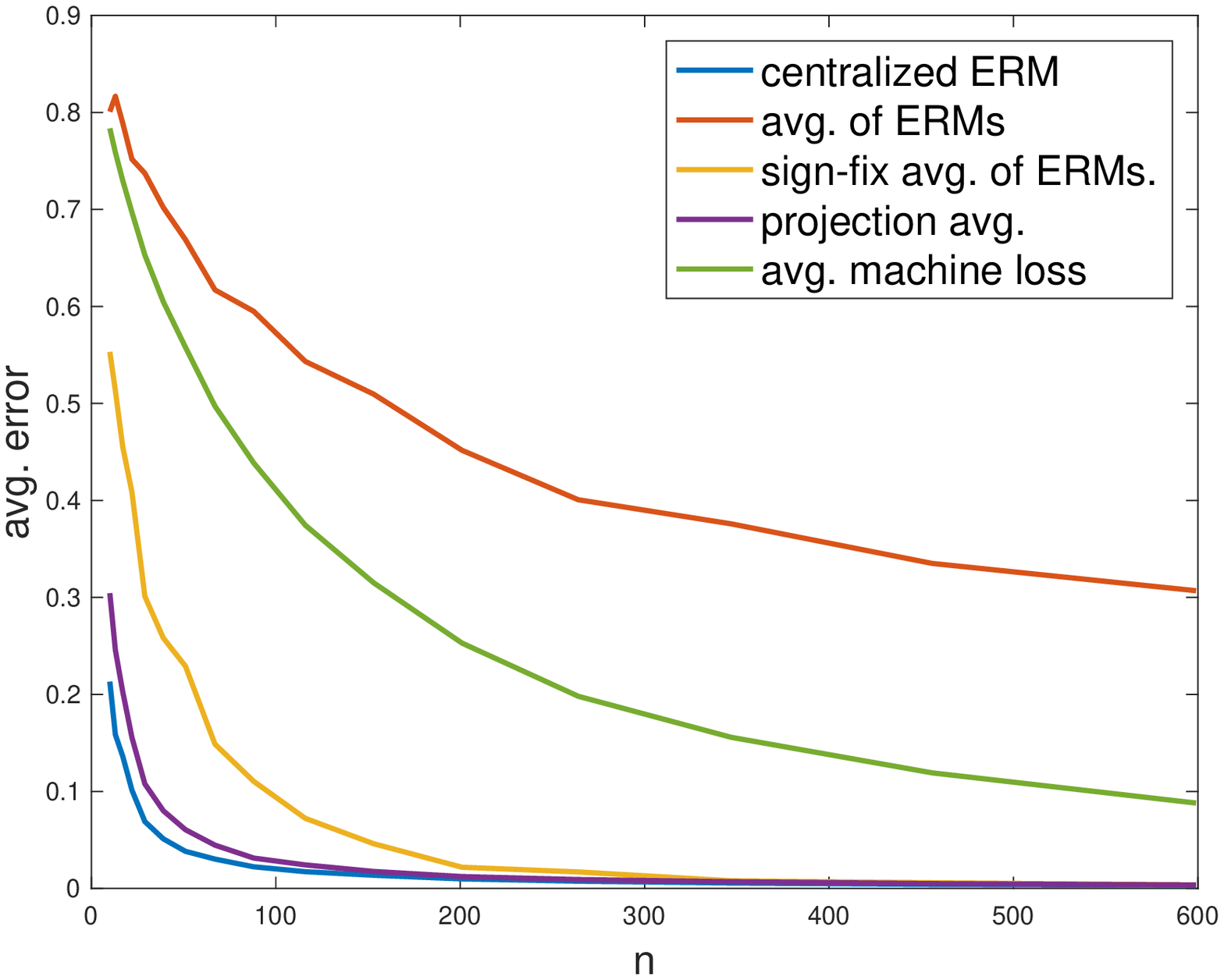}
    \end{subfigure}
          \begin{subfigure}[b]{0.485\textwidth}
        \includegraphics[width=\textwidth]{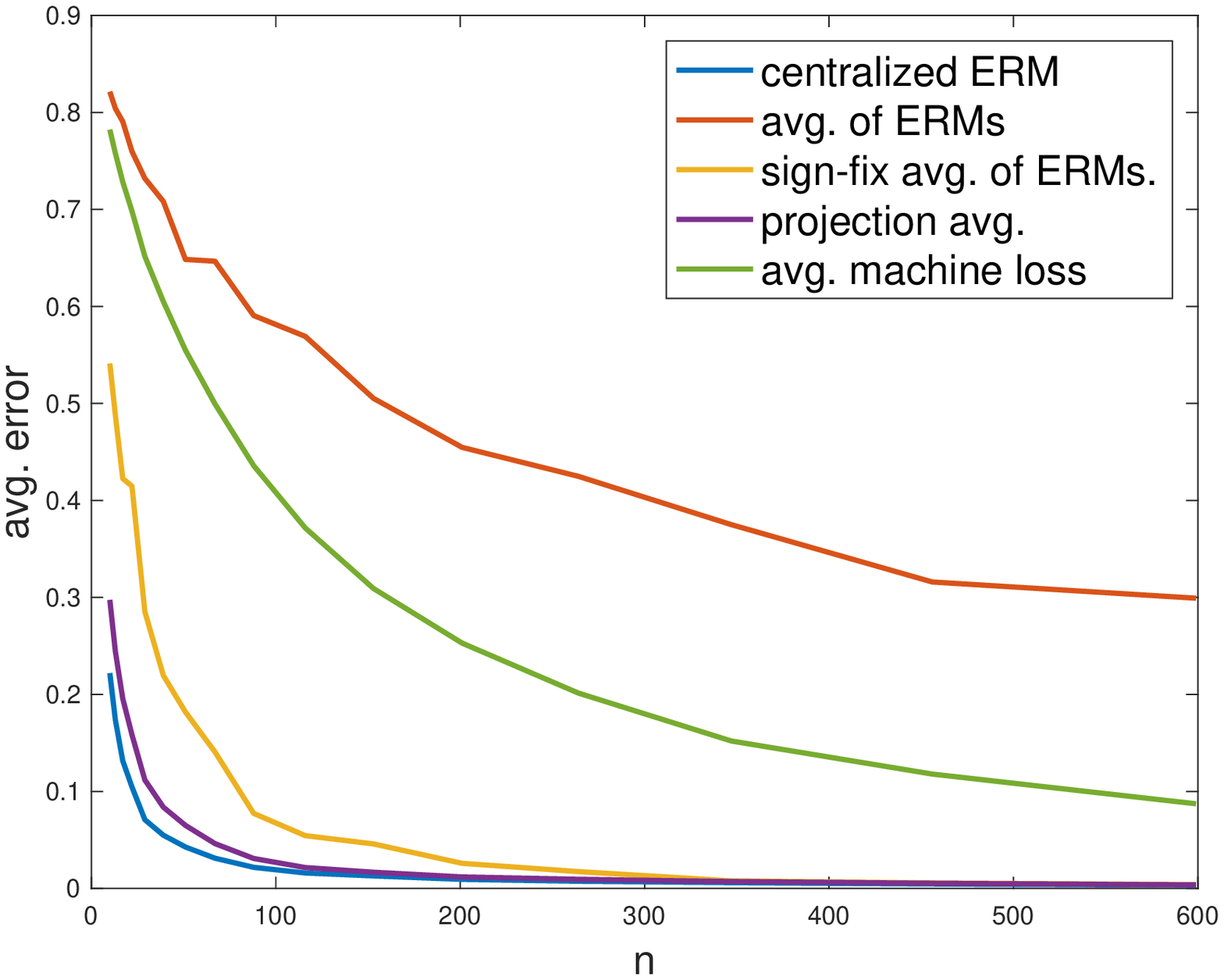}
    \end{subfigure}
    \caption{Estimation error vs. the per-machine sample size $n$ for a normal distribution (left) and uniform sampling-based distribution (right).}\label{fig:expresults}
\end{figure}

\bibliography{bib}
\bibliographystyle{plain}

\appendix

\section{Proofs Omitted from Section \ref{sec:singleRoundAlgs}}

\subsection{Proof of Theorem \ref{thm:simpleAvgFail}}
\begin{proof}
Consider the following distribution over $\reals^2$.
\begin{eqnarray*}
\x = \be_1 + \left( \begin{array}{c}
\epsilon_1  \\
\epsilon_2 \end{array} \right)
, \quad \epsilon_1,\epsilon_2 \sim U\{-1,+1\},
\end{eqnarray*}
where $\be_1$ is the first standard basis vector in $\reals^2$.

The population covariance matrix and the empirical covariance matrix of a sample of size $n$ are clearly given by
\begin{eqnarray*}
\X =
\left( \begin{array}{cc}
2 & 0  \\
0 & 1 \end{array} \right), \qquad 
\hat{\X}_{(n)} =
\left( \begin{array}{cc}
2 & y_n  \\
y_n & 1 \end{array} \right),
\end{eqnarray*}
where $y_n$ is a random variable which is the average of $n$ $U\{-1,+1\}$ random variables. 
By elementary calculations we have that the leading eigenvector of $\hat{\X}_{(n)}$ is given by
\begin{eqnarray*}
\hat{\v}_1 = \sigma\cdot C(y_n)\cdot \left({1, ~ \frac{2y_n}{1+\sqrt{1+4y_n^2}}}\right),
\end{eqnarray*}
where 
\begin{eqnarray*}
C(y_n) := \left({1 + \left({\frac{2y_n}{1+\sqrt{1+4y_n^2}}}\right)^2}\right)^{-1/2}
\end{eqnarray*}
is the normalization factor that guarantees that $\hat{\v}_1$ is a unit vector. In particular, it holds that $1/\sqrt{2} \leq C(y_n) \leq 1$.
The random variable $\sigma\sim U\{-1,+1\}$ is independent of $y_n$ and determines the sign of $\hat{\v}_1$, which follows from our assumption that $\hat{\v}_1$ is generated by unbiased ERM.

Consider now the average of $m$ such unit vectors $\hat{\v}_1^{(1)}..\hat{\v}_1^{(m)}$ given by $\bar{\v} = \frac{1}{m}\sum_{i=1}^m\hat{\v}_1^{(i)}$
and the normalized estimate $\bar{\v}_1/\Vert{\bar{\v}_1}\Vert$, and recall that the leading eigenvector of the population covariance matrix is $\be_1$. It holds that
\begin{eqnarray}\label{eq:thm:simpleAvgFail:1}
\langle{\frac{\bar{\v}_1}{\Vert{\bar{\v}_1}\Vert}, \be_1}\rangle^2 = \frac{\bar{\v}_1(1)^2}{\bar{\v}_1(1)^2 + \bar{\v}_1(2)^2} =1- \frac{\bar{\v}_1(2)^2}{\bar{\v}_1(1)^2 + \bar{\v}_1(2)^2}.
\end{eqnarray}

Towards upper-bounding the RHS of \eqref{eq:thm:simpleAvgFail:1} in expectation, the main step is to lower bound the random variable $\vert{\bar{\v}_1(2)}\vert$ using Chebyshev's inequality. 

It holds that
\begin{eqnarray}\label{eq:thm:simpleAvgFail:2}
\E[\left\vert{\bar{\v}_1(2)}\right\vert] &=& \E\left[{\left\vert{\frac{1}{m}\hat{\v}_1^{(i)}(2)}\right\vert}\right] 
= \E\left[{\left\vert{\frac{1}{m}\sum_{i=1}^m\sigma^{(i)}\frac{2C(y^{(i)}_n)y^{(i)}_n}{1+\sqrt{1+4y_n^{(i)2}}}}\right\vert}\right] \nonumber\\
&\underset{(a)}{=}& \E\left[{\left\vert{\frac{1}{m}\sum_{i=1}^m\sigma^{(i)}\frac{2C(y^{(i)}_n)\vert{y^{(i)}_n}\vert}{1+\sqrt{1+4y_n^{(i)2}}}}\right\vert}\right]\nonumber \\
&=& \E_{\{\sigma^{(i)}\}}\left[{\E_{\{y_n^{(i)}\}}\left[{\left\vert{\frac{1}{m}\sum_{i=1}^m\sigma^{(i)}\frac{2C(y^{(i)}_n)\vert{y^{(i)}_n}\vert}{1+\sqrt{1+4y_n^{(i)2}}}}\right\vert ~ | ~ \{\sigma^{(i)}\}}\right]}\right] \nonumber\\
& \underset{(b)}{\geq} & \E_{\{\sigma^{(i)}\}}\left[{\left\vert{\E_{\{y_n^{(i)}\}}\left[{\frac{1}{m}\sum_{i=1}^m\sigma^{(i)}\frac{2C(y^{(i)}_n)\vert{y^{(i)}_n}\vert}{1+\sqrt{1+4y_n^{(i)2}}} ~ | ~ \{\sigma^{(i)}\}}\right]}\right\vert}\right] \nonumber\\
&\underset{(c)}{=} & \E_{\{\sigma^{(i)}\}}\left[{\left\vert{\frac{1}{m}\sum_{i=1}^m\sigma^{(i)}}\right\vert}\right]\cdot\E_{y_n}\left[{\frac{2C(y_n)\vert{y_n}\vert}{1+\sqrt{1+4y_n^2}}}\right] 
 \underset{(d)}{=} \Theta\left({\frac{1}{\sqrt{mn}}}\right),
\end{eqnarray}
where (a) follows since $\sigma^{(i)}y_n^{(i)} \sim \sigma^{(i)}\vert{y_n^{(i)}}\vert$ and $C(y_n^{(i)})/(1+\sqrt{1+4y_n^{(i)2}})$ depends only on $\vert{y_n^{(i)}}\vert$, (b) follows from the triangle inequality, and (c) follows since $\{\sigma^{(i)}\}_{i\in[m]}$ and $\{y_n^{(i)}\}_{i\in[m]}$ are independent random variables. Finally, it is easy to verify that (d) follows since $\sum_{i=1}^m\sigma^{(i)}/m$ is the average of $m$ $U\{-1,+1\}$ random variables and hence its expected absolute value is $\Theta(1/\sqrt{m})$. Similarly the expected absolute value of $y_n$ is $\Theta(1/\sqrt{n})$ and $C(y_n) / (1+\sqrt{1+4y_n^2})$ is lower bounded by a positive constant.

Also, observe that
\begin{eqnarray}\label{eq:thm:simpleAvgFail:3}
\E[\bar{\v}_1(2)^2] &=& \E\left[{\left({\frac{1}{m}\hat{\v}_1^{(i)}(2)}\right)^2}\right] = \frac{1}{m}\E[\hat{\v}_1(2)^2] = \frac{1}{m}\E\left[{\left({\frac{2C(y_n)y_n}{1+\sqrt{1+4y_n^2}}}\right)^2}\right]  \nonumber\\
&\geq & \frac{1}{2m}\E[y_n^2] = \Theta\left({\frac{1}{mn}}\right),
\end{eqnarray}
where the inequality follows since $\vert{y_n}\vert \leq 1$ and $1/\sqrt{2}\leq C(y_n)\leq 1$.

Combining Eq. \eqref{eq:thm:simpleAvgFail:2} and Eq. \eqref{eq:thm:simpleAvgFail:3}, we have by an application of Chebyshev's inequality to the random variable $\vert{\bar{\v}_1(2)}\vert$ that there exists universal constants $c_1>0$ such that
\begin{eqnarray}\label{eq:thm:simpleAvgFail:4}
\Pr\left({\vert{\bar{\v}_1(2)}\vert \leq \frac{1}{c_1\sqrt{mn}}}\right) \leq \frac{1}{4} .
\end{eqnarray}

Also, it is easy to verify that
\begin{eqnarray*}
\E[\bar{\v}_1(1)^2] = O(1/m), \qquad \E[\bar{\v}_1(2)^2] = O(1/m) .
\end{eqnarray*}

Thus, by a simple application of Markov's inequality we have that there exists a universal constant $c_2>0$ such that
\begin{eqnarray}\label{eq:thm:simpleAvgFail:5}
\Pr\left({\max\{\bar{\v}_1(1)^2, ~ \bar{\v}_1(2)^2\} \geq \frac{1}{c_3m}}\right) \leq \frac{1}{4}.
\end{eqnarray}

Using Eq. \eqref{eq:thm:simpleAvgFail:1}, \eqref{eq:thm:simpleAvgFail:4} and \eqref{eq:thm:simpleAvgFail:5} we finally have that
\begin{eqnarray*}
\E\left[{\langle{\frac{\bar{\v}_1}{\Vert{\bar{\v}_1}\Vert}, \be_1}\rangle^2}\right] = 1- \E\left[{\frac{\bar{\v}_1(2)^2}{\bar{\v}_1(1)^2 + \bar{\v}_1(2)^2}}\right]
= 1- \Omega\left({\frac{1}{n}}\right).
\end{eqnarray*}


 \end{proof}

\subsection{Proof of Theorem \ref{thm:signFixLB}}

The proof is a combination of the following two lemmas, each proves one of the lower bounds. We first state the two lemmas and then prove them.

\begin{lemma}\label{lem:signFixLB:1}
For any $\gap \in (0,1)$ and $d>1$, there exist a distribution over vectors in $\reals^d$ (of norm at most $2$) such that the covariance matrix has eigengap $\gap$, and for any number of machines $m$ and per-machine sample size $n$, the aggregated vector $\bar{\bv}_1=\frac{1}{m}\sum_{i=1}^{m}\hat{\bv}_1^{(i)}$ (even after sign fixing) satisfies
	\[
	\E\left[{1 - \langle{\frac{\bar{\v}_1}{\Vert{\bar{\v}_1}\Vert}, \be_1}\rangle^2}\right] ~=~ \Omega\left({\min\left\{\frac{1}{m},\frac{1}{\gap^2mn}\right\}}\right).
	\]
\end{lemma}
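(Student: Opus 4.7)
The plan is to exhibit a hard distribution on $\reals^d$ for which the sign-fixed average cannot do better than the claimed rate, treating the two terms in the minimum separately. A convenient choice is $\x = \eta_1 \be_1 + \sqrt{1-\gap}\sum_{j=2}^d \eta_j \be_j$ with $\eta_1,\ldots,\eta_d$ i.i.d.\ uniform on $\{-1,+1\}$. This distribution is bounded, has covariance $\X = \operatorname{diag}(1,1-\gap,\ldots,1-\gap)$ so the population top eigenvector is $\be_1$ with eigengap exactly $\gap$, and enjoys two key symmetries: coordinate-wise reflection in any axis $j\geq 2$, and exchangeability of coordinates $2,\ldots,d$. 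Combined with the sign-fixing convention $\langle \hat{\v}_1^{(i)},\be_1\rangle\geq 0$, reflection symmetry forces $\E[\hat{\v}_1^{(i)}(j)]=0$ for all $j\geq 2$, while exchangeability makes $\E[\hat{\v}_1^{(i)}(j)^2]$ identical across $j\geq 2$. Since $\|\bar{\v}_1\|\leq 1$ (average of unit vectors), it suffices to lower-bound $\E[\bar{\v}_1(2)^2]$, because
\[
1-\Big\langle\tfrac{\bar{\v}_1}{\|\bar{\v}_1\|},\be_1\Big\rangle^2 \;=\; \frac{\|\bar{\v}_1\|^2-\bar{\v}_1(1)^2}{\|\bar{\v}_1\|^2} \;\geq\; \bar{\v}_1(2)^2.
\]

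In the large-$n$ regime, where $\gap^2 n$ exceeds a sufficiently large absolute constant, I would apply \lemref{lem:taylor} on the high-probability event $\|\hat\X_i-\X\|\leq \gap/4$ (guaranteed by matrix Hoeffding, \thmref{thm:matHoff}) to obtain the decomposition $\hat{\v}_1^{(i)}(2) = \gap^{-1}(\hat\X_i)_{2,1} + R_i(2)$ with $|R_i(2)| = O(\|\hat\X_i-\X\|^2/\gap^2)$. The linear entry $(\hat\X_i)_{2,1} = \tfrac{1}{n}\sum_{k=1}^n \x_k^{(i)}(1)\x_k^{(i)}(2)$ is a mean-zero average of $n$ bounded i.i.d.\ terms of constant variance, so the fully averaged leading term $\bar L(2) := \tfrac{1}{\gap mn}\sum_{i,k}\x_k^{(i)}(1)\x_k^{(i)}(2)$ has variance $\Theta(1/(\gap^2 mn))$. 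The averaged remainder $\bar R(2)$ has mean zero by the same reflection symmetry and $\E[\bar R(2)^2] = O(1/(\gap^4 mn^2))$ by independence across machines; its variance is dominated by the leading variance precisely when $\gap^2 n$ is large, and Cauchy--Schwarz controls the cross term. This yields $\E[\bar{\v}_1(2)^2]=\Omega(1/(\gap^2 mn))$.

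In the small-$n$ regime, where $\gap^2 n$ is bounded above by a small constant, the target is instead $\E[\bar{\v}_1(2)^2]=\Omega(1/m)$. The key ingredient is a constant lower bound $1-\E[\hat{\v}_1^{(i)}(1)^2]\geq c>0$ on the single-machine ERM error. Given this, exchangeability of coordinates $j\geq 2$ implies $\E[\hat{\v}_1^{(i)}(2)^2]\geq c/(d-1)$, which is an absolute constant for fixed $d$ (and one may even specialize to $d=2$, where the constant is simply $c$). Since the sign-fixed $\hat{\v}_1^{(i)}(2)$ have mean zero and are independent across machines, $\E[\bar{\v}_1(2)^2] = \tfrac{1}{m}\E[\hat{\v}_1^{(i)}(2)^2]=\Omega(1/m)$.

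The main obstacle is the constant lower bound on single-machine ERM error in the small-$n$ regime, which---unlike the Davis--Kahan style upper bounds used elsewhere in the paper---is a minimax-type statement. I would prove it via a two-point argument: for $d=2$ the two covariance matrices $\operatorname{diag}(1,1-\gap)$ and $\operatorname{diag}(1-\gap,1)$ have orthogonal top eigenvectors, and the product distributions they induce over $n$ samples have total variation distance $O(\sqrt n\gap)$, so once $\sqrt n\gap$ drops below a threshold no estimator, and in particular the ERM, can achieve squared error less than a universal constant. A secondary issue is verifying that the Taylor-expansion remainder in the large-$n$ analysis truly has mean zero and does not cancel the leading term after averaging; both follow from the coordinate reflection symmetries of the hard distribution.
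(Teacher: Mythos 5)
The overall strategy---reduce to lower-bounding $\E[\bar{\v}_1(2)^2]$ after sign-fixing by $\be_1$, using $\|\bar{\v}_1\|\le 1$ and the reflection symmetry in the second coordinate---matches the paper. But your handling of the two regimes diverges, and the small-$n$ piece has a genuine gap.

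\paragraph{Large $n$.} Routing through \lemref{lem:taylor} works, but is heavier than needed. Notice that your Rademacher construction (specialized to $d=2$) has the same key structural feature the paper exploits: $\x(1)^2$ and $\x(2)^2$ are deterministic, so the empirical covariance has fixed diagonal and only the off-diagonal $y_n=\sqrt{1-\gap}\,n^{-1}\sum_k\eta_{k,1}\eta_{k,2}$ is random. The paper just plugs this into the explicit $2\times 2$ top-eigenvector formula $\hat{\bu}\propto\big(\tfrac{\gap}{2}+\sqrt{\tfrac{\gap^2}{4}+y_n^2},\,y_n\big)$ and obtains
\[
\E[\bar{\v}_1(2)^2]=\frac{1}{m}\,\E\!\left[\frac{y_n^2}{\tfrac{\gap^2}{2}+2y_n^2+\gap\sqrt{\tfrac{\gap^2}{4}+y_n^2}}\right],
\]
which is then bounded by one case split on whether $y_n^2\asymp 1/n$ dominates $\gap^2$. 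This handles both terms of the minimum simultaneously, with no Taylor expansion, no remainder control, no conditioning on a matrix-concentration event, and no cross-term Cauchy--Schwarz.

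\paragraph{Small $n$: the gap.} Your two-point argument does not go through as stated, for two reasons. First, with the construction $\x=\eta_1\be_1+\sqrt{1-\gap}\,\eta_2\be_2$, the two alternatives (original vs.\ coordinates swapped) are supported on $\{\pm1\}\times\{\pm\sqrt{1-\gap}\}$ and $\{\pm\sqrt{1-\gap}\}\times\{\pm1\}$ respectively, which are disjoint for $\gap>0$; the total variation distance is $1$, not $O(\sqrt n\,\gap)$. To get TV $\approx\sqrt n\,\gap$ you would have to switch to a different (e.g.\ Gaussian-like) distribution, which then has to be bounded and must still have eigengap exactly $\gap$, and---crucially---the lemma requires a \emph{single} distribution to serve for all $n$, so you cannot change the construction per regime. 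Second, even where the TV bound holds, a two-point bound gives $\max_{i\in\{0,1\}}\E_{P_i}[\text{error}]=\Omega(1)$, i.e.\ a minimax statement over the two alternatives, not a lower bound on the specific distribution $P_0$ for the specific ERM estimator. To convert it you would additionally need an equivariance argument (ERM is invariant under the coordinate swap that maps $P_0\leftrightarrow P_1$, hence the two risks are equal and both equal the max); you gesture at ``in particular the ERM'' but do not supply this step. Both issues are avoided by the direct route: in the regime $n\gap^2\lesssim 1$, with constant probability $|y_n|\gtrsim 1/\sqrt n\gg\gap$, so the off-diagonal dominates and the $2\times2$ formula gives $\hat{\v}_1(2)^2=\Theta(1)$; this is exactly what the paper's expression yields when $y_n^2\gg\gap^2$, and is simpler and self-contained.

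In short, the target quantity and the large-$n$ mechanism are correct (if circuitous), but the small-$n$ argument as proposed would not close, and the fix is to analyze the ERM directly on the explicit $2\times2$ eigenvector formula rather than appeal to a testing lower bound.
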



\begin{lemma}\label{lem:signFixLB:2}
For any $\gap \in (0,1)$ and $d>1$, there exist a distribution over vectors in $\reals^d$ (of norm at most $2$) with eigengap $\delta$ in the covariance matrix, such that for any number of machines $m$ and for per-machine sample size any $n$ sufficiently larger than $1/\gap^2$, the aggregated vector $\bar{\bv}_1=\frac{1}{m}\sum_{i=1}^{m}\hat{\bv}_1^{(i)}$ (even after sign fixing with the population eigenvector $\v_1$) satisfies
	\[
	\E\left[{1 - \langle{\frac{\bar{\v}_1}{\Vert{\bar{\v}_1}\Vert}, \be_1}\rangle^2}\right] ~=~ \Omega\left({\frac{1}{\gap^4n^2}}\right).
	\]
\end{lemma}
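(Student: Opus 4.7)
The plan is to exhibit a specific distribution for which each per-machine eigenvector $\hat{\bv}_1^{(i)}$ has a systematic second-order bias (in expectation) away from $\bv_1$ in a direction orthogonal to $\bv_1$ of magnitude $\Theta(1/(\delta^2 n))$. Since all machines share the same distribution, this bias does not wash out under averaging, and a Jensen-based reduction then yields the claimed $\Omega(1/(\delta^4 n^2))$ expected error for every $m$. Concretely, taking $\bv_1=\be_1$ and writing $\bar{\bv}_{1,\perp}$ for the component of $\bar{\bv}_1$ orthogonal to $\bv_1$, the bound $\|\bar{\bv}_1\|\leq 1$ (as an average of unit vectors) gives
\[
1-\Bigl\langle\frac{\bar{\bv}_1}{\|\bar{\bv}_1\|},\bv_1\Bigr\rangle^2 = \frac{\|\bar{\bv}_{1,\perp}\|^2}{\|\bar{\bv}_1\|^2}\geq \|\bar{\bv}_{1,\perp}\|^2,
\]
so taking expectation and applying Jensen (using i.i.d.\ to get $\E[\bar{\bv}_{1,\perp}]=\E[\hat{\bv}_{1,\perp}^{(1)}]$) reduces the lemma to exhibiting a bounded distribution with $\|\E[\hat{\bv}_{1,\perp}^{(1)}]\|=\Omega(1/(\delta^2 n))$.

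For the construction, I would use a three-point discrete distribution in $\reals^2$ (embedded into $\reals^d$ by zero-padding the remaining coordinates), since distributions symmetric under $x_2\mapsto -x_2$ (Gaussian, independent centered coordinates, symmetric mixtures) all give zero bias by symmetry. Supporting $\x$ on $\{(a,0),(c,d),(c',d')\}$ with three tunable probabilities gives enough freedom to simultaneously arrange (i) $\E[\x\x^\top]=\mathrm{diag}(1+\delta,1)$, so that $\bv_1=\be_1$ and the eigengap equals $\delta$, (ii) $\|\x\|\leq 2$, and most importantly (iii) the asymmetry condition $\E[x_1^3 x_2]\neq \E[x_1 x_2^3]$ that will drive the bias.

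The heart of the argument is a second-order Taylor expansion of $\hat{\bv}_1^{(1)}$ that refines Lemma \ref{lem:taylor}. Let $\bE:=\hat{\X}_1-\X$ and $\M:=(\lambda_1\I-\X)^\dagger$; on the matrix-Hoeffding event $\{\|\bE\|\leq \delta/4\}$ the function $t\mapsto\bv(t)$ (as in the proof of Lemma \ref{lem:taylor}) admits
\[
\hat{\bv}_1^{(1)} = \bv_1 + \M\bE\bv_1 + \tfrac{1}{2}\bv''(0) + R,\qquad \|R\|\leq C\|\bE\|^3/\delta^3,
\]
where $\bv''(0)=\partial_t\M(t)|_{t=0}\,\bE\bv_1+\M\bE\M\bE\bv_1$ is the very formula already derived inside the proof of Lemma \ref{lem:taylor}. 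The linear term has mean zero, the complementary event contributes at most $de^{-c\delta^2 n}$, and $\E\|R\|=O(1/(n^{3/2}\delta^3))=o(1/(n\delta^2))$ once $n\gg 1/\delta^2$. Working in the eigenbasis of $\X$, the two contributions inside $\bv''(0)$ collapse (after canceling $\be_1$-column terms using $\M\bv_1=0$ and $\lambda'(0)=\bE_{11}$) to the single expression
\[
\bv''(0)_2 = -\frac{2(\bE_{11}-\bE_{22})\bE_{12}}{\delta^2},
\]
so that $\E[\bv''(0)_2]=-\tfrac{2}{n\delta^2}(\E[x_1^3 x_2]-\E[x_1 x_2^3])$, which is $\Theta(1/(\delta^2 n))$ by our choice of distribution. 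Hence $\|\E[\hat{\bv}_{1,\perp}^{(1)}]\|=\Omega(1/(\delta^2 n))$, and the lemma follows from the Jensen reduction.

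The main obstacle I anticipate is the second-order bookkeeping: the pseudo-inverse derivative $\partial_t\M(t)|_{t=0}$ and the term $\M\bE\M\bE\bv_1$ each contribute terms of order $1/(\delta^2 n)$ in the transverse direction, and one must verify that they combine without cancellation and reduce to a single interpretable quantity. Once the coordinate-wise simplification yields the compact $(\bE_{11}-\bE_{22})\bE_{12}$ form above, whose expectation is the single sample-moment $\E[x_1^3 x_2]-\E[x_1 x_2^3]$, the remaining distribution-design step (solving the linear system for the three probabilities, covariance entries, and norm bound) is routine.
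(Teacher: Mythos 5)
Your proposal is correct and follows the same strategic blueprint as the paper: reduce via Jensen to showing a single machine's eigenvector has expected transverse component $\Omega(1/(\delta^2 n))$, then exhibit an asymmetric two-coordinate distribution so that the second-order Taylor term has nonzero mean, with the first-order term and the remainder both vanishing at lower order. Where you genuinely diverge is in the computational machinery for the Taylor coefficient: the paper writes down the explicit leading eigenvector of the $2\times 2$ empirical covariance, parameterizes it as a scalar function $g(t)$ of an interpolation variable, and differentiates that formula (appealing to symbolic computation for the higher derivatives), whereas you instead differentiate $\bv(t)$ via the matrix-perturbation identities already displayed inside Lemma~\ref{lem:taylor}, arriving at the compact expression $\bv''(0)_2 = -2(\bE_{11}-\bE_{22})\bE_{12}/\delta^2$, whose expectation cleanly isolates the bias-driver $\E[x_1^3 x_2]-\E[x_1 x_2^3]$. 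These agree: in the paper's construction $\bE_{11}=0$, $\bE_{22}=-r_n$, $\bE_{12}=y_n$, so $\tfrac{1}{2}\bv''(0)_2 = -r_n y_n/\delta^2$, which is exactly the second-order term of the paper's expansion. Your route has the advantage of making the role of the third-moment asymmetry transparent and coordinate-free, which would be the right starting point for any higher-dimensional generalization; the cost is that you need a third-derivative remainder bound $\|R\|=O(\|\bE\|^3/\delta^3)$, which Lemma~\ref{lem:taylor} as stated does not provide (it bounds only $\bv''$) — you should state explicitly that one more round of pseudo-inverse differentiation establishes it, a step the paper's own proof also elides by citing a symbolic calculation of $g'''$. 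Also note the paper uses an even simpler construction than your three-point one: it takes $x_1$ deterministic and $x_2=\xi$ a two-point asymmetric variable, which forces $\bE_{11}=0$ and shortens the bookkeeping; your extra freedom is not needed but does no harm once the norm and eigengap constraints are verified.
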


\begin{proof}[proof of Lemma \ref{lem:signFixLB:1}]
We will prove the result for $d=2$ (i.e. a distribution in $\reals^2$). This is without loss of generality, since we can always embed the distribution below in $\reals^d$ for any $d>2$ (say, by having all coordinates other than the first two identically zero).

Consider the distribution defined by the random vector 
$\bx = \sqrt{1+\gap}\be_1+\sigma\be_2$, where $\sigma$ is uniformly distributed on $\{-1,+1\}$, and $\be_1=(1,0),\be_2=(0,1)$ are the standard basis vectors. 
Clearly, the population covariance matrix is
\[
\X := \E[\bx\bx^\top] = \left(\begin{array}{cc}
1+\gap & 0\\ 0 & 1\end{array}\right),
\]
with a leading eigenvector $(1,0)$. Let us now consider the distribution of the output of a machine $i$. Given $n$ samples, the empirical covariance matrix is
\[
\hat{\X}_{(n)} = \left(\begin{array}{cc}
1+\gap & y_n\\ y_n & 1\end{array}\right)~~,~~ y_n:= \sqrt{1+\gap}\cdot\frac{1}{n}\sum_{i=1}^{n}\epsilon_i,
\]

where $\epsilon_i$ are i.i.d. and uniformly distributed on $\{-1,+1\}$. Using a 
standard formula for the leading eigenvector of a $2\times 2$ matrix 
\cite{eigs2x2}, we have that the leading eigenvector (and hence the output of 
any machine $i$) is of the form
\begin{equation}\label{eq:eigvec}
\hat{\bv}_1 = \frac{1}{\norm{\hat{\bu}}}\hat{\bu}~~~\text{where}~~~ \hat{\bu}:= \left(\frac{\gap}{2}+\sqrt{\frac{\gap^2}{4}+y_n^2}~,~y_n\right).
\end{equation}
Note that with this formula, the leading eigenvector is always closer to $(1,0)$ than $(-1,0)$, and converges to $(1,0)$ as $n\rightarrow \infty$. Thus, we can view the random variable $\hat{\bv}^{(i)}$ as the output of any machine $i$, given $n$ samples and after fixing the sign. 

Consider now the average of $m$ such vectors given by $\bar{\v} = \frac{1}{m}\sum_{i=1}^m\hat{\v}_1^{(i)}$.
Using \eqref{eq:eigvec}, we have that
\begin{eqnarray}\label{eq:signfixLB:1}
\E[\bar{\v}_1(2)^2] &=& \E\left[\left(\frac{1}{m}\sum_{i=1}^{m}\hat{\v}^{(i)}_{2}\right)^2\right] = \frac{1}{m^2}\sum_{i=1}^{m}\E[(\hat{\v}^{(i)}_{2})^2] = \frac{1}{m}\E[(\hat{\v}(2))^2] \nonumber \\
&=&\frac{1}{m}\E\left[\frac{y_n^2}{\frac{\gap^2}{2}+2y_n^2+\gap\sqrt{\frac{\gap^2}{4}+y_n^2}}\right].
\end{eqnarray}

By definition of $y_n$ and recalling that $\delta\in[0,1]$, we have that there exist universal constants $c_1,c_2 >0$ such that with constant probability it holds that $c_1/n \geq y_n^2 \geq c_2/n$. Using this fact and considering the two cases $1/n \geq \delta^2$ and $1/n < \delta^2$ in the RHS of Eq. \eqref{eq:signfixLB:1} separately, we can see that
\begin{eqnarray}\label{eq:signfixLB:2}
\E[\bar{\v}_1(2)^2] = \Omega\left({\frac{1}{m}\min\{1, \frac{1}{\delta^2n}\}}\right).
\end{eqnarray}

Using Eq. \eqref{eq:signfixLB:2} we have that
\begin{eqnarray*}
\E\left[{\langle{\frac{\bar{\v}_1}{\Vert{\bar{\v}_1}\Vert}, ~ \be_1}\rangle^2}\right] &=& \E\left[{\frac{\bar{\v}_1(1)^2}{\bar{\v}_1(1)^2+\bar{\v}_1(2)^2}}\right]
= 1 - \E\left[{\frac{\bar{\v}_1(2)^2}{\bar{\v}_1(1)^2+\bar{\v}_1(2)^2}}\right] \\
& \leq &1 - \E\left[{\bar{\v}_1(2)^2}\right] = 1 - \Omega\left({\min\left\{\frac{1}{m},\frac{1}{\gap^2mn}\right\}}\right),
\end{eqnarray*}
where the inequality follows since $\Vert{\bar{\v}_1}\Vert \leq 1$. 

\end{proof}

\begin{proof}[proof of Lemma \ref{lem:signFixLB:2}]
As in Lemma \ref{lem:signFixLB:1}, we prove the result for $d=2$, however, using a different construction. 
Consider the defined by the random vector
\[
\bx = 
\sqrt{1+\gap}\cdot\be_1+\xi\cdot \be_2,
\]
where 
$\xi$ is an independent random variable defined as:
\[ \xi = \left\{ \begin{array}{ll}
         \sqrt{2} & \mbox{w.p. $1/3$}\\
        -1/\sqrt{2} & \mbox{w.p. $2/3$}\end{array} \right. \]

It is easy to verify that $\E[\xi]=0$, $\E[\xi^2]=1$, $\E[\xi^3]  = 1/\sqrt{2}$. As we shall see, choosing $\xi$ to be asymmetric (as opposed to $\epsilon$ in the proof of Lemma \ref{lem:signFixLB:2}) will be key to our construction.
Clearly, the population covariance and the empirical covariance of a sample of size $n$ are given by
we 
have
\[
\X = \E[\bx\bx^\top] = \left(\begin{array}{cc}
1+\gap & 0\\ 0 & 1\end{array}\right) ~,
\qquad
\hat{\X}_{(n)} = \left(\begin{array}{cc}
1+\gap & y_n\\ y_n & z_n\end{array}\right),
\]
where
\[
 y_n:= 
\sqrt{1+\gap}\cdot\frac{1}{n}\sum_{i=1}^{n}\xi_i~, \qquad z_n:= 
\frac{1}{n}\sum_{i=1}^{n}\xi_i^2,
\]
with $\xi_1,\ldots,\xi_n$ being i.i.d. copies of the random variable $\xi$.

Clearly the leading eigenvector of $\X$ is $\be_1 = (1,0)$. 
Consider now $\hat{\v}_1^{(1)},\ldots,\hat{\v}_1^{(m)}$ to be the leading eigenvectors of $m$ i.i.d. empirical covariance matrices of $n$ samples, $\hat{\X}_{(n)}^{(1)},\ldots,\hat{\X}_{(n)}^{m)}$, and let $\bar{\v}_1$ denote their average after sign-fixings according to the leading eigenvector of the population covariance $\be_1$. In the following, we let $\hat{v}_j^i$ denote the $j$th coordinate in the eigenvector $\hat{\v}_1^{(i)}$.

It holds that
\begin{eqnarray}\label{eq:signFixLB2:1}
\E\left[{\langle{\frac{\bar{\v}_1}{\Vert{\bar{\v}_1}\Vert}, ~ \be_1}\rangle^2}\right] &=& \E\left[{\frac{\bar{\v}_1(1)^2}{\bar{\v}_1(1)^2+\bar{\v}_1(2)^2}}\right]
= 1 - \E\left[{\frac{\bar{\v}_1(2)^2}{\bar{\v}_1(1)^2+\bar{\v}_1(2)^2}}\right] \nonumber \\
& \leq &1 - \E\left[{\bar{\v}_1(2)^2}\right] = 1 - \E\left[\left(\frac{1}{m}\sum_{i=1}^{m}\sign(\hat{v}^i_1)\hat{v}^i_2\right)^2\right] \nonumber\\
&\leq& 1 - \left(\frac{1}{m}\sum_{i=1}^{m}\E\left[\sign(\hat{v}^i_1)\hat{v}^i_2\right]\right)^2 \nonumber \\
&=&1- \left(\E[\sign(\hat{v}^1_1)\hat{v}^1_2]\right)^2,
\end{eqnarray}
where the first inequality follows since $\Vert{\bar{\v}_1}\Vert \leq 1$, the second inequality follows from Jensen's inequality, and the last equality follows from the fact that $\hat{\v}_1^{(1)},\ldots,\hat{\v}_1^{(m)}$
are i.i.d. random variables. From this chain of inequalities, it follows that it is enough to 
lower bound $\left(\E[\sign(\hat{v}^1_1)\hat{v}^1_2]\right)^2$, where 
$\hat{\bv}^1$ is the leading eigenvector computed by machine $1$.

Let us now consider the distribution of the leading eigenvector of the empirical covariance matrix $\hat{\X}_{(n)}$.  Using a standard formula 
for the leading eigenvector of a $2\times 2$ matrix \cite{eigs2x2}, we have 
that 
this leading eigenvector $\hat{\bv}_1$ is proportional to
\begin{equation}
\left(\frac{\gap+1-z_n}{2}+\sqrt{\left(\frac{\gap+1-z_n}{2}\right)^2+y_n^2}~,~y_n\right)
\end{equation}
Assume for now that $z_n\leq 1+c\gap$ for some positive constant $c$ to be fixed later (note this happens with arbitrarily 
high probability as $n\rightarrow \infty$, as $z_n$ converges to $1$ in 
probability). In that case, the sign of the first coordinate in the formula above is  positive, and has the same sign as the first coordinate of the leading eigenvector $\bv_1=(1,0)$. Moreover, we know 
that $\hat{\bv}_1^{(1)}$ must have unit norm, from which follows that
\begin{equation}\label{eq:eigvec2}
\sign(\hat{v}^1_1)\cdot\hat{\bv}_1^{(1)} ~=~
\frac{\left(\frac{\gap+1-z_n}{2}+\sqrt{\left(\frac{\gap+1-z_n}{2}\right)^2+y_n^2}
~,~y_n\right)}
{\sqrt{y_n^2+\left(\frac{\gap+1-z_n}{2}+\sqrt{\left(\frac{\gap+1-z_n}{2}\right)^2
+y_n^2}\right)^2}}.
\end{equation}
In particular, letting $r_n = 1-z_n$, we have that if $r_n\geq -c\gap$, then
\begin{align}\label{eq:signFixLB2:2}
\sign(\hat{v}^1_1)\cdot \hat{v}^1_2 &~=~
\frac{y_n}{\sqrt{y_n^2+\left(\frac{\gap+r_n}{2}+\sqrt{\left(\frac{\gap+r_n}{2}\right)^2
+y_n^2}\right)^2}} \nonumber \\
&~=~
\frac{y_n}{\sqrt{y_n^2+\left(\frac{\gap+r_n}{2}\right)^2
\left(1+\sqrt{1
+\left(\frac{2y_n}{\gap+r_n}\right)^2}\right)^2}}.
\end{align}
Towards using Eq. \eqref{eq:signFixLB2:1} to derive the lower bound, the main step is to bound the expectation of the RHS of Eq.\eqref{eq:signFixLB2:2} away from zero. To get an intuition why this is possible, observe that when $n \rightarrow \infty$ (in particular, when it is significantly larger than $1/\delta^2$), it holds that 
\begin{eqnarray*}
\textrm{RHS of \eqref{eq:signFixLB2:2}} ~\approx ~ \frac{y_n}{\sqrt{y_n^2 + \Theta(\delta^2)}},
\end{eqnarray*}
since in this regime, with high probability, $r_n << \delta$ and $y_n << 1$.
Now comes to play our choice of $\xi$ to be an asymmetric random variable. If, just for sake of intuition, we set $n=1$, it is easy to verify that despite the fact that $\E[y_n] =0$, it holds that
\begin{eqnarray*}
\E\left[{\frac{y_n}{\sqrt{y_n^2 + \Theta(\delta^2)}}}\right] = \E\left[{\frac{\xi}{\sqrt{\xi^2 + \Theta(\delta^2)}}}\right] < 0.
\end{eqnarray*}
Note in particular that taking $\xi$ to be uniformly distributed on $\{-1,+1\}$, as in Lemma \ref{lem:signFixLB:1}, will still give zero expectation, and hence will not work.
We now formalize this intuition. We will use a Taylor expansion of the formula above, in order to bound its expectation (over $y_n,r_n$), from which a lower bound on $\left(\E\left[\sign(\hat{v}^1_1)\cdot \hat{v}^1_2\right]\right)^2$ would follow. To that end, define the function
\[
g(t) = \frac{ty_n}{\sqrt{(ty_n)^2+\left(\frac{\gap+t r_n}{2}\right)^2
\left(1+\sqrt{1
+\left(\frac{2t y_n}{\gap+t r_n}\right)^2}\right)^2}}~~,~~ t\in [0,1],
\]
and note that $g(1)$ equals $\text{sign}(\hat{v}^1_1)\cdot\hat{v}^1_2$ as defined 
above. By a Taylor expansion, we have
\[
\text{sign}(\hat{v}^1_1)\cdot\hat{v}^1_2 = g(1) = 
g(0)+g'(0)+\frac{1}{2}g''(0)+\frac{s^3}{6}g'''(s)
\]
for some $s\in [0,1]$. A tedious calculation of $g$'s derivatives\footnote{Using MATLAB's symbolic math toolbox together with some straightforward manual calculations} reveals that this implies
\begin{eqnarray}\label{eq:signFixLB2:3}
\text{sign}(\hat{v}^1_1)\cdot\hat{v}^1_2 ~=~ 
\frac{y_n}{\gap}-\frac{r_ny_n}{\gap^2}\pm 
\Ocal\left(\frac{|y_n|^3+|r_n|^3}{\gap^3}\right),
\end{eqnarray}
assuming $\max\{|y_n|,|r_n|\}\leq c\gap$ for some constant $c$ (hence fixing $c$ we used in our earlier assumptions on $r_n,z_n$). 
To simplify notation, let $q_n = \text{sign}(\hat{v}^1_1)\cdot\hat{v}^1_2$, let $b_n = 
\frac{y_n}{\gap}-\frac{r_ny_n}{\gap^2}\pm 
\Ocal\left(\frac{|y_n|^3+|r_n|^3}{\gap^3}\right)$ be the expression on the right-hand side of the equation above, and let $A$ be the event 
that $\max\{|y_n|,|r_n|\}\leq c\gap$ indeed holds. Also, note that with 
probability $1$, $|q_n|\leq 1$ and $|b_n| = \Ocal(1/\gap^3)$. Thus, by Eq. \eqref{eq:signFixLB2:3}, we have that $\E[q_n|A]=\E[b_n|A]$, and therefore
\begin{align*}
\E[q_n] &= 
\Pr(\neg A)\cdot\E[q_n|
\neg 
A]+\Pr(A)\cdot\E[q_n| A]\\
&= \Pr(\neg A)\cdot\E[q_n|
\neg 
A]+\Pr(A)\cdot \E[b_n|A] \\
&= \Pr(\neg A)\cdot\E[q_n|
\neg 
A]+\E[b_n]-\Pr(\neg A)\cdot \E[b_n|\neg A] \\
&= \E[b_n]\pm \Ocal\left(\Pr(\neg A)/\gap^3\right).
\end{align*}
Plugging back the definitions of $q_n,b_n,A$, we get that
\[
\E\left[\text{sign}(\hat{v}^1_1)\cdot\hat{v}^1_2\right] ~=~ 
\E\left[\frac{y_n}{\gap}-\frac{r_ny_n}{\gap^2}\pm 
\Ocal\left(\frac{|y_n|^3+|r_n|^3}{\gap^3}\right)\right]\pm 
\Ocal\left(\frac{1}{\gap^3}\Pr(\max\{|y_n|,|r_n|\}> c\gap)\right).
\]
Recalling that $y_n = \sqrt{1+\gap}\cdot\frac{1}{n}\sum_{i=1}^{n}\xi_i$ and 
$r_n = 1-z_n = 1-\frac{1}{n}\sum_{i=1}^{n}\xi_i^2$, where $\xi_i$ are i.i.d. 
copies of a zero-mean, bounded random variable satisfying $\E[\xi^3]=1/\sqrt{2}$, and using Hoeffding's inequality, it 
is easily verified that the above equals
\[
0 + \sqrt{1+\gap}\frac{1}{\sqrt{2}\gap^2 n}\pm\Ocal\left(\frac{1}{(\gap^2 
n)^{3/2}}\right)\pm\Ocal\left(\frac{1}{\gap^3}\exp(-\Omega(n\gap^2))\right),
\]
which is $\Omega\left(\frac{1}{\gap^2 n}\right)$ assuming $n$ is sufficiently larger than $1/\gap^2$. As a result, we get that $\left(\E\left[\text{sign}(\hat{v}^1_1)\cdot\hat{v}^1_2\right]\right)^2 = \Omega\left(\frac{1}{\gap^4 n^2}\right)$ as required.
\end{proof}

\section{Proofs Omitted from Section \ref{sec:multiRoundAlgs}}

\subsection{Proof of Lemma \ref{lem:approxERM4PCA}}
\begin{proof}
Let $\bullet$ denote the standard inner product for matrices, i.e., $\A\bullet\B = \trace(\A\B^{\top})$.
It holds that
\begin{eqnarray*}
(\w^{\top}\v_1)^2 &=& \w\w^{\top}\bullet\v_1\v_1^{\top} \geq \hat{\v}_1\hat{\v}_1^{\top}\bullet\v_1\v_1^{\top} - \Vert{\w\w^{\top}-\hat{\v}_1\hat{\v}_1^{\top}}\Vert_F\cdot\Vert{\v_1\v_1^{\top}}\Vert \\
&= & (\w^{\top}\v_1)^2- \sqrt{2(1-1(\w^{\top}\hat{\v}_1)^2)} \geq (\w^{\top}\v_1)^2 - \sqrt{2\epsilon}.
\end{eqnarray*}
\end{proof}

\subsection{Proof of Lemma \ref{lem:precondLS}}
\begin{proof}
Observe that $\C = \M + (\hat{\X} - \hat{\X}_1) + \mu\I$. Thus, by our assumption on $\mu$ it follows that
\begin{eqnarray}\label{eq:lem:precondLS:1}
\M + 2\mu\I \succeq  \C \succeq \M. 
\end{eqnarray}

Since $\tilde{F}_{\lambda,\w}(\y)$ is twice differentiable, in order to bound its smoothness and strong-convexity parameters, it suffices to upper bound the largest eigenvalue and lower bound the smallest eigenvalue of its Hessian, respectively.

The Hessian of $\tilde{F}_{\lambda,\w}(\y)$ is given by $\nabla^2\tilde{F}_{\lambda,\w}(\y) = \C^{-1/2}\M\C^{-1/2}$.


From Eq. \eqref{eq:lem:precondLS:1} it follows that we can write $\M = \C - \Delta$ where $\Delta \succeq 0$.

Thus we have that
\begin{eqnarray}\label{eq:lem:precondLS:3}
\lambda_1(\C^{-1/2}\M\C^{-1/2}) = \lambda_1(\C^{-1/2}(\C-\Delta)\C^{-1/2}) \leq  \lambda_1(\I) = 1,
\end{eqnarray}
where the inequality follows since $\C^{-1/2}\Delta\C^{-1/2}$ is positive semidefinite.

Since $\M,\C$ are invertible and positive definite, Eq. \eqref{eq:lem:precondLS:1} implies that
\begin{eqnarray}\label{eq:lem:precondLS:4}
\M^{-1} \succeq \C^{-1} \succeq (\M+2\mu\I)^{-1} .
\end{eqnarray}

Thus we have that
\begin{eqnarray}\label{eq:lem:precondLS:5}
\lambda_d(\C^{-1/2}\M\C^{-1/2}) &=& \lambda_d(\M^{1/2}\C^{-1/2}\C^{-1/2}\M\C^{-1/2}\C^{1/2}\M^{-1/2})
= \lambda_d(\M^{1/2}\C^{-1}\M^{1/2}) \nonumber\\
&\geq &  \lambda_d(\M^{1/2}(\M+2\mu\I)^{-1}\M^{1/2})
= \min_{i\in[d]}\{\frac{\lambda_i(\M)}{\lambda_i(\M)+2\mu}\} \nonumber \\
&=& \frac{\lambda_d(\M)}{\lambda_d(\M)+2\mu} = \frac{\lambda-\hat{\lambda}_1}{(\lambda-\hat{\lambda}_1) + 2\mu},
\end{eqnarray}
where the first equality follows from matrix similarity and the fact that $\M,\C$ are invertible, and the first inequality follows from Eq. \eqref{eq:lem:precondLS:4}.


To prove the second part of the lemma we observe that
\begin{eqnarray*}
\Vert{\tilde{\z} - \M^{-1}w}\Vert &=& \Vert{\C^{-1/2}\tilde{\y} - \C^{-1/2}\C^{1/2}\M^{-1}w}\Vert \leq \Vert{\C^{-1/2}}\Vert\cdot\Vert{\tilde{\y} - \C^{1/2}\M^{-1}\w}\Vert \\
&\leq &  \frac{1}{\sqrt{\lambda-\lambda_1(\hat{\X})}}\Vert{\tilde{\y} - \C^{1/2}\M^{-1}\w}\Vert,
\end{eqnarray*}
where the second inequality follows from Eq. \eqref{eq:lem:precondLS:4}.

Finally, the last part of the lemma follows from a direct application of Theorem \ref{thm:matHoff} to upper bound $\Vert{\X-\hat{\X}_1}\Vert$.
\end{proof}

\subsection{Proof of Lemma \ref{lem:distPrecondLS}}

\begin{proof}
Let $\z^* := (\lambda\I-\hat{\X})^{-1}\w, \y^* := \C^{1/2}(\lambda\I-\hat{\X})^{-1}\w$, and recall that $\z^*$ and $\y^*$ are the global minimizers of $F_{\lambda,\w}(\z)$ and $\tilde{F}_{\lambda,\w}(\y)$, respectively.
Using the results of Lemma \ref{lem:precondLS} we have that
\begin{eqnarray*}
\Vert{\tilde{\z} - \z^*}\Vert \leq (\lambda-\hat{\lambda}_1)^{-1/2}\Vert{\tilde{\y}-\y^*}\Vert \leq (\lambda-\hat{\lambda}_1)^{-1/2}\sqrt{2\left({1+ \frac{2\mu}{\lambda-\hat{\lambda}_1}}\right)\epsilon'},
\end{eqnarray*}
where the second inequality follows from the strong-convexity of $\tilde{F}_{\lambda,\w}(\y)$.
Thus, it suffices to set $\epsilon'$ as stated in the lemma in order to obtain the approximation guarantee for $\tilde{\z}$.

To upper-bound the total number of communication rounds required to obtain $\tilde{\y}$ with the guarantee prescribed in the lemma, we note that both the conjugate gradient method and Nesterov's accelerated gradient method require 
\begin{eqnarray}\label{eq:lem:distLS:1}
O\left({\sqrt{\frac{\beta}{\alpha}}\ln\left({\Vert{\y^*}\Vert/\epsilon'}\right)}\right)
\end{eqnarray}
calls to the first-order oracle of $\tilde{F}_{\lambda,\w}(\y)$ to  obtain 
$\tilde{\y}$ satisfying $\tilde{F}_{\lambda,\w}(\tilde{\y}) - 
\min_{\y\in\reals^d}\tilde{F}_{\lambda,\w}(\y) \leq \epsilon'$, where $\alpha$ 
and $\beta$ are the strong-convexity and smoothness parameters of 
$\tilde{F}_{\lambda,\w}$, respectively, and assuming w.l.o.g. that the initial 
iterate is $\y_0 = \vec{\textbf{0}}$. Thus, by our construction of a 
distributed first-order oracle given in Algorithm \ref{alg:distFOO}, we have 
that the total number of communication rounds is upper bounded by 
\eqref{eq:lem:distLS:1}.
The lemma now follows from noticing that by Lemma \ref{lem:precondLS} we have that $\beta/\alpha = 1 + \frac{2\mu}{\lambda-\hat{\lambda}_1}$ and that 
\begin{eqnarray*}
\Vert{\y^*}\Vert = \Vert{\C^{1/2}(\lambda\I-\hat{\X}_1)\w}\Vert \leq \lambda_1(\C^{1/2})(\lambda-\hat{\lambda}_1)^{-1}\Vert{\w}\Vert = O\left({\Vert{\w}\Vert/(\lambda-\hat{\lambda}_1)}\right).
\end{eqnarray*}
\end{proof}

\subsection{Proof of Theorem \ref{thm:shiftNinvert:main}}
\begin{proof}
Under our assumption that $mn = \Omega(\delta^{-2}\ln(d/p))$, the following three events all hold with probability at least $1-p$ (each of which holds w.p. at least $1-p/3$):
\begin{enumerate}
\item
the output $\w_f$ satisfies $(\w_f^{\top}\hat{\v}_1)^2 \geq 1-\epsilon$ (holds w.p. $1-p/3$ by applying Lemma \ref{lem:convexEV} with our choice of parameters)
\item 
$\hat{\delta} = \Theta(\delta)$ (by applying Theorem \ref{thm:matHoff})
\item 
$\Vert{\hat{\X} - \hat{\X}_1}\Vert \leq \mu$, where $\mu$ is as prescribed in the Theorem (by applying Theorem \ref{thm:matHoff})
\end{enumerate}
The approximation guarantee of $\w_f$ follows directly from Lemma \ref{lem:convexEV}. It thus remains to upper-bound the number of matrix-vector products.
Thus, combining Lemmas \ref{lem:convexEV} and \ref{lem:distPrecondLS} we have that when using either the conjugate gradient method or Nesterov's accelerated method to approximately solve the linear systems in Algorithm \ref{alg:convexEV}, as prescribed in Lemma \ref{lem:distPrecondLS}, the total number of distributed matrix-vector products with $\hat{\X}$ is:
\begin{eqnarray*}
&&O\left({\ln\left({\frac{d}{p\epsilon}}\right)\cdot\left({\sqrt{1 + \frac{2\mu}{\delta}}\left({\ln\delta^{-1}\ln\left({\frac{d}{p\epsilon}}\right) + \ln\left({\frac{\left({1 + 2\mu/\delta}\right)}{\delta\tilde{\epsilon}}}\right)}\right)}\right)}\right) = \\
&& O\left({\sqrt{1 + \frac{2\mu}{\delta}}\left({\ln\delta^{-1}\ln^2\left({\frac{d}{p\epsilon}}\right) + \ln\left({\frac{d}{p\epsilon}}\right)\left({\ln\left({\frac{\left({1 + 2\mu/\delta}\right)}{\delta}}\right)+\ln\left({\frac{1}{\tilde{\epsilon}}}\right)}\right)}\right)}\right) = \\
&& O\left({\sqrt{1 + \frac{2\mu}{\delta}}\left({\ln\delta^{-1}\ln^2\left({\frac{d}{p\epsilon}}\right) + \ln\left({\frac{d}{p\epsilon}}\right)\ln\left({\frac{\left({1 + 2\mu/\delta}\right)}{\delta}}\right)+\ln^2\left({\frac{d}{p\epsilon}}\right)\ln\left({\frac{1}{\delta}}\right)}\right)}\right)
,
\end{eqnarray*}
where the first term in the $O(\cdot)$ in the first row accounts for the total number of instances of $F_{\lambda,\w}(\z)$ needs to be solved, given by the bound in Lemma \ref{lem:convexEV}, and the second term in the first row accounts for the communication-complexity of solving each such instance according to Lemma \ref{lem:distPrecondLS}. Additionally, we have used Lemma \ref{lem:convexEV} to lower bound $\lambda-\hat{\lambda}_1 = \Omega(\hat{\delta})$, and $\tilde{\epsilon}(\epsilon)$ is as prescribed in Algorithm \ref{alg:convexEV}. Finally, we have upper-bounded $\ln(\Vert{\w}\Vert)$, in all instances of $F_{\lambda,\w}(\z)$ solved throughout the run of the algorithm, by noticing that in all of them it holds that 
\begin{eqnarray*}
\ln(\Vert{\w}\Vert) = O\left({\ln\left({\lambda_{(s)}-\hat{\lambda}_1)^{-\max\{m_1,m_2\}}}\right)}\right) = O\left({\ln\delta^{-1}\ln\left({\frac{d}{p\epsilon}}\right)}\right),
\end{eqnarray*}
where $m_1,m_2$ are as prescribed in Algorithm \ref{alg:convexEV}, and we have used Lemma \ref{lem:convexEV} again to lower bound $\lambda_{(s)}-\hat{\lambda}_1 = \Omega(\delta)$.

Finally, using Lemma \ref{lem:precondLS}, we can set $\mu =  \frac{4\sqrt{\ln(3d/p)}}{\sqrt{n}}$. Thus, the overall number of communication rounds is upper-bound by
\begin{eqnarray*}
&& O\left({\sqrt{\frac{\sqrt{\ln(d/p)}}{\delta\sqrt{n}}}\left({\ln\left({\frac{d}{p\epsilon^2}}\right)\ln\left({\frac{\sqrt{\ln(d/p)}}{\delta^2\sqrt{n}}}\right)+\ln^2\left({\frac{d}{p\epsilon^2}}\right)\ln\left({\frac{1}{\delta}}\right)}\right)}\right).
\end{eqnarray*}
\end{proof}

\section{Proof of the Davis-Kahan sin$\theta$ Theorem}

We prove Theorem \ref{thm:DK} in greater generality. In particular, Theorem \ref{thm:DK} follows from setting $k=1$ in the next theorem.

\begin{theorem}[Davis-Kahan sin$\theta$ theorem]
Let $\X,\Y$ be symmetric real $d\times d$ matrices and fix $k\in[d]$. Let $\V_{\X}$ and $\V_{\Y}$ denote $d\times k$ matrix whose columns are the top $k$ eigenvectors of $\X$ and the matrix whose columns are the top $k$ eigenvectors of $\Y$, respectively. Also, suppose that $\delta_k(\X) := \lambda_k(\X) - \lambda_{k+1}(\X) > 0$. Then it holds that
\begin{eqnarray*}
\Vert{\V_{\X}\V_{\X}^{\top} - \V_{\Y}\V_{\Y}^{\top}}\Vert_F \leq 2\frac{\Vert{\X-\Y}\Vert}{\delta_k(\X)}.
\end{eqnarray*}
\end{theorem}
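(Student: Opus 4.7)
The plan is to prove the theorem by reducing the Frobenius distance between the projections to the sine of the principal angles between the two subspaces, and then bounding those sines via a Sylvester equation driven by the spectral gap of $\X$.

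Let $P = \V_\X\V_\X^\top$ and $Q = \V_\Y\V_\Y^\top$ be the orthogonal projections onto the top-$k$ eigenspaces. The first step is the standard Frobenius identity
\[
\|P - Q\|_F^2 ~=~ 2k - 2\,\mathrm{tr}(PQ) ~=~ 2\,\mathrm{tr}\bigl(P(\I - Q)\bigr) ~=~ 2\,\|(\I - Q)\V_\X\|_F^2 ~=~ 2\,\|(\V_\Y^\perp)^\top \V_\X\|_F^2,
\]
where $\V_\Y^\perp$ is a $d\times (d-k)$ matrix of orthonormal bottom eigenvectors of $\Y$. This reduces the problem to bounding $A := (\V_\Y^\perp)^\top\V_\X$ in Frobenius norm.

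For the second step, I would derive a Sylvester-type equation for $A$. From $\X\V_\X = \V_\X \Lambda_\X^{+}$, where $\Lambda_\X^{+} = \mathrm{diag}(\lambda_1(\X),\ldots,\lambda_k(\X))$, and $(\V_\Y^\perp)^\top \Y = \Lambda_\Y^{-}(\V_\Y^\perp)^\top$, where $\Lambda_\Y^{-} = \mathrm{diag}(\lambda_{k+1}(\Y),\ldots,\lambda_d(\Y))$, one gets
\[
\Lambda_\Y^{-}\,A - A\,\Lambda_\X^{+} ~=~ (\V_\Y^\perp)^\top(\Y - \X)\,\V_\X .
\]
Reading this entrywise in the prescribed bases, the coordinate $A_{ij}$ is divided by $\lambda_{k+i}(\Y) - \lambda_j(\X)$. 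By Weyl's inequality, $\lambda_{k+1}(\Y) \le \lambda_{k+1}(\X) + \|\X - \Y\|$, while the $\lambda_j(\X)$'s are at least $\lambda_k(\X)$, so all these denominators are at least $\delta_k(\X) - \|\X - \Y\|$ in absolute value. Consequently
\[
\|A\|_F ~\le~ \frac{\bigl\|(\V_\Y^\perp)^\top(\Y-\X)\V_\X\bigr\|_F}{\delta_k(\X) - \|\X - \Y\|},
\]
and the numerator is controlled by $\|\X-\Y\|$ (times a factor stemming from the rank-$k$ structure of the product).

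Combining the two pieces gives the bound whenever $\|\X-\Y\|$ is sufficiently smaller than $\delta_k(\X)$. The remaining case, $\|\X-\Y\|$ comparable to $\delta_k(\X)$, is handled separately: the RHS of the theorem is then larger than a constant, and so the trivial estimate on $\|P - Q\|_F$ (two orthogonal projections) suffices. The main obstacle is the second step, namely turning the Sylvester identity into a clean norm inequality, because the eigengap between $\Lambda_\X^{+}$ and $\Lambda_\Y^{-}$ must be extracted from $\delta_k(\X)$ through Weyl's inequality, which forces the aforementioned case split. Once that split is executed, the overall argument is routine linear algebra.
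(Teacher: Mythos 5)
Your argument is a correct rendition of the classical Davis--Kahan proof via the Sylvester (commutator) equation, and it is genuinely different from the paper's route. The paper works directly with trace inner products: it decomposes $\P_\Y$ into the four blocks $\P_\X\P_\Y\P_\X$, $\P_\X^\perp\P_\Y\P_\X$, etc., exploits $\P_\X^\perp\X\P_\X = 0$, and chains inequalities of the form $\trace(\A\B)\leq\trace(\A)\lambda_1(\B)$ to arrive at a quadratic inequality in $\|\P_\X-\P_\Y\|_F$. You instead convert the Frobenius distance into the principal-angle quantity $\|(\V_\Y^\perp)^\top\V_\X\|_F$ and attack it entrywise through the commutator identity. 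Both are valid; the paper's version is shorter and avoids a case split, while yours is the textbook route and is arguably more transparent about where the eigengap enters.

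On the quantitative side, your sketch is honest about the one point that matters: the numerator bound $\|(\V_\Y^\perp)^\top(\Y-\X)\V_\X\|_F \leq \|(\Y-\X)\V_\X\|_F \leq \sqrt{k}\,\|\X-\Y\|$ carries a $\sqrt{k}$ factor, so together with the $\sqrt{2}$ from the first step your bound reads $\|\P_\X-\P_\Y\|_F\leq \sqrt{2k}\,\|\X-\Y\|/(\delta_k(\X)-\|\X-\Y\|)$, and after the case split the constant scales like $\sqrt{k}$, not $2$. That is not a flaw in your argument but a feature: the theorem as written (Frobenius norm on the left, spectral norm on the right, $k$-independent constant) cannot hold for large $k$ --- take $\X,\Y$ diagonal with the top-$k$ and bottom-$k$ eigenvalues swapped, so that $\|\X-\Y\|=\delta_k(\X)$ while $\|\P_\X-\P_\Y\|_F=\sqrt{2k}$. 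Your approach naturally produces the correct $\sqrt{k}$ dependence; the paper's own proof glosses over it in the step $(\Y-\X)\bullet(\P_\Y-\P_\X)\leq\|\X-\Y\|\,\|\P_\X-\P_\Y\|_F$, which, done carefully via the von Neumann trace inequality and the rank of $\P_\X-\P_\Y$, also picks up a $\sqrt{2k}$. None of this affects the paper's applications, which only invoke $k=1$, where your argument (with an optimized threshold in the case split) yields a constant of roughly $2\sqrt{2}$ rather than $2$ --- harmless for results stated up to constants.
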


\begin{proof}
Throughout the proof we denote the projection matrices:
\begin{eqnarray*}
\P_{\X} := \V_{\X}\V_{\X}^{\top},~~  \P_{\X}^{\perp} := \I - \V_{\X}\V_{\X}^{\top}, ~~
\P_{\Y} := \V_{\Y}\V_{\Y}^{\top}, ~~  \P_{\Y}^{\perp} := \I - \V_{\Y}\V_{\Y}^{\top},
\end{eqnarray*}
i.e., $\P_{\X}$ is the projection matrix onto the top $k$ eigenvectors of $\X$ and $\P_{\X}^{\perp}$ is the projection matrix onto the lower $d-k$ eigenvectors, and same goes for $\P_{\Y}, \P_{\Y}^{\perp}$.
We also let $\A\bullet\B$ denote the standard inner products between matrices $\A,\B$.

We can write $\P_{\Y}$ as
\begin{eqnarray}\label{eq:dk:1}
\P_{\Y} &=& \P_{\X}\P_{\Y}\P_{\X} + \P^{\perp}_{\X}\P_{\Y}\P_{\X} + \P_{\X}\P_{\Y}\P^{\perp}_{\X} + \P^{\perp}_{\X}\P_{\Y}\P^{\perp}_{\X}.
\end{eqnarray}


Observe that
\begin{eqnarray}\label{eq:dk:2}
\P_{\X}\P_{\Y}\P^{\perp}_{\X}\bullet\X = \trace\left({\P_{\X}\P_{\Y}\P^{\perp}_{\X}\X}\right) = \trace\left({\P_{\Y}\P^{\perp}_{\X}\X\P_{\X}}\right) = 0,
\end{eqnarray}
where the second equality follows from the cyclic property of the trace, and the last equality follows since $\P^{\perp}_{\X}\X\P_{\X} = \textbf{0}_{d\times d}$. 
Using Eq. \eqref{eq:dk:1} and \eqref{eq:dk:2} we have that
\begin{eqnarray}\label{eq:dk:4}
\P_{\Y} \bullet \X &=& \P_{\X}\P_{\Y}\P_{\X} \bullet \X + \P^{\perp}_{\X}\P_{\Y}\P^{\perp}_{\X}\bullet\X = \trace\left({\P_{\X}\P_{\Y}\P_{\X}\X}\right) + \trace\left({\P^{\perp}_{\X}\P_{\Y}\P^{\perp}_{\X}\X}\right) \nonumber \\
&=& \trace\left({\P_{\Y}\P_{\X}\X}\right) + \trace\left({\P^{\perp}_{\X}\P_{\Y}\P^{\perp}_{\X}\P^{\perp}_{\X}\X}\right) 
\leq  \trace\left({\P_{\Y}\P_{\X}\X}\right) + \trace\left({\P^{\perp}_{\X}\P_{\Y}\P^{\perp}_{\X}}\right)\cdot\lambda_1(\P^{\perp}_{\X}\X) \nonumber \\
&=& \trace\left({\P_{\Y}\P_{\X}\X}\right) + \lambda_{k+1}(\X)\cdot\trace\left({\P^{\perp}_{\X}\P_{\Y}}\right),
\end{eqnarray}
where the inequality follows since for any two positive semidefinite matrices $\A,\B$ it holds that $\trace(\A\B) \leq \trace(\A)\cdot\lambda_1(\B)$ and the fact that $\P^{\perp}_{\X}\X$ is positive semidefinite. The last equality follows since $\lambda_1(\P^{\perp}_{\X}\X) = \lambda_{k+1}(\X)$.
It further holds that
\begin{eqnarray}\label{eq:dk:5}
\P_{\Y} \bullet \Y &\geq & \P_{\X}\bullet\Y = \trace(\P_{\X}\X) + \P_{\X} \bullet (\Y-\X) . 
\end{eqnarray}

Subtracting Eq. \eqref{eq:dk:5} from Eq. \eqref{eq:dk:4} we have that
\begin{eqnarray*}
\trace\left({\P_{\Y}\P_{\X}\X}\right) + \lambda_{k+1}(\X)\cdot\trace\left({\P^{\perp}_{\X}\P_{\Y}}\right) - \trace(\P_{\X}\X) - \P_{\X} \bullet (\Y-\X) 
\geq \P_{\Y} \bullet \X - \P_{\Y} \bullet \Y .
\end{eqnarray*}

Rearranging we have that
\begin{eqnarray}\label{eq:dk:6}
\trace\left({(\I - \P_{\Y})\P_{\X}\X}\right) - \lambda_{k+1}(\X)\cdot\trace\left({\P^{\perp}_{\X}\P_{\Y}}\right) &\leq & (\Y-\X)\bullet(\P_{\Y} - \P_{\X})  \nonumber \\
&\leq &\Vert{\X-\Y}\Vert\cdot\Vert{\P_{\X}- \P_{\Y}}\Vert_F.
\end{eqnarray}

It holds that
\begin{eqnarray}\label{eq:dk:7}
\trace\left({(\I - \P_{\Y})\P_{\X}\X}\right)&=& \trace\left({\P_{\X}(\I - \P_{\Y})\P_{\X}\P_{\X}\X}\right) \nonumber \\
&\geq & \trace\left({\P_{\X}(\I - \P_{\Y})\P_{\X}}\right)\cdot\lambda_k(\P_{\X}\X) \nonumber \\
&= & \trace\left({\P_{\X} - \P_{\Y}\P_{\X})}\right)\cdot\lambda_k(\X) \nonumber \\
&=& \left({k -\P_{\X}\bullet\P_{\Y}}\right)\cdot\lambda_k(\X)  \nonumber \\
&=& \frac{\lambda_k(\X)}{2}\Vert{\P_{\X}-\P_{\Y}}\Vert_F^2.
\end{eqnarray}

Furthermore, it holds that
\begin{eqnarray}\label{eq:dk:8}
\trace\left({\P^{\perp}_{\X}\P_{\Y}}\right) = \trace\left({(\I-\P_{\X})\P_{\Y}}\right) = k - \P_{\X}\bullet{\P_{\Y}} = \frac{1}{2}\Vert{\P_{\X}-\P_{\Y}}\Vert_F^2.
\end{eqnarray}

Plugging Eq. \eqref{eq:dk:7} and \eqref{eq:dk:8} into Eq. \eqref{eq:dk:6}, we have that
\begin{eqnarray}\label{eq:dk:9}
\frac{1}{2}\Vert{\P_{\X}-\P_{\Y}}\Vert_F^2\cdot(\lambda_k(\X) - \lambda_{k+1}(\X)) \leq \Vert{\X-\Y}\Vert\cdot\Vert{\P_{\X}-\P_{\Y}}\Vert_F,
\end{eqnarray}
which completes the proof.
\end{proof}

\end{document}